\crefname{assumption}{Assumption}{Assumptions}
\crefname{equation}{Eq.}{Eqs.}
\crefname{figure}{Figure}{Figures}
\crefname{table}{Table}{Tables}
\crefname{section}{Section}{Sections}
\crefname{appendix}{Appendix}{Appendices}
\crefname{algorithm}{Algorithm}{Algorithms}
\crefname{theorem}{Theorem}{Theorems}
\crefname{definition}{Definition}{Definitions}
\crefname{lemma}{Lemma}{Lemmas}
\crefname{proposition}{Proposition}{Propositions}
\crefname{corollary}{Corollary}{Corollaries}
\crefname{example}{Example}{Examples}
\crefname{appendix}{Appendix}{Appendixes}
\crefname{remark}{Remark}{Remark}
\newcounter{remark}[section]
\newcommand{\bigzero}{{\normalfont\Large 0}}
\newcommand{\calX}{\mathcal{X}}
\newcommand{\calY}{\mathcal{Y}}
\newcommand{\calZ}{\mathcal{Z}}
\newcommand{\calQ}{\mathcal{Q}}
\newcommand{\calE}{\mathcal{E}}
\newcommand{\calD}{\mathcal{D}}
\newcommand{\calP}{\mathcal{P}}
\newcommand{\calH}{\mathcal{H}}
\newcommand{\calO}{\mathcal{O}}
\newcommand{\calK}{\mathcal{K}}
\newcommand{\calG}{\mathcal{G}}
\newcommand{\calM}{\mathcal{M}}
\newcommand{\calC}{\mathcal{C}}
\newcommand{\calS}{\mathcal{S}}
\newcommand{\calR}{\mathcal{R}}
\newcommand{\calN}{\mathcal{N}}
\newcommand{\ERL}{\calE}
\newcommand{\ERS}{\calR}
\newcommand{\hull}{\operatorname{hull}}
\DeclareMathAlphabet{\mathsfsl}{OT1}{cmss}{m}{sl}
\renewcommand{\phi}{\varphi}
\newcommand{\Rspace}[1]{\mathbb{R}^{#1}}
\newcommand{\R}{\mathbb{R}}
\newcommand{\fstar}{f^\star}
\newcommand{\gstar}{g^\star}
\newcommand*{\defeq}{\mathrel{\vcenter{\baselineskip0.5ex \lineskiplimit0pt
                     \hbox{\scriptsize.}\hbox{\scriptsize.}}}%
                     =}
\newcommand{\argmin}{\operatorname*{arg\; min}}
\newcommand{\argmax}{\operatorname*{arg\; max}}
\newcommand{\Expect}{\operatorname{\mathbb{E}}}
\theoremstyle{plain}  
\newtheorem{theorem}{Theorem}[section]
\newtheorem{definition}[theorem]{Definition}
\newtheorem{lemma}[theorem]{Lemma}
\newtheorem{proposition}[theorem]{Proposition}
\newtheorem{corollary}[theorem]{Corollary}
\newtheorem{example}[theorem]{Example}
\icmltitlerunning{Consistent Structured Prediction with Max-Min Margin Markov Networks}
\begin{document}

\twocolumn[
\icmltitle{Consistent Structured Prediction with Max-Min Margin Markov Networks}

\icmlsetsymbol{equal}{*}

\begin{icmlauthorlist}
\icmlauthor{Alex Nowak-Vila}{to}
\icmlauthor{Francis Bach}{to}
\icmlauthor{Alessandro Rudi}{to}
\end{icmlauthorlist}

\icmlaffiliation{to}{INRIA - D\'epartement d’Informatique de l'Ecole Normale Sup\'erieure,
PSL Research University}

\icmlcorrespondingauthor{Alex Nowak-Vila}{alex.nowak-vila@inria.fr}

\icmlkeywords{Machine Learning, ICML}

\vskip 0.3in
]

\printAffiliationsAndNotice{}

\begin{abstract}
Max-margin methods for binary classification such as the support vector machine (SVM) have been extended to the structured prediction setting under the name of max-margin Markov networks ($\operatorname{M^3N}$), or more generally structural SVMs. Unfortunately, these methods are statistically inconsistent when the relationship between inputs and labels is far from deterministic.
We overcome such limitations by defining the learning problem in terms of a ``max-min'' margin formulation, naming the resulting method max-min margin Markov networks ($\operatorname{M^4N}$). We prove consistency and finite sample generalization bounds for $\operatorname{M^4N}$ and provide an explicit algorithm to compute the estimator. The algorithm achieves a generalization error of $O(1/\sqrt{n})$ for a total cost of~$O(n)$ projection-oracle calls (which have at most the same cost as the max-oracle from $\operatorname{M^3N}$). Experiments on multi-class classification, ordinal regression, sequence prediction and ranking demonstrate the effectiveness of the proposed method.
\end{abstract}

\section{Introduction} \label{sec:introduction}
Many classification tasks in machine learning lie beyond the classical binary and multi-class classification settings. In those tasks, the output elements are structured objects made of interdependent parts, such as sequences in natural language processing \cite{smith2011linguistic}, images in computer vision \cite{nowozin2011structured}, permutations in ranking or matching problems \cite{caetano2009learning} to name just a few \cite{bakir2007predicting}.
The structured prediction setting has two key properties that makes it radically different from multi-class classification, namely, the exponential growth of the size of the output space with the number of its parts, and the cost-sensitive nature of the learning task, as prediction mistakes are not equally costly. In sequence prediction, for instance, the number of possible outputs grows exponentially with the length of the sequences, and predicting a sequence with one incorrect character is better than predicting the whole sequence wrong.

Classical approaches in binary classification such as the \textit{non-smooth} support vector machine (SVM), and the \textit{smooth} logistic and quadratic plug-in classifiers have been extended to the structured setting under the name of max-margin Markov networks ($\operatorname{M^3N}$) \cite{taskar2004max} (or more generally structural SVM (SSVM) \cite{tsochantaridis2005large}), conditional random fields (CRFs) \cite{Lafferty:2001:CRF:645530.655813} and quadratic surrogate (QS) \cite{ciliberto2016consistent, ciliberto2018localized}, respectively. Theoretical properties of CRF and QS are well-understood. In particular, it is possible to obtain finite-sample generalization bounds of the resulting estimator on the cost-sensitive structured loss \cite{nowak2019general}. Unfortunately, these guarantees are not satisfied by $\operatorname{M^3N}$s even though the method is based on an upper bound of the loss. More precisely, it is known that the upper bound can be not tight (and lead to inconsistent estimation) when the relationship between input and output labels is far from deterministic \cite{liu2007fisher}, which it is essentially always the case in structured prediction due to the exponentially large output space. This means that the estimator does not converge to the minimizer of the problem leading to inconsistency.

Recently, a line of work \cite{fathony2016adversarial, fathony2018consistent, fathony2018efficient, fathony2018distributionally} proposed a consistent method based on an adversarial game formulation on the structured problem. However, their analysis does not allow to get generalization bounds and their proposed algorithm is specific for every setting with at least a complexity of $O(n^2)$ to obtain optimal statistical error when learning from $n$ samples. In this paper, we derive this method in the generic structured output setting from first principles coming from the binary SVM. We name this method max-min margin Markov networks ($\operatorname{M^4N}$), as it is based on a correction of the max-margin of $\operatorname{M^3N}$ to a `max-min' margin.
The proposed algorithm has essentially the same complexity as state-of-the-art methods for $\operatorname{M^3N}$ on the regularized empirical risk minimization problem, but it comes with consistency guarantees and finite sample generalization bounds on the discrete structured prediction loss, with constants that are polynomial in the number of parts of the structured object and do not scale as the size of the output space. More precisely, the algorithm requires a constant number of projection-oracles at every iteration, each of them having at most the same cost as the max-oracle of $\operatorname{M^3N}$. We also provide experiments on multiple tasks such as multi-class classification, ordinal regression, sequence prediction and ranking, showing the effectiveness of the algorithm. We make the following contributions:
\begin{itemize}[leftmargin=*]
    \item[-] We introduce max-min margin Markov networks ($\operatorname{M^4N}$) in \cref{def:maxminsurrogate} and prove consistency, linear calibration and finite sample generalization bounds for the regularized ERM estimator in Thms.~\ref{th:fisherconsistency}, \ref{th:comparisoninequality} and \ref{th:generalizationerm}, respectively.
    \item[-] We generalize the BCFW algorithm \cite{lacoste2012block} used for $\operatorname{M^3N}$s to $\operatorname{M^4N}$s and solve the max-min oracle iteratively with projection oracle calls using Saddle Point Mirror Prox \cite{nemirovski2004prox}. We prove bounds on the expected duality gap of the regularized ERM problem in \cref{th:gbcfw} and statistical bounds in \cref{th:generalizationalgo}.
    \item[-] In \cref{sec:experiments}, we perform a thorough experimental analysis of the proposed method on classical unstructured and structured prediction settings.
\end{itemize}

\section{Surrogate Methods for Classification}
\label{sec:surrogatemethods}
In this section, we review the first principles underlying surrogate methods starting from binary classification and moving into structured prediction. We put special attention to the difference between plug-in (e.g., logistic) and direct (e.g., SVM) classifiers to show that while there is a complete picture in the binary setting, existing direct classifiers in structured prediction lack the basic properties of binary SVMs. The first goal of this paper is to complete this picture in the structured output setting.
\subsection{A Motivation from Binary Classification}
Let $\calY=\{-1,1\}$ and $(x_1,y_1), \ldots, (x_n,y_n)$ be $n$ input-output pairs sampled from a distribution~$\rho$. The goal in binary classification is to estimate a binary-valued function~$f^\star:\calX\xrightarrow{}\calY$ that minimizes the classification error~
\begin{equation*}
\textstyle{\calE(f) = \Expect_{(x,y)\sim \rho} 1(f(x)\neq y)}.    
\end{equation*}
We can avoid working with binary-valued functions by considering instead real-valued functions $g:\calX\xrightarrow{}\Rspace{}$ and use the prediction model~$f(x) = d\circ g(x) \defeq \operatorname{sign}(g(x))$ \cite{bartlett2006convexity}
where $d$ stands for \textit{decoding}.
The resulting problem reads
\begin{equation}\label{eq:feqdog}
    g^\star\in\argmin_{g:\calX\rightarrow\Rspace{}}~\calE(d\circ g).
\end{equation}
Unfortunately, directly estimating a $g^\star$ from \eqref{eq:feqdog} is intractable for many classes of functions \cite{arora1997hardness}.

\paragraph{Convex surrogate methods. }
The source of intractability of minimizing the classification error \eqref{eq:feqdog} comes from the discreteness and non-convexity of the loss. The idea of surrogate methods \cite{bartlett2006convexity} is to consider a \textit{convex surrogate loss}~$S:\Rspace{}\times\calY\rightarrow\Rspace{}$ such that $g^\star$ can be written as
\begin{equation}\label{eq:surrogatefunction}
    g^\star = \argmin_{g:\calX\xrightarrow{}\Rspace{}}~\calR(g)\defeq \Expect_{(x,y)\sim\rho}S(g(x), y).
\end{equation}
In this case, $g^\star$ can be tractably estimated from $n$ samples over a family of functions $\calG$ using regularized ERM. The resulting estimator $g_n$ has the form
\begin{equation}\label{eq:regularizederm}
    g_n = \argmin_{g\in\calG}~\frac{1}{n}\sum_{i=1}^nS(g(x_i), y_i) + \frac{\lambda_n}{2}\|g\|_{\calG}^2,
\end{equation}
where $\lambda_n>0$ is the regularization parameter and $\|\cdot\|_{\calG}$ is the norm associated to the hypothesis space $\calG$. If not stated explicitly, our analysis of the surrogate method holds for any function space, such as reproducing kernel Hilbert spaces (RKHS) \cite{aronszajn1950theory} or neural networks \cite{lecun2015deep}, where we lose global theoretical convergence guarantees of problem \eqref{eq:regularizederm}.

The classical theoretical requirements of such a surrogate strategy are \emph{Fisher consistency} (i) and a \emph{comparison inequality} (ii):
\begin{equation*}
    \begin{array}{ll}
    \text{(i)} & \calE(f^\star) = \calE(d\circ g^\star)  \\
    \text{(ii)} & \zeta(\calE(d\circ g) - \calE(f^\star)) \leq \calR(g) - \calR(g^\star),
\end{array}
\end{equation*}
for all measurable functions $g$, where $\zeta:\Rspace{}_{+}\xrightarrow{}\Rspace{}_{+}$ is such that $\zeta(\varepsilon)\to 0$ when $\varepsilon\to 0$. Note that Condition~(i) is equivalent to \eqref{eq:feqdog}. Condition  (ii) is needed to prove consistency results, to show that $\calR(g) \rightarrow \calR(g^\star)$ implies~$\calE(d\circ g) \rightarrow \calE(f^\star)$. The existence of $\zeta$ satisfying (ii) is derived from (i) and the continuity and lower boundedness of $S(v,y)$, see Thm.~3 by \cite{zhang2004statistical}. Even though the explicit form of $\zeta$ is not needed for a consistency analysis, it is necessary to prove finite sample generalization bounds, as it is the mathematical object relating the suboptimality of the surrogate problem to the suboptimality of the original task. Note that the larger $\zeta(\varepsilon)$, the better.

\paragraph{Plug-in classifiers.}
It is known that (i) is satisfied for any function $g^\star$ that continuously depends on the conditional probability $\rho(1|x)$ as $g^\star(x) \defeq t(\rho(1|x))$,
where $t:\Rspace{}\rightarrow\Rspace{}$ is a suitable continuous bijection of the real line\footnote{It must satisfy $(u-1/2)t(u-1/2)\geq 0$ for all $u\in\Rspace{}$.}. In this case, \cref{eq:surrogatefunction} can be satisfied using \textit{smooth} losses.
Some examples are the logistic loss $\log(1 + e^{-yv})$, the squared hinge loss $\max(0, 1-yv)^2$ and the exponential loss $e^{-yv}$. In this case, the convexity and smoothness of $S(\cdot,y)$ imply that~(ii) is satisfied with $\zeta(\varepsilon) \sim\varepsilon^2$~\cite{bartlett2006convexity}. Combining this with standard convergence results of regularized ERM estimators $g_n$ on RKHS, the resulting statistical rates are of the form~$\Expect \calE(d\circ g_n) - \calE(f^\star) \sim \|g^\star\|_{\calG}n^{-1/4}$. Even if the binary learning problem is easy, $g^\star$ can be highly non-smooth away from the decision boundary, resulting in large $\|g^\star\|_{\calG}$. It is known that the dependence on the number of samples can be improved under low noise conditions \cite{audibert2007fast}.
\paragraph{Support vector machines (SVM).}
Plug-in classifiers indirectly estimate the conditional probability as $\rho(1|x)= t^{-1}(g^\star(x))$, which is more than just falling in the right binary decision set. SVMs directly tackle the classification task by estimating $g^\star\defeq f^\star=\operatorname{sign}(\rho(1|x)-1/2)$. In this case, the \textit{non-smooth} hinge loss $S(v, y) = \max(0, 1-yv)$ satisfies \eqref{eq:surrogatefunction}. Moreover, 
(ii) is satisfied with $\zeta(\varepsilon) = \varepsilon$ and statistical rates are of the form $\Expect\calE(d\circ\widehat{g}_n) - \calE(f^\star) \sim \|f^\star\|_{\calG}n^{-1/2}$.
Note that $f^\star$ is piece-wise constant on the support of $\rho$, but it can be shown $f^\star\in\calG$, (i.e., $\|f^\star\|_{\calG} < \infty$), for standard hypothesis spaces ${\calG}$ such as Sobolev spaces with input space $\Rspace{d}$ and smoothness $s > d/2$ under low noise conditions~\cite{pmlr-v75-pillaud-vivien18a}.

\subsection{Structured Prediction Setting}
In binary classification, the output data are naturally embedded in $\Rspace{}$ as $\calY=\{-1,1\}\subset\Rspace{}$. However, as this is not necessarily the case in structured prediction, it is classical \cite{taskar2005learning} to represent the output with an embedding $\phi:\calY\rightarrow\Rspace{k}$ encoding the parts structure with~$k \ll |\calY|$. Let  $g:\calX\rightarrow\Rspace{k}$ and define the following linear prediction model
 \begin{equation}\label{eq:decoding}
    \textstyle{f(x) = d\circ g(x) \defeq \argmax_{y\in\calY}~\phi(y)^\top g(x)}.
 \end{equation}
 The above decoding \eqref{eq:decoding} corresponds to the classical linear prediction model over factorized joint features $\Phi(x, y) = \phi(y)\otimes\Phi(x)$ when $g(x)$ is linear in some input features $\Phi(x)$ \cite{bakir2007predicting}. The form in \eqref{eq:decoding} is required to perform the consistency analysis but the algorithm developed in \cref{sec:rerm} can be readily extended to joint features that do not factorize.
 Non-linear prediction models have been recently proposed by \cite{belanger2016structured}, but this is out of the scope of this paper.
 
Let $L:\calY\times\calY\rightarrow\Rspace{}$ be a loss function between structured outputs encoding the cost-sensitivity of predictions. For instance, it is common to take $L$ to be the Hamming loss over the parts of the structured object. The goal in structured prediction is to estimate~$f^\star:\calX\xrightarrow{}\calY$ that minimizes the \textit{expected risk}:
 \begin{equation}\label{eq:lossminimization}
     \calE(f) = \Expect_{(x,y)\sim\rho}L(f(x), y).
 \end{equation}

\paragraph{Loss-decoding compatibility.}
It is classical to assume that the loss decomposes over the structured output parts \cite{joachims2006training}. This can be generalized as the following affine decomposition of the loss \cite{ramaswamy2013convex, nowak2019sharp}
\begin{equation}\label{eq:lossdecomposition}
    L(y, y') = \phi(y)^\top A\phi(y') + a,
\end{equation}
for a matrix $A\in \Rspace{k\times k}$ and scalar $a\in\Rspace{}$.
Indeed, assumption \eqref{eq:lossdecomposition} together with the tractability of~\eqref{eq:decoding} is essentially equivalent to the tractability of \textit{loss-augmented inference} in structural SVMs \cite{joachims2006training}. For the sake of notation, we drop the constant $a$ and work with the `centered' loss $L(y, y')-a$. We provide some examples below.

\begin{example}[Structured prediction with factor graphs]\label{ex:factorgraph}
Let $\calY=[R]^M$ be the set of objects made of $M$ parts, each in a vocabulary of size $R$.
In order to model interdependence between different parts, we consider embeddings that decompose over (overlapping) subsets of indices $\alpha\subseteq\{1,\ldots, M\}$ \cite{taskar2004max} as $\phi(y)=(\phi_{\alpha}(y_{\alpha}))_{\alpha}$. More precisely, the prediction model corresponds to
\begin{equation}\label{eq:factorgraph}
     \textstyle{\argmax_{y\in\calY}~\sum_{\alpha}\phi_\alpha(y_{\alpha})^\top v_{\alpha}},
\end{equation}
where $\phi_\alpha(y_\alpha) = e_{y_{\alpha}}\in\Rspace{R^{|\alpha|}}$ with $e_{j}$ being the $j$-th vector of the canonical basis and the dimension of the full-embedding $\phi$ is $k = \sum_{\alpha}R^{|\alpha|} \ll |\calY| = R^M$. 
It is common \cite{tsochantaridis2005large} to assume that the loss decomposes additively over the coordinates as $L(y, y') = \frac{1}{M}\sum_{m=1}L_m(y_m, y_m')$ and so the matrix $A$ associated to the loss decomposition of $L$ is low-rank. Problem \eqref{eq:factorgraph} can be solved efficiently for low tree-width structures using the junction-tree algorithm \cite{wainwright2008graphical}. More specifically, if the objects are sequences with embeddings modelling individual and adjacent pairwise characters, Problem \eqref{eq:factorgraph} can be solved in time $O(MR^2)$ using the Viterbi algorithm \cite{viterbi1967error}.
\end{example}

\begin{example}[Ranking and matching] \label{ex:matching} The output space is the group of permutations $\calS_M$ acting on~$\{1, \ldots, M\}$. This setting also includes the task of matching the nodes of two graphs of the same size \cite{caetano2009learning}. We represent a permutation $\sigma\in \calS_M$ using the corresponding permutation matrix $\phi(\sigma) = P_{\sigma}\in\Rspace{M\times M}$. The prediction model corresponds to the linear assignment problem \cite{burkard2012assignment}
\begin{equation}\label{eq:linearassignment}
    \textstyle{\argmax_{\sigma\in\calS_M}~\langle P_{\sigma}, v\rangle_{F}},
\end{equation}
where $v\in \Rspace{M\times M}$, $\langle\cdot, \cdot\rangle_{F}$ is the Frobenius scalar product and $k=M^2 \ll |\calY|=M!$. The Hamming loss on permutations satisfies \cref{eq:lossdecomposition} as $L(\sigma, \sigma') = \frac{1}{M}\sum_{m=1}^M1(\sigma(m)\neq \sigma'(m)) = 1 - \frac{1}{M}\langle P_{\sigma}, P_{\sigma'}\rangle_{F}$.
The linear assignment problem \eqref{eq:linearassignment} can be solved in time  $O(M^3)$ using the Hungarian algorithm \cite{kuhn1955hungarian}.
\end{example}

\paragraph{Plug-in classifiers in structured prediction.}
Let $\mu(x) = \Expect_{y\sim\rho(\cdot|x)}\phi(y)$ be the conditional expectation of the output embedding.
Using the fact that $f^\star$ can be characterized pointwise in $x$ as the minimizer in $y$ of $\phi(y)^\top A\mu(x)$ \cite{nowak2019sharp}, 
it directly follows that (i) is satisfied for $g^\star(x) = -A\mu(x)$ and, analogously to binary classification, it can be estimated using \textit{smooth} surrogates. Some examples are the quadratic surrogate (QS)~$\|v+A\phi(y)\|_2^2$ \cite{ciliberto2016consistent} that estimates $g^\star$ and conditional random fields (CRF) \cite{Lafferty:2001:CRF:645530.655813} defined by $\log\big(\sum_{y'\in\calY}\exp v^\top \phi(y')\big) - v^\top \phi(y)$ that estimate an invertible continuous transformation of $\mu(x)$. Although CRFs have a powerful probabilistic interpretation, they cannot incorporate the cost-sensitivity matrix $A$ into the surrogate loss, and it must be added a posteriori in the decoding \eqref{eq:decoding} to guarantee consistency.
It was shown by \cite{nowak2019general} that these methods satisfy condition (ii) with $\zeta(\varepsilon) \sim \varepsilon^2$ and achieve the analogous statistical rates of binary plug-in classifiers $\sim \|g^\star\|_{\calG} n^{-1/4}$.

\paragraph{SVMs for structured prediction.}
The extension of binary SVM to structured outputs is the structural SVM (SSVM) \cite{joachims2006training} (denoted $\operatorname{M^3N}$s \cite{taskar2004max} in the factor graph setting described in \cref{ex:factorgraph}). It corresponds to the following surrogate loss
\begin{equation}\label{eq:structuralsvm1}
    S(v, y) = \max_{y'\in\calY}~\phi(y)^\top A\phi(y') + v^\top \phi(y') - v^\top \phi(y).
\end{equation}
In the multi-class case with $\calY=\{1,\ldots,k\}$ and $L(y,y')=1(y\neq y')$ it is also known as the Crammer-Singer SVM (CS-SVM) \cite{crammer2001algorithmic} and reads $S(v,j) = \max_{r\neq j}~1 + v_r - v_j$. It shares some properties of the binary SVM such as the upper bound property, i.e., $L(d\circ v, y) \leq S(v, y)$ for all~$y\in\calY$. However, an important drawback of this loss is that while the upper bound property holds, the minimizer of the surrogate expected risk $g^\star$ and the one of the expected risk $f^\star$ do not coincide when the problem is far from deterministic, as shown by the following \cref{th:inconsistencysvm}.
\begin{proposition}[Inconsistency of CS-SVM \cite{liu2007fisher}]\label{th:inconsistencysvm} The CS-SVM is Fisher-consistent if and only if for all $x\in\calX$, there exists $y\in\{1,\ldots, k\}$ such that $\rho(y|x) > 1/2$.
\end{proposition}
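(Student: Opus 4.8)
The plan is to reduce the statement to a pointwise condition on the conditional distribution and then dispatch the two implications by elementary estimates on the piecewise‑linear CS‑SVM loss. Write $p_x = \rho(\cdot\mid x)$ for the conditional law, and for a probability vector $p$ over $\{1,\dots,k\}$ and $v\in\Rspace{k}$ set $C(p,v) = \sum_{j}p_j\, S(v,j)$; recall $S(v,j) = \max_{r\neq j}(1+v_r-v_j)$, which --- since $y'=j$ is admissible in the structural‑SVM maximum \eqref{eq:structuralsvm1} and then contributes $0$ --- equals $\max\!\big(0,\ 1+\max_{r\neq j}v_r-v_j\big)$. Then $\calR(g) = \Expect_x\, C(p_x,g(x))$, so a surrogate minimiser $g^\star$ is, for $\rho$‑almost every $x$, a minimiser of $v\mapsto C(p_x,v)$, while $f^\star(x)\in\argmax_j(p_x)_j$. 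Because $C(p,v+c\mathbf{1}) = C(p,v)$, it is enough to consider minimisers on the hyperplane $\mathbf{1}^\top v = 0$. Hence the CS‑SVM is Fisher‑consistent iff, for $\rho$‑almost every $x$, every minimiser $v$ of $C(p_x,\cdot)$ has $\argmax_r v_r\subseteq\argmax_j(p_x)_j$, and it suffices to analyse $C(p,\cdot)$ for a single fixed $p$.

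For the ``if'' direction I would assume $p_\ell>1/2$ for some $\ell$ (so $\ell$ is the unique mode and $f^\star(x)=\ell$) and show every minimiser $v^\star$ of $C(p,\cdot)$ has $\argmax_r v^\star_r=\{\ell\}$. There are two ingredients. First, $C(p,\, e_\ell-\tfrac{1}{k}\mathbf{1}) = 2(1-p_\ell)$, so $\inf_v C(p,v)\le 2(1-p_\ell) < 1$; this infimum is in fact attained (a matching lower bound over vectors that single out $\ell$ is immediate). Second, for any $v$ for which $\ell$ is \emph{not} the strict maximiser, i.e.\ $v_r\ge v_\ell$ for some $r\neq\ell$, pick $\ell_1\in\argmax_s v_s$ (necessarily $\ell_1\neq\ell$) and put $a = v_{\ell_1}-v_\ell\ge0$; then $S(v,\ell)\ge 1+a$, $S(v,\ell_1)\ge\max(0,1-a)$, and $S(v,j)\ge 1$ for every $j\neq\ell_1$, so summing against $p$ and using $p_{\ell_1}\le p_\ell$ gives $C(p,v)\ge 1 > \inf_v C(p,v)$. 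Thus no such $v$ is optimal; every minimiser strictly prefers $\ell$, so $d\circ g^\star=f^\star$, which is condition~(i).

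For the ``only if'' direction I would argue the contrapositive: suppose $(p_{x_0})_j\le1/2$ for all $j$ at some $x_0$, and write $p=p_{x_0}$. For any $v$, let $\ell_1\in\argmax_s v_s$, let $v_{(2)}$ be the second‑largest coordinate and $\delta=v_{\ell_1}-v_{(2)}\ge0$; then $S(v,\ell_1)=\max(0,1-\delta)$ and $S(v,j)=1+v_{\ell_1}-v_j\ge 1+\delta$ for $j\neq\ell_1$, so $C(p,v)\ge p_{\ell_1}\max(0,1-\delta)+(1-p_{\ell_1})(1+\delta)\ge 1$, the last step by treating $\delta\le1$ and $\delta>1$ separately and using $p_{\ell_1}=\max_j p_j\le1/2$. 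Since $C(p,0)=1$, the constant vector is a minimiser, so the estimator $g^\star\equiv 0$ on a set containing $x_0$ is a legitimate element of $\argmin\calR$; but $\argmax_r 0 = \{1,\dots,k\}\neq\argmax_j p_j$ whenever $p$ is non‑uniform, hence $\calE(d\circ g^\star) > \calE(f^\star)$ and the CS‑SVM is not Fisher‑consistent.

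I expect the main obstacle to be not these inequalities but the handling of the degeneracies. The shift‑invariance makes the set of surrogate minimisers non‑unique and, in the near‑deterministic regime $p_\ell\to1$, non‑compact, so one must argue about the whole minimiser set and justify a measurable selection in the pointwise reduction; and one must carefully delimit the boundary cases --- $\rho(\cdot\mid x)$ uniform (which is in fact consistent although it has no majority label, so, strictly, an exception to the stated equivalence), $\rho(\cdot\mid x)$ deterministic, and ties at the mode --- which are handled by the standard conventions, following \cite{liu2007fisher}. The one genuinely load‑bearing fact is that a majority label $\ell$ pulls the optimal surrogate value down to exactly $2(1-p_\ell)$, strictly below the value $1$ attained by every vector that fails to single out $\ell$; this gap is what makes the surrogate calibrated, and its disappearance as soon as no label exceeds $1/2$ is precisely what destroys consistency.
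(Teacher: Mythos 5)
The paper does not actually prove \cref{th:inconsistencysvm}: it is stated as a known result and attributed to \cite{liu2007fisher}, so there is no internal proof to compare against. Your argument is a correct, self-contained, elementary proof of the cited fact. The pointwise reduction to the conditional risk $C(p,\cdot)$, the computation $C\big(p,\,e_\ell-\tfrac{1}{k}\mathbf{1}\big)=2(1-p_\ell)$, and the two lower bounds (namely $C(p,v)\ge 1$ whenever the top coordinate of $v$ is not uniquely $\ell$, and $C(p,v)\ge 1$ for every $v$ when no label has conditional mass above $1/2$) all check out. Three small points. First, your reading of the loss as $\max\big(0,\ 1+\max_{r\neq j}v_r-v_j\big)$ is the right one and is load-bearing: the paper's displayed formula omits the $y'=j$ term of \eqref{eq:structuralsvm1}, and without the hinge at zero the conditional risk is unbounded below whenever $p_\ell>1/2$ (take $v=t\,e_\ell$, $t\to\infty$, giving $1+t(1-2p_\ell)\to-\infty$), so the ``if'' direction would be vacuous; you should state explicitly that the proposition is being proved for the hinged loss. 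Second, in the ``only if'' direction the assertion $p_{\ell_1}=\max_j p_j$ is false as written ($\ell_1$ maximizes $v$, not $p$), but only $p_{\ell_1}\le 1/2$ is used and that follows directly from the hypothesis, so the step is harmless; similarly, ``necessarily $\ell_1\neq\ell$'' should read ``$\ell_1$ can be chosen in $\argmax_s v_s\setminus\{\ell\}$'' to cover ties. Third, you are right that a uniform conditional distribution is a genuine exception to the ``only if'' direction as literally stated (no majority label, yet every prediction is Bayes optimal); that imprecision is inherited from the statement itself, and flagging it rather than hiding it is the correct call.
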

Note that the consistency condition from \cref{th:inconsistencysvm} is much harder to be met in the structured prediction case as the size of the output space is exponentially large, and it is always satisfied in the binary case (the binary SVM is always consistent). Although there exist consistent extensions of the SVM to the cost-sensitive multi-class setting such as the ones from \cite{lee2004multicategory, mroueh2012multiclass}, they cannot be naturally extended to the structured setting. In the following section we address this problem by introducing the max-min surrogate and studying its theoretical properties.
\section{Max-Min Surrogate Loss} \label{sec:maxmin}

Assume that the loss is not degenerated, i.e., $L(y,y)<L(y,y')$ for all $y,y'\in\calY$ such that $y \neq y'$. In this case,~$f^\star(x)$ is the minimizer in $y$ of $\phi(y)^\top A^\top \phi(f^\star(x))$, which means that \eqref{eq:feqdog} is satisfied by
\begin{equation*}
    g^\star(x) \defeq -A^\top \phi(f^\star(x)) \in \Rspace{k}.
\end{equation*}
Note the analogy with SVMs, where we directly estimate~$f^\star$ but now through the representation~$\phi$ of the structured output, avoiding the full enumeration of $\calY$. We need to find a surrogate function $S(v,y)$ that satisfies \cref{eq:surrogatefunction} for this~$g^\star$.
Following the same notation as \cite{nowak2019general}, we define the \textit{marginal polytope} \cite{wainwright2008graphical} as the convex hull of the embedded output space~$\calM=\operatorname{hull}(\phi(\calY))\subset \Rspace{k}$. 
\begin{definition}[Max-min surrogate loss]\label{def:maxminsurrogate}
Define the max-min loss as
\begin{equation}\label{eq:maxminsurrogate}
   S(v, y) \defeq \max_{\mu\in\calM}\min_{y'\in\calY}~\phi(y')^\top A\mu + v^\top \mu - v^\top \phi(y).
\end{equation}
\end{definition}
The max-min loss is \emph{non-smooth}, \emph{convex} and can be cast as a Fenchel-Young loss \cite{blondel2019learning}. More specifically, \cref{eq:maxminsurrogate} can be written as $S(v,y) = \Omega^*(v) + \Omega(\phi(y)) - v^\top \phi(y)$ with 
\begin{equation}\label{eq:Omega}
\Omega(\mu) = -\min_{y'\in\calY}~\phi(y')^\top A\mu + 1_{\calM}(\mu),    
\end{equation}
where $\Omega(\phi(y))=0$ for all $y\in\calY$, $\Omega^*$ denotes the Fenchel-conjugate of $\Omega$, and $1_{\calM}(\mu)=0$ if $\mu\in\calM$ and $+\infty$ otherwise.

 Note that the dependence on $y$ is only in the linear term~$v^\top \phi(y)$, while for SSVMs \eqref{eq:structuralsvm1} it appears in the maximization. Thus, we can study the geometry of the loss through the non-smooth convex function $\Omega^*(v)$ (see \cref{fig:geometries} for visualizations of some representative unstructured examples). Connections between surrogates \eqref{eq:maxminsurrogate} and \eqref{eq:structuralsvm1} are discussed in \cref{sec:comparisonSSVM}.
 
 \subsection{Fisher Consistency} \label{sec:fisherconsistency}
 Fisher consistency is provided by the following \cref{th:fisherconsistency}.
\begin{theorem}[Fisher Consistency (i)]\label{th:fisherconsistency}
The surrogate loss~(\ref{eq:maxminsurrogate}) satisfies (i) for $g^\star(x) = -A^\top \phi(f^\star(x))$.
\end{theorem}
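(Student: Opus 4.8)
The plan is to exploit the Fenchel--Young structure of the max-min loss highlighted right after \cref{def:maxminsurrogate} and reduce Fisher consistency to a single pointwise subgradient computation on the marginal polytope $\calM$. Since the surrogate risk splits as $\calR(g)=\Expect_x\big[\Expect_{y\sim\rho(\cdot|x)}S(g(x),y)\big]$, it suffices to show that for $\rho$-almost every $x$ the vector $g^\star(x)=-A^\top\phi(f^\star(x))$ minimizes the \emph{conditional surrogate risk} $C_x(v)\defeq\Expect_{y\sim\rho(\cdot|x)}S(v,y)$, and then to invoke the decoding identity $d\circ g^\star=f^\star$ already observed at the start of \cref{sec:maxmin}. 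Using $S(v,y)=\Omega^*(v)+\Omega(\phi(y))-v^\top\phi(y)$ together with $\Omega(\phi(y))=0$, linearity of the expectation collapses $C_x$ to $C_x(v)=\Omega^*(v)-v^\top\mu(x)$, where $\mu(x)=\Expect_{y\sim\rho(\cdot|x)}\phi(y)\in\calM$.

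The second step is a Fenchel-duality argument. The function $\Omega$ from \eqref{eq:Omega} is proper, closed and convex: it is the sum of the everywhere-finite piecewise-linear function $\Omega_0(\mu)\defeq\max_{y'\in\calY}\big(-\phi(y')^\top A\mu\big)$ and the indicator $1_\calM$ of the compact convex polytope $\calM$; hence $\Omega^{**}=\Omega$. Consequently $\inf_v C_x(v)=-\sup_v\big(v^\top\mu(x)-\Omega^*(v)\big)=-\Omega^{**}(\mu(x))=-\Omega(\mu(x))$, and the infimum is attained exactly at the $v\in\partial\Omega(\mu(x))$. So everything reduces to proving the inclusion $-A^\top\phi(f^\star(x))\in\partial\Omega(\mu(x))$. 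I would derive it from the subdifferential sum rule $\partial\Omega(\mu(x))=\partial\Omega_0(\mu(x))+N_\calM(\mu(x))\supseteq\partial\Omega_0(\mu(x))$ (legitimate since $\Omega_0$ has full domain $\Rspace{k}$ and is continuous, and $0$ lies in the normal cone $N_\calM(\mu(x))$), combined with the standard formula for the subdifferential of a finite maximum of linear forms, $\partial\Omega_0(\mu(x))=\conv\{-A^\top\phi(y'):y'\in\argmin_{y'\in\calY}\phi(y')^\top A\mu(x)\}$. Since $f^\star(x)$ is, by the pointwise characterization of $f^\star$ recalled at the start of \cref{sec:maxmin} (following \cite{nowak2019sharp}), a minimizer of $y'\mapsto\phi(y')^\top A\mu(x)$, the vector $-A^\top\phi(f^\star(x))$ lies in this convex hull, which gives the inclusion; hence $g^\star$ minimizes $\calR$.

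To conclude, I would combine this with $d\circ g^\star(x)=\argmax_{y\in\calY}\phi(y)^\top g^\star(x)=\argmin_{y\in\calY}\phi(y)^\top A^\top\phi(f^\star(x))=\argmin_{y\in\calY}L(f^\star(x),y)$, which by the non-degeneracy assumption $L(y,y)<L(y,y')$ for $y\neq y'$ equals $f^\star(x)$; therefore $\calE(d\circ g^\star)=\Expect L(f^\star(x),y)=\calE(f^\star)$, which is precisely (i). The step I expect to be most delicate is the subdifferential computation of the second paragraph when $\mu(x)$ falls on the boundary of $\calM$, where the normal cone is nontrivial: one must check that the sum rule applies (it does, because $\Omega_0$ has full domain and is continuous) and that the biconjugation identity $\Omega^{**}=\Omega$ genuinely holds (it does, $\Omega$ being closed convex with nonempty compact domain). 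The remaining ingredients — the formula for $\partial\Omega_0$ and the cited pointwise description of $f^\star$ — are routine.
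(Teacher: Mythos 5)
Your proof is correct and follows essentially the same route as the paper's: reduce to the conditional surrogate risk $\Omega^*(v)-v^\top\mu(x)$, identify its minimizers with $\partial\Omega(\mu(x))$ by Fenchel duality, and verify that $-A^\top\phi(f^\star(x))$ lies in that subdifferential via the max-of-linear-forms formula, exactly as in Proposition C.2 of the appendix. Your treatment of the boundary is in fact slightly cleaner than the paper's sketch, which assumes the set of $x$ with $\mu(x)$ on the boundary of $\calM$ has measure zero: for the direction you need (showing the specific candidate is a minimizer), the one-sided inclusion $\partial\Omega_0(\mu(x))\subseteq\partial\Omega(\mu(x))$, valid because $0\in\calN_{\calM}(\mu(x))$, makes that assumption unnecessary.
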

This result has been proven by \citet{fathony2018consistent} in the cost-sensitive multi-class case. Our proof of \cref{th:fisherconsistency} is constructive and based on Fenchel duality.
\begin{proof}[Sketch of the proof] We want to show that $-A^\top \phi(f^\star(x))$ is the minimizer of $\Expect_{y\sim \rho(\cdot|x)}~S(v, y)$ almost surely for every~$x$. The proof is constructive and based on Fenchel duality, using the Fenchel-Young loss representation of the max-min surrogate. First, note that the conditional surrogate risk can be written as $\Expect_{y\sim \rho(\cdot|x)}~S(v, y) = \Omega^*(v) - v^\top\mu(x)$, where $\mu(x)=\Expect_{y\sim\rho(\cdot|x)}\phi(y)\in\calM$. Second, note that by Fenchel-duality,
$\partial_{\mu}\Omega(\mu(x))$ is the set of minimizers of~$\Omega^*(v) - v^\top\mu(x)$. Finally, if we assume that the set of~$x\in\calX$ such that $\mu(x)$ is in the boundary of $\calM$ has measure zero, then~
\begin{equation*}
    -A^\top f^\star(x) \in \partial_{\mu}\Omega(\mu(x)),
\end{equation*}
where $\Omega$ is defined in~\eqref{eq:Omega} and we have used that $f^\star(x)$ is the minimizer in $y$ of $\phi(y)^\top A\mu(x)$. A more detailed proof can be found in \cref{app:fisherconsistency}.
\end{proof}

\begin{figure*}[ht!]
    \centering
    \hspace{-0.01\textwidth}
    \resizebox{1.018\textwidth}{!}{
    \includegraphics[width=0.0475\textwidth]{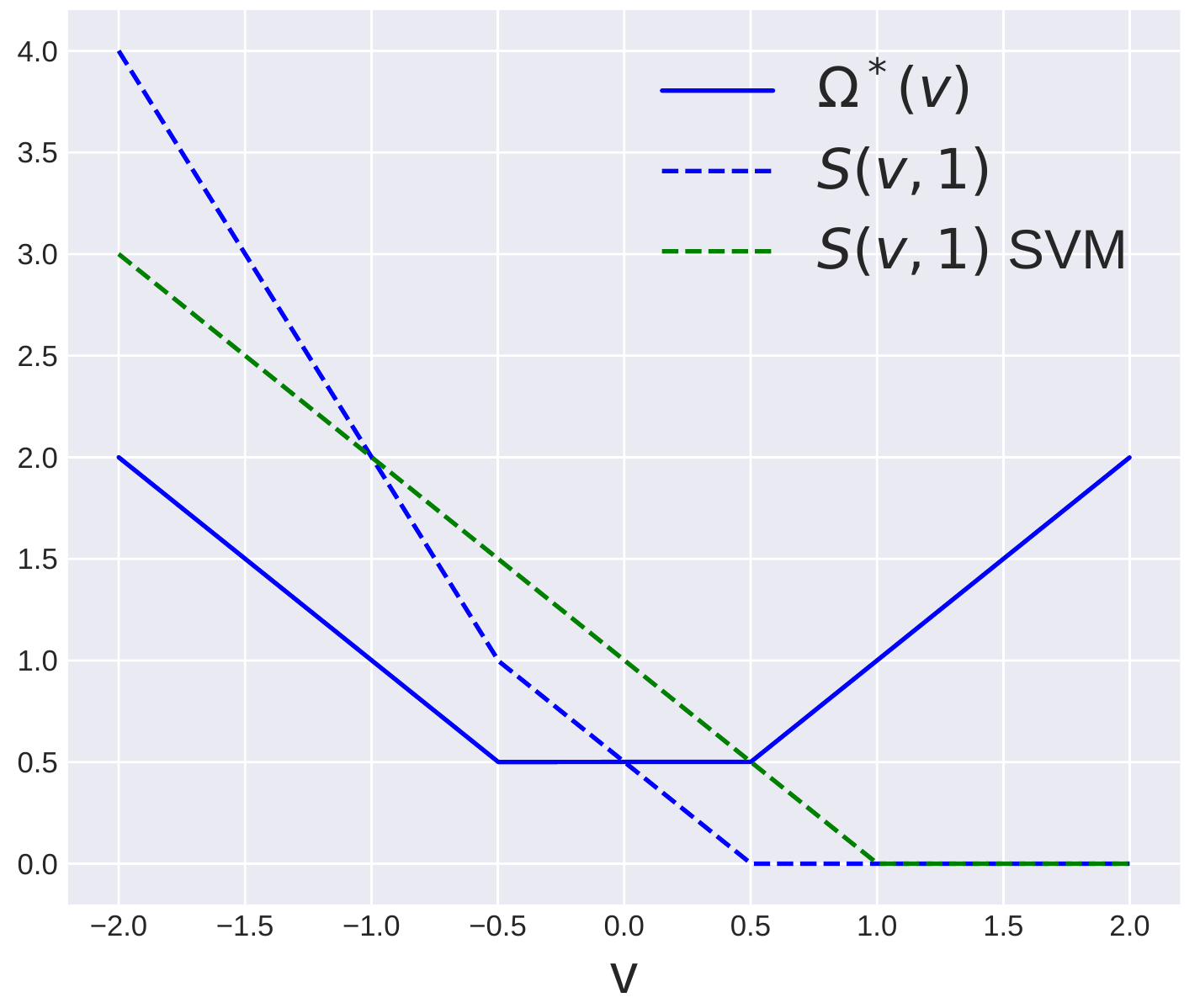}
    \includegraphics[width=0.04\textwidth]{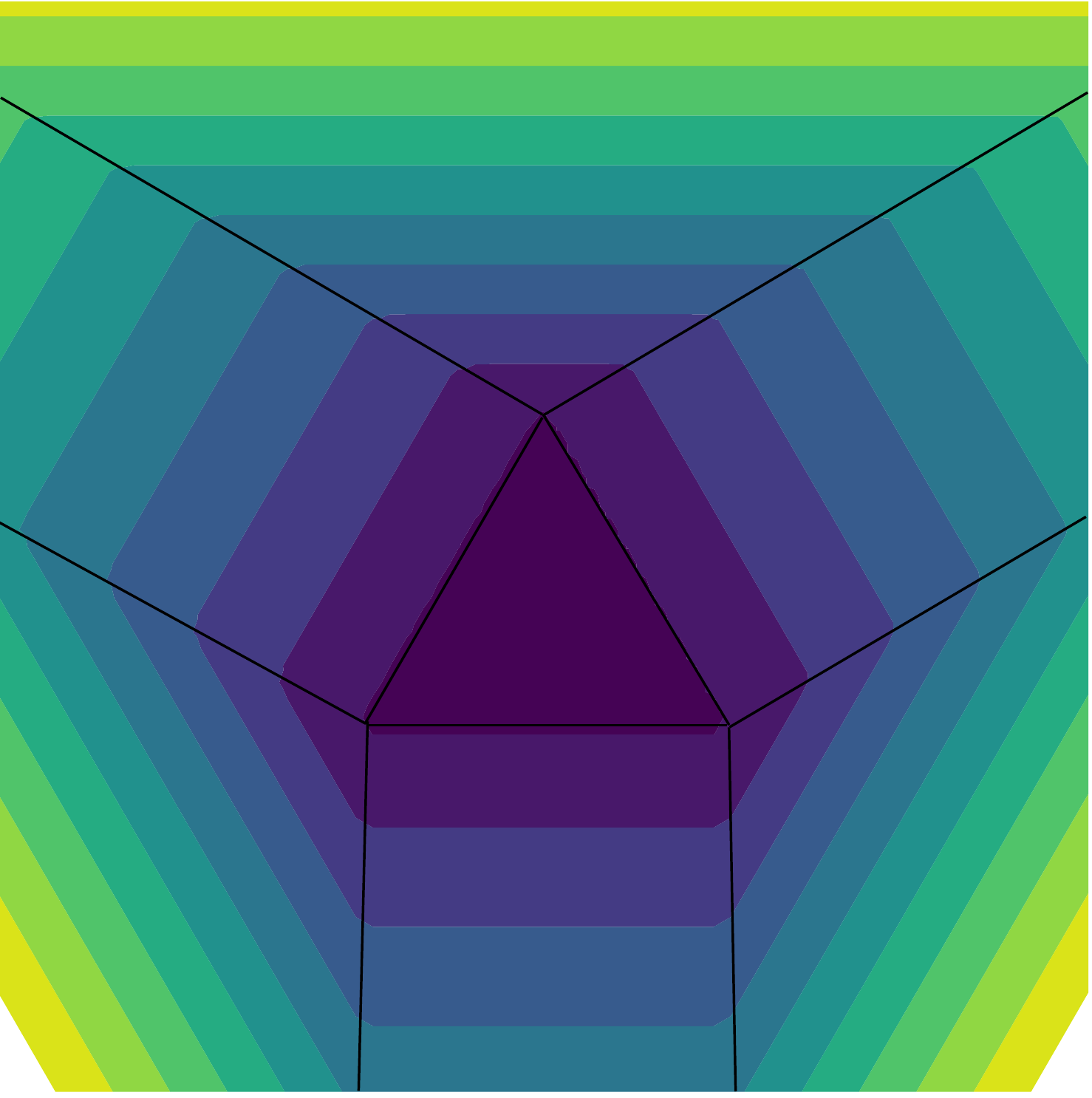}
    \includegraphics[width=0.04\textwidth]{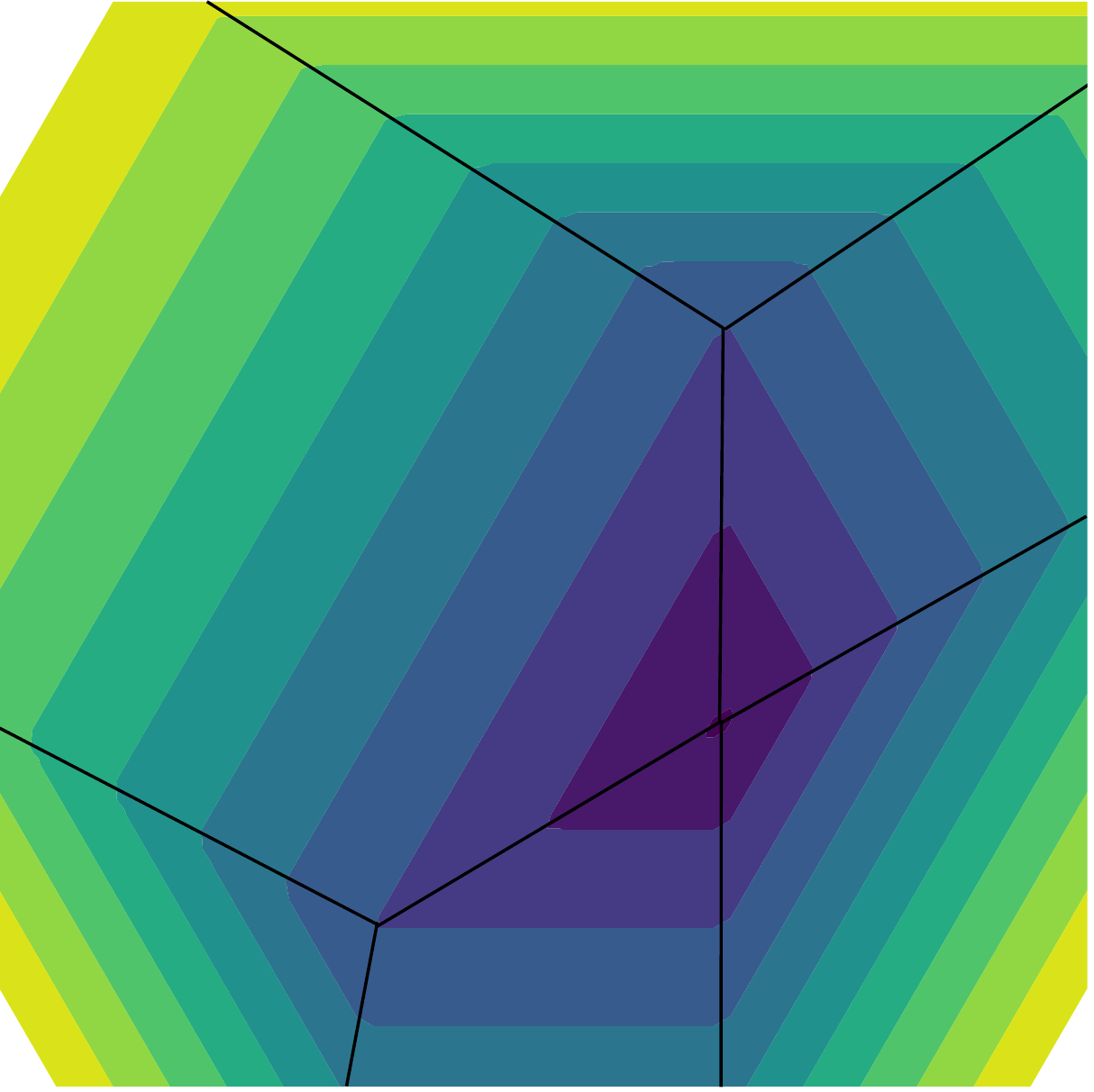}
    }
    \caption{\textbf{Left:} The binary max-min loss has two symmetric kinks instead of one as the SVM. \textbf{Middle:} $\Omega^*(v)$ in $v^\top 1=0$ for multi-class 0-1 loss $1(y\neq y')$ with $k=3$. \textbf{Right:} $\Omega^*(v)$ in $v^\top 1=0$ for ordinal regression with the absolute loss $|y-y'|$ with $k=3$.}
    \label{fig:geometries}
    \vspace{-0.5cm}
\end{figure*}
 \subsection{Comparison Inequality} \label{sec:comparisoninequality}
 Fisher consistency is not enough to prove finite-sample generalization bounds on the excess risk $\calE(d\circ g) - \calE(f^\star)$. For this, we provide in the following \cref{th:comparisoninequality} an explicit form of the comparison inequality.
\begin{theorem}[Comparison inequality (ii)]\label{th:comparisoninequality} Assume $L$ is symmetric and that there exists $C > 0$ such that for any probability $\alpha \in \Delta_{\calY}$, it holds that $\alpha_y \geq 1/C$ for~$y \in \argmin_{y \in \calY} \mathbb{E}_{z \sim \alpha} L(y,z)$. Then, the comparison inequality (ii) for the max-min loss \eqref{eq:maxminsurrogate} reads
\begin{equation*}
\calE(d\circ g) - \calE(f^\star) \leq C(\calR(g) - \calR(g^\star)).
\end{equation*}
\end{theorem}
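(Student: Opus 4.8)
Argue pointwise in $x$. Write $\mu(x)\defeq\Expect_{y\sim\rho(\cdot\mid x)}\phi(y)\in\calM$ and, for $\nu\in\calM$, $\ell^\star(\nu)\defeq\min_{y'\in\calY}\phi(y')^\top A\nu$ --- taking $A=A^\top$ by symmetry, this is the Bayes risk of any distribution with barycentre $\nu$; it is concave and piecewise linear on $\calM$ and vanishes at every vertex $\phi(y)$. The Fenchel--Young form of the max-min loss together with $\Omega(\phi(y))=0$ gives $\Expect_{y\sim\rho(\cdot\mid x)}S(v,y)=\Omega^*(v)-v^\top\mu(x)$, and since $\Omega=-\ell^\star+1_\calM$ is proper, closed and convex, $\min_v\big(\Omega^*(v)-v^\top\mu(x)\big)=-\Omega(\mu(x))=\ell^\star(\mu(x))$, attained at $v=g^\star(x)$ (the content of the proof of \cref{th:fisherconsistency}). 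Hence $\calR(g^\star)=\Expect_x\ell^\star(\mu(x))=\calE(f^\star)$, and the theorem reduces to showing, for all $v\in\Rspace{k}$ and almost every $x$,
\begin{equation*}
\phi(d(v))^\top A\mu(x)-\ell^\star(\mu(x))\;\le\;C\,\delta s(v,x),\qquad
\delta s(v,x)\defeq\Omega^*(v)-v^\top\mu(x)-\ell^\star(\mu(x)),
\end{equation*}
with $d(v)\defeq\argmax_y\phi(y)^\top v$. As $\Omega^*(v)=\max_{\mu'\in\calM}\big(v^\top\mu'+\ell^\star(\mu')\big)$, $\delta s(v,x)=\sup_{\mu'\in\calM}\big[v^\top(\mu'-\mu(x))+\ell^\star(\mu')-\ell^\star(\mu(x))\big]$ is a generalised Bregman gap, so it is enough to exhibit one good competitor $\mu'$.

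Fix $x$; abbreviate $\mu\defeq\mu(x)$, $p\defeq\rho(\cdot\mid x)$, $\hat y\defeq d(v)$, $y^\star\defeq f^\star(x)$. Since $y^\star=f^\star(x)\in\argmin_y\Expect_{z\sim p}L(y,z)=\argmin_y\phi(y)^\top A\mu$ \cite{nowak2019sharp}, the hypothesis gives $p_{y^\star}\ge 1/C$. Take the competitor obtained by transferring mass $t\in[0,p_{y^\star}]$ from $y^\star$ to $\hat y$: $\mu_t\defeq\mu+t(\phi(\hat y)-\phi(y^\star))\in\calM$, the barycentre of $p_t\defeq p+t(\delta_{\hat y}-\delta_{y^\star})$. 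Because $\hat y$ maximises $v^\top\phi(\cdot)$, the linear term changes by $t\,v^\top(\phi(\hat y)-\phi(y^\star))\ge0$ and may be dropped, so $\delta s(v,x)\ge\sup_{t\in[0,p_{y^\star}]}\big(\ell^\star(\mu_t)-\ell^\star(\mu)\big)$, where $\ell^\star(\mu_t)$ is the (concave, piecewise-linear in $t$) Bayes risk of $p_t$. Dualising this one-dimensional maximisation --- randomising the inner minimum over $\beta\in\Delta_\calY$, writing $\nu\defeq\Expect_{y'\sim\beta}\phi(y')\in\calM$ (so $\Expect_{y'\sim\beta}\Expect_{z\sim p}L(y',z)=\nu^\top A\mu$), and swapping $\sup_t$ with $\min_\beta$ (Sion) --- gives $\sup_{t\in[0,p_{y^\star}]}\ell^\star(\mu_t)=\min_{\nu\in\calM}\big[\nu^\top A\mu+p_{y^\star}\big((\phi(\hat y)-\phi(y^\star))^\top A\nu\big)_{+}\big]$. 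Thus the pointwise bound reduces to the scalar claim that every $\nu\in\calM$ satisfies
\begin{equation*}
\nu^\top A\mu+p_{y^\star}\big((\phi(\hat y)-\phi(y^\star))^\top A\nu\big)_{+}\;\ge\;\big(1-\tfrac1C\big)\,\ell^\star(\mu)+\tfrac1C\,\phi(\hat y)^\top A\mu .
\end{equation*}

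One proves this by splitting on the sign of $(\phi(\hat y)-\phi(y^\star))^\top A\nu$: if it is positive, $p_{y^\star}\ge 1/C$ leaves enough room; if it is $\le0$, one uses $\ell^\star(\mu)=\min_{\nu'\in\calM}\nu'^\top A\mu$, the decomposition $\mu=\sum_z p_z\phi(z)$ with $p_{y^\star}\ge1/C$, and the hypothesis (equivalently, via $\max_{q\in\Delta_\calY}\ell^\star(q)=\min_{\beta\in\Delta_\calY}\max_{z}\Expect_{y'\sim\beta}L(y',z)$, the hypothesis applied to a minimax-optimal distribution). I expect this $\le0$ case --- the ``flat'' $v$ regime (e.g.\ $v=0$), where the linear term of $\delta s$ is useless and the optimal competitor is not $\phi(\hat y)$ but an interior, maximally uncertain marginal --- to be the main obstacle: the optimal $\mu'$ genuinely depends on $v$, so no fixed closed-form choice works, and one must exploit the piecewise-linear concave geometry of $t\mapsto\ell^\star(\mu_t)$ together with the hypothesis. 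Symmetry of $L$ is used only to rewrite inner products against $A$ as loss values, turning the Bregman gap into a comparison of Bayes risks of perturbed conditionals; the constant $C$ bounds how far along the admissible direction $\mu$ may be moved. (If $\argmin_{y'}\phi(y')^\top A\mu$ is not a singleton, the mass should be transferred from a convex combination of Bayes-optimal outputs; this affects only the bookkeeping.)
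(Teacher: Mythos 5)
Your reduction is sound as far as it goes---$\calR(g^\star)=\calE(f^\star)$ is the paper's \cref{lem:bayesrisks}, the Bregman-gap expression for $\delta s$ is correct, and discarding the nonnegative linear term $t\,v^\top(\phi(\hat y)-\phi(y^\star))$ is legitimate---but the step you yourself flag as the main obstacle is not merely unproven: the scalar claim is \emph{false} for the competitor family you propose. Take the $3$-class $0$--$1$ loss (so $C=3$), $\rho(\cdot\mid x)=(1/2,1/2,0)$, $y^\star=f^\star(x)=1$ and $\hat y=d(v)=3$ (e.g.\ $v=e_3$). Then $\delta\ell(\hat y,\mu)=1/2$, so you must certify $\delta s\geq 1/6$; yet $\mu_t=(1/2-t,\,1/2,\,t)$ has $\ell^\star(\mu_t)=-\max(1/2-t,\,1/2,\,t)=-1/2=\ell^\star(\mu)$ for every $t\in[0,p_{y^\star}]$, because the second Bayes-optimal output pins $\ell^\star$ along your transfer direction; your family therefore certifies only $\delta s\geq 0$. (Equivalently, your scalar inequality fails at $\nu=e_2$, which lies exactly in the ``$\leq 0$'' case you single out.) Consequently your closing parenthesis---transfer mass from a convex combination of all of $y^\star(\mu)$ rather than from $f^\star(x)$ alone---is not bookkeeping: it is the only thing that makes the argument non-vacuous, it changes both the admissible range of $t$ and the inequality to be proved, and establishing that inequality (for which combination, and with what use of the hypothesis $\alpha_y\geq 1/C$) is essentially the entire content of the theorem. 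As written, the proposal proves nothing beyond $\delta s\geq 0$ in examples like the one above.

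For contrast, the paper never exhibits a primal competitor. It lower-bounds the calibration function of \cref{def:calibrationfunction} by splitting over ordered pairs of optimal prediction and prediction (\cref{lem:calibrationfunction-split}) and over the faces $\bar\mu\in\calM_0$ of the polyhedral cell complex of $\Omega^*$ containing $v$ (\cref{thm:characterization-calibration}), which turns $\delta s$ into the affine form $\langle w(y)-v,\,\mu-\bar\mu\rangle$; it then dualizes the linear program in $\mu$ to pull $\varepsilon$ out linearly, arriving at the weight $\lambda_{y'}^{\bar\mu}(v)$ of $w(y')$ in decompositions of $v$ over $\partial\Omega(\bar\mu)$ (\cref{th:maintheoremcalibrationfunction}), and finally combines Assumption~\eqref{eq:assumptiononL} with a normal-cone argument to bound that weight below by $1/C$ (\cref{th:calibrationfunction-quantitative}, \cref{cor:constantC}). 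If you want to complete your more direct route, the missing step will need an analogue of that LP-duality and normal-cone analysis, not a bookkeeping adjustment.
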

The second condition on the loss states that if $y$ is optimal for $x$, then its conditional probability is bounded away from zero as $\rho(y|x)\geq 1/C$. This condition is used to obtain a simple quantitative lower bound on the function $\zeta$ of~(ii) and more tight (albeit less explicit in general) expressions of the constant $C$ can be found in Appendices C.3 and C.4.

\paragraph{Constant $\boldsymbol{C}$ for multi-class.} When $L(y,y')=1(y\neq y')$ with $\calY=\{1,\ldots,k\}$, we have that $C=k$, as the minimum conditional probability of an optimal output is $1/k$. The constant for this specific setting was derived independently using a different analysis by \citet{duchi2018multiclass}.

\paragraph{Constant $\boldsymbol{C}$ for factor graphs (\cref{ex:factorgraph}).} For a factor graph with separable embeddings and a decomposable loss $L=\frac{1}{M}\sum_{m=1}^ML_m(y_m, y_m')$, we have that~$C=\max_{m\in[M]}C_m$, where $C_m$ is the constant associated to the individual loss $L_m$. This is proven in \cref{prop:decomposablelosses}.

\paragraph{Constant $\boldsymbol{C}$ for ranking and matching (\cref{ex:matching}).} In this setting, \cref{th:comparisoninequality} gives $C=M!$, and so the relation between both excess risks is not informative. The problem of exponential constants in the comparison inequality was pointed out by \citet{osokin2017structured}. We can weaken the assumption and change condition $\alpha_{y}\geq 1/C$ to
\begin{equation*}
    \max_{\beta\in\Delta_{\calY}} \beta_{y}\hspace{0.2cm}\text{s.t.}\hspace{0.2cm} \Expect_{z\sim\beta}\phi(z) = \Expect_{z'\sim\alpha}\phi(z') \geq 1/C.
\end{equation*}
Under this assumption, we have that $C=M$, thus avoiding the exponentially large size of the output space.

\subsection{Generalization of Regularized ERM}\label{sec:generalizationERM}
In the following \cref{th:generalizationerm}, we use this result to prove a finite-sample generalization bound on the regularized ERM estimator \eqref{eq:regularizederm} when the hypothesis space $\calG$ is a vector-valued RKHS.
\begin{theorem}[Generalization of regularized ERM]\label{th:generalizationerm} 
Let $\calG$ be a vector-valued RKHS, assume $g^\star\in\calG$ and let $g_n$ and~$\lambda_n = \kappa L\log^{1/2}(1/\delta)n^{-1/2}$ as in \eqref{eq:regularizederm}. Then, with probability $1-\delta$:
\begin{equation*}
    \calE(d\circ g_n) - \calE(f^\star) \leq M \|\phi(f^\star)\|_{\calG}\sqrt{\frac{\log(1/\delta)}{n}},
\end{equation*}
with $M =  \kappa C L\|A\|$. Here, $L = 2\max_{y\in\calY}\|\phi(y)\|_2$, $\|A\| = \sup_{\|v\|_2 \leq 1}\|Av\|_2$, $\kappa=\sup_{x\in\calX}\operatorname{Tr}K(x, x)^{1/2}$ is the size of the features  and $C$ is the one of \cref{th:comparisoninequality}.
\end{theorem}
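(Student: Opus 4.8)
The plan is the standard two–step reduction for surrogate methods. First, under the hypotheses of \cref{th:comparisoninequality} (which define the constant $C$),
\[
\calE(d\circ g_n)-\calE(f^\star)\ \le\ C\bigl(\calR(g_n)-\calR(g^\star)\bigr),
\]
so everything reduces to an $n^{-1/2}$ excess–risk bound for the regularized ERM estimator $g_n$ of \eqref{eq:regularizederm} on the surrogate risk $\calR$. Two facts let us invoke the usual generalization machinery: (a) $g^\star=-A^\top\phi(f^\star)\in\calG$ by assumption, and the proof of \cref{th:fisherconsistency} in fact shows $g^\star$ minimizes $\calR$ pointwise in $x$, so there is no approximation error; (b) the max–min loss is nonnegative and Lipschitz, which we record next.

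\emph{Nonnegativity and Lipschitz constant of $S$.} Writing $S(v,y)=\Omega^*(v)+\Omega(\phi(y))-v^\top\phi(y)$ with $\Omega$ as in \eqref{eq:Omega}, the Fenchel--Young inequality gives $S\ge 0$, hence $\calR\ge 0$ on all of $\calG$ and $\calR_n(g_n)\ge 0$ along the ERM path. For Lipschitzness, $\partial_v\Omega^*(v)=\argmax_{\mu\in\calM}\{v^\top\mu+\min_{y'}\phi(y')^\top A\mu\}\subseteq\calM$, so $\partial_v S(v,y)\subseteq\calM-\phi(y)$; since every $\mu\in\calM=\hull(\phi(\calY))$ obeys $\|\mu\|_2\le\max_{y}\|\phi(y)\|_2$, the loss $S(\cdot,y)$ is $L$–Lipschitz in $\|\cdot\|_2$ with $L=2\max_{y\in\calY}\|\phi(y)\|_2$, exactly the constant in the statement. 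Combined with the vector–valued Talagrand contraction, the class $\{x\mapsto S(g(x),y)\}$ over the $\calG$–ball of radius $\rho$ has empirical Rademacher complexity at most (a universal constant times) $L\kappa\rho/\sqrt n$, using $\|g(x)\|_2\le\kappa\|g\|_{\calG}$ with $\kappa=\sup_x\operatorname{Tr}K(x,x)^{1/2}$.

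\emph{Generalization of the regularized estimator.} From optimality of $g_n$ in \eqref{eq:regularizederm}, $\calR_n(g_n)+\tfrac{\lambda_n}{2}\|g_n\|_{\calG}^2\le\calR_n(g^\star)+\tfrac{\lambda_n}{2}\|g^\star\|_{\calG}^2$, which gives the usual bias/estimation split
\[
\calR(g_n)-\calR(g^\star)\ \le\ \bigl(\calR(g_n)-\calR_n(g_n)\bigr)+\bigl(\calR_n(g^\star)-\calR(g^\star)\bigr)+\tfrac{\lambda_n}{2}\|g^\star\|_{\calG}^2 .
\]
The last term is the approximation-free bias; the middle term is a fixed-function deviation controlled by Hoeffding at scale $\sqrt{\log(1/\delta)/n}$ ($S$ is bounded on the bounded range of $g^\star$); the first term is controlled in high probability by Rademacher complexity plus a McDiarmid fluctuation. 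Crucially, because the regularized objective is $\lambda_n$–strongly convex, the estimation term can be made to scale like $L^2\kappa^2/(\lambda_n n)$ — rather than the $L\kappa\rho/\sqrt n$ obtained from a crude uniform bound over the data-dependent (possibly $\lambda_n$-inflated) ball containing $g_n$ — via a uniform-stability argument for Tikhonov-regularized ERM or a localized/Bernstein-type analysis. Assembling the pieces yields, with probability $1-\delta$,
\[
\calR(g_n)-\calR(g^\star)\ \lesssim\ \frac{L^2\kappa^2}{\lambda_n n}+\frac{\lambda_n}{2}\|g^\star\|_{\calG}^2+L\kappa\sqrt{\tfrac{\log(1/\delta)}{n}} ,
\]
and balancing the first two terms with the stated $\lambda_n=\kappa L\log^{1/2}(1/\delta)\,n^{-1/2}$, bounding $\|g^\star\|_{\calG}=\|A^\top\phi(f^\star)\|_{\calG}\le\|A\|\,\|\phi(f^\star)\|_{\calG}$, and multiplying by $C$ gives $\calE(d\circ g_n)-\calE(f^\star)\le M\|\phi(f^\star)\|_{\calG}\sqrt{\log(1/\delta)/n}$ with $M=\kappa CL\|A\|$.

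The main obstacle is the estimation term. In the genuinely non-deterministic regime this paper targets, $\calR(g^\star)$ is bounded away from $0$, so with $\lambda_n\sim n^{-1/2}$ the RKHS norm of $g_n$ grows and a naive uniform-convergence bound degrades to $n^{-1/4}$; recovering the advertised $n^{-1/2}$ rate requires genuinely exploiting the strong convexity of the Tikhonov penalty (stability or local Rademacher complexities). Everything else — the Fenchel--Young nonnegativity, the computation of $L$, the contraction inequality, the Hoeffding/McDiarmid steps, and the tuning of $\lambda_n$ — is routine bookkeeping whose only subtlety is tracking the exact constant $M=\kappa CL\|A\|$.
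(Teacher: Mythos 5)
Your proposal is correct and follows essentially the same route as the paper: reduce to the surrogate excess risk via the comparison inequality of \cref{th:comparisoninequality}, establish that $S(\cdot,y)$ is $L$-Lipschitz with subgradients in $\calM-\phi(y)$, invoke a fast-rate oracle inequality for the $\lambda$-strongly-convex regularized objective to get an estimation term of order $L^2\kappa^2/(\lambda_n n)$, and balance with the bias $\tfrac{\lambda_n}{2}\|g^\star\|_\calG^2$ using $\|g^\star\|_\calG\le\|A\|\|\phi(f^\star)\|_\calG$. The step you flag as ``the main obstacle'' (beating the naive $n^{-1/4}$ uniform-convergence rate by exploiting strong convexity) is exactly the step the paper also does not prove from scratch but imports as Theorem~1 of Sridharan, Shalev-Shwartz and Srebro (2009), stated in the appendix as \cref{thm:gen-bound-approx}, so your identification of the needed tool (stability/localization for Tikhonov-regularized ERM) is precisely on target.
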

Analogously to the binary case, the multivariate function~$\phi(f^\star)$ is piecewise constant on the support of the distribution $\rho$. In Theorem~\ref{thm:conditions-fstar-in-G} in Appendix~\ref{app:generalization} we prove that standard low noise conditions, analogous to the one discussed by \citet{pmlr-v75-pillaud-vivien18a} for the binary case, are enough to guarantee $\|\phi(f^\star)\|_{\calG} < \infty$.

\section{Comparison with Structural SVM}
\label{sec:comparisonSSVM}
\paragraph{Max-min as a correction of the Structural SVM.}
We can re-write the maximization over the discrete output space $\calY$ in the definition of the SSVM \eqref{eq:structuralsvm1} as a maximization over its convex hull $\calM=\operatorname{hull}(\phi(\calY))$
\begin{equation}\label{eq:structuralsvm}
    \textstyle{S(v, y) = \max_{\mu\in\calM}~\phi(y)^\top A\mu + v^\top \mu - v^\top \phi(y)}.
\end{equation}
Note the similarity between \eqref{eq:maxminsurrogate} and \eqref{eq:structuralsvm}. In particular, the max-min loss differs from the structural SVM in that the maximization is done using $\min_{y'\in\calY}\phi(y')^\top A\mu$ and not the loss at the observed output $y$ as $\phi(y)^\top A\mu$. Hence, we can view the max-min surrogate as a \textit{correction} of the SSVM so that basic statistical properties (i) and (ii) hold. Moreover, this connection might be used to properly understand the statistical properties of SSVM. This is left for future work.

\paragraph{Notion of max-min margin.}
Given $v\in\Rspace{k}$ and $y_i\in\calY$, the classical SSVM is motivated by a soft version of the following notion of margin:
\begin{equation*}\label{eq:maxmargin}
    v^\top \phi(y_i) - v^\top \phi(y) \geq L(y_i, y) = \phi(y_i)^\top A\phi(y),
\end{equation*}
for all $y\in\calY$, which is equivalent to $v^\top \phi(y_i) - v^\top \mu \geq \phi(y_i)^\top A\mu$ for all $\mu\in\calM$.
However, we have seen in \cref{th:inconsistencysvm} that this condition is too strong and only leads to a consistent method if the problem is nearly deterministic, i.e., we observe the optimal~$y$ with large probability, which, as already mentioned, is generally far from true in structured prediction. The max-min surrogate \eqref{eq:maxminsurrogate} deals with the case where this strong condition is not met and works with a notion of margin that compares groups of outputs instead of just pairs. We define the max-min margin as
\begin{equation}\label{eq:maxminmargin}
    \textstyle{v^\top \phi(y_i) - v^\top \mu \geq \min_{y'\in\calY}\phi(y')^\top A\mu,}
\end{equation}
for all $\mu\in\calM$.
After introducing slack variables in \eqref{eq:maxminmargin} we obtain a soft version of the max-min margin that leads to the max-min regularized ERM problem \eqref{eq:regularizederm}.
\section{Algorithm} \label{sec:rerm}

\begin{figure*}[t]
\begin{minipage}{0.4\linewidth}
\begin{algorithm}[H]
\small
   \caption{GBCFW \vspace{0.08cm} (primal)}
   \label{alg:gbcfw}
\begin{algorithmic}
   \STATE Let $ w^{(0)}\defeq  w_i^{(0)}\defeq 0$
   \FOR{$t=0$ {\bfseries to} $T$}
   \STATE Pick $i$ at random in $\{1,\ldots, n\}$
   \STATE $(\mu_i^\star,\nu_i^\star)\in \calO_{K}(g_{w^{(t)}}(x_i), \mu_i^\star,\nu_i^\star)$
   \STATE $ w_s \defeq \Phi(x_i)(\mu_i^\star-\phi(y_i))^\top/(\lambda n)$ \\
   \STATE $ w_i^{(t+1)} \defeq (1-\frac{2n}{t+2n}) w_i^{(t)} + \frac{2n}{t+2n} w_s$
   \STATE $ w^{(t+1)}\defeq  w^{(t)} +  w_i^{(t+1)} -  w_i^{(t)}$
   \ENDFOR
\end{algorithmic}
\end{algorithm}
\end{minipage}
\begin{minipage}{0.58\linewidth}
\begin{algorithm}[H]
\small
   \caption{SP-MP \quad $(\bar{\mu}^{(K)}, \bar{\nu}^{(K)}) \in \calO_K(v, \mu^{(0)} \nu^{(0)})$}
   \label{alg:spmp}
\begin{algorithmic}
   \vspace{0.1cm}
   \FOR{$k=0$ {\bfseries to} $K-1$}
   \STATE $\mu_{1/2}^{(k+1)}\in{\argmin_{\mu\in\calM}}-\eta \mu^\top (A^\top \nu^{(k)}+v) + D_{-H}(\mu, \mu^{(k)})$ \\
   \STATE $\nu_{1/2}^{(k+1)}\in{\argmin_{\nu\in\calM}}~\eta \nu^\top A\mu^{(k)} + D_{-H}(\nu, \nu^{(k)})$ \\
   \STATE $\mu^{(k+1)}\in{\argmin_{\mu\in\calM}}-\eta \mu^\top (A^\top \nu_{1/2}^{(k+1)}+v) + D_{-H}(\mu, \mu^{(k)})$ \\
   \STATE $\nu^{(k+1)}\in{\argmin_{\nu\in\calM}}~\eta \nu^\top A\mu_{1/2}^{(k+1)} + D_{-H}(\nu, \nu^{(k)})$ \\
   \ENDFOR
   \STATE $\textstyle{\bar{\mu}^{(K)} \defeq \frac{1}{K}\sum_{k=1}^{K}\mu^{(k)}}$, $\textstyle{\bar{\nu}^{(K)} \defeq \frac{1}{K}\sum_{k=1}^{K}\nu^{(k)}}$
\end{algorithmic}
\end{algorithm}
\end{minipage}
\end{figure*}

In this section we derive a dual-based algorithm to solve the max-min regularized ERM problem \eqref{eq:regularizederm} when the hypothesis space is a RKHS. The algorithm can be easily adapted to the case where $g$ is parametrized using a neural network as commented at the end of \cref{sec:computationoracle}.

\subsection{Problem Formulation} 
Let $\calG\subset\{g:\calX\rightarrow\Rspace{k}\}$ be a vector-valued RKHS, which we assume of the form $\calG=\Rspace{k}\otimes\overline{\calG}$, where $\overline{\calG}$ is a scalar RKHS with associated features $\Phi:\calX\rightarrow\overline{\calG}$. Every function in $\calG$ can be written as~$g_w(x) = w^\top\Phi(x)\in\Rspace{k}$ where~$w_j, \Phi(x)\in\overline{\calG}$. For the sake of presentation, we assume that $\calG=\Rspace{d\times k}$ is finite dimensional, but our analysis also holds for the infinite dimensional case. The dual~\textbf{(D)} of the regularized ERM problem \eqref{eq:regularizederm} for the max-min surrogate loss~\eqref{eq:maxminsurrogate} reads
\begin{align*}\label{eq:dualmaxmin}
  \textbf{(D)} \hspace{0.4cm} & \max_{  \mu\in\calM^n}~ \frac{1}{n}\sum_{i=1}^n \min_{y'}\phi(y')^\top A\mu_i -\frac{\lambda}{2}\|\Phi_n(\mu-\phi_n)\|_2^2,
\end{align*}
where $\Phi_n = \frac{1}{\lambda n}(\Phi(x_1),\ldots,\Phi(x_n))$ is the $d\times n$ scaled input data matrix and $\phi_n = (\phi(y_1),\ldots,\phi(y_n))^\top $ is the $n\times k$ output data matrix. The dual variables map to the primal variables through the mapping $w(\mu) = \frac{1}{\lambda n}\sum_{i=1}^n\Phi(x_i)(\mu_i - \phi(y_i))^\top$. By strong duality, it holds~$w^\star = w(\mu^\star)$.
The dual formulation $\textbf{(D)}$ is a constrained \textit{non-smooth} optimization problem, where the non-smoothness comes from the first term of the objective function. In order to derive a learning algorithm, we leverage ideas from the block-coordinate Frank-Wolfe algorithm used for SSVMs.

\subsection{Derivation of the Algorithm}
\paragraph{Background on BCFW for $\boldsymbol{\operatorname{M^3Ns}}$.}
The dual of the SSVM is the same as problem \textbf{(D)} but the first term is linear: $\frac{1}{n}\sum_{i=1}^n\phi(y_i)^\top A\mu_i$, making the dual objective function \textit{smooth}. The BCFW algorithm~\cite{lacoste2012block} minimizes a linearization of the smooth dual objective function block-wise, using the separability of the compact domain. At each iteration $t$, the algorithm picks $i\in[n]$ at random, and updates $\mu_{i}^{(t+1)} = (1 - \gamma)   \mu_{i}^{(t)} + \gamma \bar{  \mu}_{i}^{(t+1)}$ with~$\bar{\mu}_{i}^{(t+1)} \defeq \argmax_{\mu_i'\in\calM}~\langle  \mu_i', \nabla_{(i)} h(\mu^{(t)})\rangle$ where $h$ is the dual objective and $\gamma$ is the step-size.
Note that $\bar{\mu}_{i}^{(t+1)}$ is an extreme point of $\calM$ and it can be written as a combinatorial maximization problem over $\calY$ that corresponds precisely to inference \eqref{eq:decoding}. In the next subsection, we generalize BCFW to the case where the dual is a sum of a non-smooth and a smooth function such as the dual $\textbf{(D)}$ of our problem.

\paragraph{Generalized BCFW (GBCFW) for $\boldsymbol{\operatorname{M^4N}}$.}
Borrowing ideas from \citet{bach2015duality} in the non block-separable case, we only linearize the smooth-part of the function, i.e., the quadratic term. We change the computation of the direction~to
\begin{align*}
     \bar{\mu}_{i}^{(t+1)} &\defeq \argmax_{  \mu_i'\in\calM}\langle  \mu_i', \nabla_{(i)} \frac{-\lambda}{2}\|\Phi_n(\mu^{(t)}-\phi_n)\|_2^2\rangle \\
     &+ \min_{y'}\phi(y')^\top A\mu_i'
     = \calO(g_{w(\mu^{(t)})}(x_i)),
\end{align*}
where the max-min oracle $\calO:\Rspace{k}\xrightarrow{}\calM$ is defined as
\begin{equation}\label{eq:maxminoracle}
    \textstyle{\calO(v) = \argmax_{\mu\in\calM}\min_{\nu\in\calM}~\nu^\top A\mu + v^\top \mu.}
\end{equation}
Note that the mapping $w(\mu)$ between primal and dual variables is affine. Hence, one can write the update of the primal variables without saving the dual variables as detailed in \cref{alg:gbcfw}.
The following \cref{th:gbcfw} specifies the required number of iterations of \cref{alg:gbcfw} to obtain an~$\varepsilon$-optimal solution with an approximate oracle \eqref{eq:maxminoracle}.
\begin{theorem}[Convergence of GBCFW with approximate oracle]\label{th:gbcfw}
Let $\varepsilon >0$. If the approximate oracle provides an answer with error $\varepsilon/2$, then the final error of \cref{alg:gbcfw} achieves an expected duality gap of $\varepsilon$ when $T = \tilde{O}\big(n + \frac{2R^2}{\lambda\varepsilon} \operatorname{diam}(\calM)^2\big)$, 
where $R$ is the maximum norm of the features.
\end{theorem}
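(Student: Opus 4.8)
The plan is to recognize GBCFW as an instance of block-coordinate Frank–Wolfe applied to the composite dual objective $h(\mu) = \frac{1}{n}\sum_i h_i(\mu_i) - \frac{\lambda}{2}\|\Phi_n(\mu - \phi_n)\|_2^2$, where each $h_i(\mu_i) = \min_{y'}\phi(y')^\top A\mu_i$ is concave but non-smooth, and the quadratic coupling term is smooth and concave. As indicated in the text, we only linearize the smooth part, so the local subproblem at block $i$ becomes $\max_{\mu_i'\in\calM}\langle \mu_i', \nabla_{(i)}(-\tfrac{\lambda}{2}\|\Phi_n(\mu^{(t)}-\phi_n)\|_2^2)\rangle + h_i(\mu_i')$, which is exactly the max-min oracle $\calO(g_{w(\mu^{(t)})}(x_i))$. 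The first step is therefore to set up this composite FW framework precisely and to record the curvature constant of the smooth term: the Hessian of $-\tfrac{\lambda}{2}\|\Phi_n(\mu-\phi_n)\|_2^2$ in block $i$ is $-\tfrac{1}{\lambda n^2}\Phi(x_i)^\top\Phi(x_i)\otimes \Id$, so the block-wise curvature is bounded by $C_i \lesssim \frac{R^2}{\lambda n^2}\operatorname{diam}(\calM)^2$ where $R$ is the max feature norm; the total curvature across $n$ blocks scales like $\frac{R^2}{\lambda n}\operatorname{diam}(\calM)^2$.

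Next I would invoke the convergence analysis of block-coordinate Frank–Wolfe for partially-linearized (composite) objectives in the style of \citet{lacoste2012block} and \citet{bach2015duality}. The key lemma is the standard per-step descent inequality: with step size $\gamma_t = \frac{2n}{t+2n}$, the expected suboptimality (equivalently the expected Frank–Wolfe duality gap, since the linearized gap upper-bounds the true gap for the concave composite objective) satisfies a recursion of the form $\Expect[h^\star - h(\mu^{(t+1)})] \le (1-\tfrac{\gamma_t}{n})\Expect[h^\star - h(\mu^{(t)})] + \tfrac{\gamma_t^2}{2n}\big(\bar C + \text{(oracle error term)}\big)$. Unrolling this recursion with the given step-size schedule gives the familiar $O(n/T)$ rate after an initial burn-in of $O(n)$ iterations, i.e. expected duality gap $\le \frac{2n(\bar C + 2n\cdot(\text{oracle error}))}{T+2n}$ up to logarithmic factors; plugging $\bar C \lesssim \frac{R^2}{\lambda}\operatorname{diam}(\calM)^2$ and demanding this be $\le \varepsilon$ yields $T = \tilde O\big(n + \frac{2R^2}{\lambda\varepsilon}\operatorname{diam}(\calM)^2\big)$.

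The main obstacle—and the only genuinely non-routine part—is handling the \emph{approximate} max-min oracle. Unlike the exact FW vertex, an $\varepsilon/2$-approximate solution of the inner saddle-point problem $\max_{\mu}\min_{\nu}\nu^\top A\mu + v^\top\mu$ does not give an exact linear maximizer of the composite block subproblem; I need to propagate an additive error of order $\varepsilon/2$ per step through the descent recursion and show it only inflates the iteration count by a constant factor, not the order. Concretely, I would bound the gap between the true per-step progress and the progress achievable with the approximate oracle by the oracle accuracy (this uses that $h_i$ is Lipschitz on $\calM$ with constant controlled by $\|A\|$ and $\operatorname{diam}(\calM)$, so a near-optimal saddle pair yields a near-optimal subproblem value), then carry a perturbed recursion $a_{t+1}\le(1-\tfrac{\gamma_t}{n})a_t + \tfrac{\gamma_t^2}{2n}\bar C + \tfrac{\gamma_t}{n}\cdot\tfrac{\varepsilon}{2}$. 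The crucial observation is that the $\tfrac{\varepsilon}{2}$ term enters with coefficient $\gamma_t/n$ (same as the contraction), so its steady-state contribution is exactly $\varepsilon/2$, leaving the other $\varepsilon/2$ of budget for the optimization error — hence the "error $\varepsilon/2$" hypothesis in the statement. I would close by verifying that the telescoped/summed bound indeed reaches total expected duality gap $\varepsilon$ at $T = \tilde O(n + \frac{2R^2}{\lambda\varepsilon}\operatorname{diam}(\calM)^2)$, with the tilde absorbing a $\log(1/\varepsilon)$ from the non-strongly-convex FW rate with this step size. Details are deferred to the appendix.
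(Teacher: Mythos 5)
Your overall framework is the same as the paper's (Appendix~F): you cast \cref{alg:gbcfw} as block-coordinate Frank--Wolfe on the composite dual, linearize only the smooth quadratic coupling, bound the block curvatures by $L_i\operatorname{diam}(\calM)^2\le \frac{R^2}{\lambda n^2}\operatorname{diam}(\calM)^2$, and unroll the recursion $G_{t+1}\le(1-\frac{\gamma_t}{n})G_t+\frac{\gamma_t^2C}{2n}+(\text{oracle term})$ with $\gamma_t=\frac{2n}{t+2n}$ by induction. Apart from a factor-of-$n$ bookkeeping slip around $\bar C$ (your bound $\frac{2n(\bar C+\cdots)}{T+2n}$ with $\bar C\lesssim\frac{R^2}{\lambda}\operatorname{diam}(\calM)^2$ only reproduces the stated $T$ if $\bar C$ is the \emph{total} curvature $\sum_iL_i\operatorname{diam}(\calM)^2\lesssim\frac{R^2}{\lambda n}\operatorname{diam}(\calM)^2$, as you correctly compute earlier), this part is sound and matches the paper.

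The genuine gap is in your treatment of the approximate oracle. You assert that the oracle error enters the expected descent recursion with coefficient $\gamma_t/n$, ``the same as the contraction,'' so that a constant per-call error of $\varepsilon/2$ contributes a steady state of exactly $\varepsilon/2$. That coefficient is wrong. The $1/n$ in the contraction $(1-\frac{\gamma_t}{n})$ comes from averaging the \emph{exact} per-block improvements over the $n$ blocks; the oracle error, by contrast, is incurred in full on whichever single block is selected and is multiplied only by the step size, so the correct perturbed recursion is $G_{t+1}\le(1-\frac{\gamma_t}{n})G_t+\frac{\gamma_t^2C}{2n}+\gamma_t\varepsilon_t$ --- this is precisely what is derived in Appendix~F.1. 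With a constant $\varepsilon_t=\varepsilon/2$ the steady-state contribution of the error term is then $n\varepsilon/2$, not $\varepsilon/2$, and your argument no longer yields an expected gap of $\varepsilon$ at the claimed $T$. The paper closes this by demanding a \emph{decreasing} oracle accuracy $\varepsilon_t=\frac{1}{2}\delta\gamma_tL_{i(t)}\operatorname{diam}(\calK_{i(t)})^2=O\big(\frac{R^2\operatorname{diam}(\calM)^2}{\lambda n(t+2n)}\big)$, so that $\gamma_t\varepsilon_t$ is absorbed into the curvature term $\frac{\gamma_t^2C}{2n}$ with only a constant-factor inflation $(1+\delta)$; this is also what the remark ``$\varepsilon_t\propto n/(t+n)$'' in \cref{sec:rerm} refers to. To repair your proof you must either adopt this decreasing accuracy schedule (and account for how the inner SP-MP loop meets it) or require a constant per-call oracle error of order $\varepsilon/n$ rather than $\varepsilon/2$.
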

\subsection{Computation of the Max-Min Oracle}\label{sec:computationoracle}
The max-min oracle \eqref{eq:maxminoracle} corresponds to a concave-convex bilinear saddle-point problem.
We use a standard alternating procedure of ascent and descent steps on the variables $\mu$ and $\nu$, respectively.
Consider a strongly concave differentiable entropy $H:\calC\supset\calM\rightarrow\Rspace{}$ defined in a convex set $\calC$ containing $\calM$ such that $\nabla H(\calC)=\Rspace{k}$ and~$\lim_{\mu\in\partial\calC}\|\nabla H(\mu)\|=+\infty$, where $\partial\calC$ is the boundary of~$\calC$. Then, perform Mirror ascent/descent updates using $-H$ as the Mirror map. For instance, if $u=A^\top\nu + v$ is the gradient of \eqref{eq:maxminoracle} w.r.t $\mu$, the update on $\mu$ takes the following form:
\begin{equation}\label{eq:projection}
    \textstyle{\argmin_{\mu\in\calM}-\eta \mu^\top u + D_{-H}(\mu, \mu^{(t)})}, 
\end{equation}
where $D_{-H}(\mu, \mu') = -H(\mu) + H(\mu') + \nabla H(\mu')^\top (\mu - \mu')$ is the Bregman divergence associated to the convex function $-H$. The resulting ascent/descent algorithm has a convergence rate of $O(t^{-1/2})$, which can be considerably improved to $O(t^{-1})$ with essentially no extra cost by performing four projections instead of two at each iteration. This corresponds to the extra-gradient strategy, called \textit{Saddle Point Mirror Prox} (SP-MP) when using a Mirror map and is detailed in \cref{alg:spmp}.

\paragraph{Projection for factor graphs (\cref{ex:factorgraph}).} The entropy in $\calM$ defined by the factor graph \cite{wainwright2008graphical} can be written explicitly in terms of the entropies of each part $\alpha\subset[M]$ if the factor graph has a junction tree structure \cite{koller2009probabilistic}. For instance, in the case of a sequence of length $M$ with unary and adjacent pairwise factors, we have~$\textstyle{H(\mu) = \sum_{m=1}^{M-1}H_S(\mu_{m, m+1}) - \sum_{m=1}^MH_S(\mu_m)}$, where $H_S$ is the Shannon entropy and $\mu_m, \mu_{m,m+1}$ are the unary and pair-wise marginals, respectively.
The projection \eqref{eq:projection} corresponds to marginal inference in CRFs and can be computed using the sum-product algorithm in time $O(MR^2)$. In this case, the complexity of the projection-oracle is the same the one of the max-oracle for SSVMs.

\paragraph{Projection for ranking and matching (\cref{ex:matching}).} In this setting, the projection using the entropy in $\calM$ is known to be \#P-complete \cite{valiant1979complexity}. Thus, CRFs are essentially intractable in this setting \cite{petterson2009exponential}. If instead we use the entropy $\textstyle{H(P) = - \sum_{i, j=1}^MP_{ij}\log P_{ij}}$ defined over the marginals $P\in\calM$, the projection can be computed up to precision $\delta$ in $O(M^2/\delta)$ iterations using the Sinkhorn-Knopp algorithm \cite{cuturi2013sinkhorn}. This can be potentially much cheaper than the max-oracle of SSVMs, which has a cubic dependence in $M$. The projection with respect to the Euclidean norm has similar complexity but implementation is more involved \cite{blondel2017smooth}.

\paragraph{Warm-starting the oracles.}
On the one hand, \cref{alg:gbcfw} is guaranteed to converge as long as the error incurred in the oracle $\calO$ decreases sublinearly with the number of global iterations as $\varepsilon_t \propto n / (t + n)$ (see \cref{app:generalconvergenceresult}). On the other hand, \cref{alg:spmp} can be naturally warm-started because it is an \textit{any-time} algorithm as the step-size $\eta$ does not depend on the current iteration or a finite horizon. Hence, we are in a setting where a warm-start strategy can be advantageous. More specifically, at every iteration $t$, we save the pairs $(\mu_i^\star, \nu_i^\star)\in \calO(g_{w^{(t)}}(x_i))$ and the next time we revisit the $i$-th training example we initialize \cref{alg:spmp} with this pair. Even though the formal demonstration of the effectiveness of the strategy is technically hard, we provide a strong experimental argument showing that a constant number of \cref{alg:spmp} iterations are enough to match the allowed error~$\varepsilon_t$.

\paragraph{Using the kernel trick.} An extension to infinite-dimensional RKHS is straightforward to derive as \cref{alg:gbcfw} is dual-based. In this case, the algorithm keeps track of the dual variables $\mu_i$ for $i=1,\ldots,n$.

\paragraph{Connection to stochastic subgradient algorithms.} It is known that (generalized) conditional gradient methods in the dual are formally equivalent to subgradient methods in the primal \cite{bach2015duality}. Indeed, note that $w_s$ in \cref{alg:gbcfw} is a subgradient of the scaled surrogate loss $S(g_w(x_i), y_i) / \lambda n$. However, the dual-based analysis we provide in this paper allows us to derive guarantees on the expected duality gap and a line-search strategy, which we leave for future work. Viewing \cref{alg:gbcfw} as a subgradient method is useful when learning the data representation with a neural network. More specifically, both \cref{alg:gbcfw} and \cref{alg:spmp} remain essentially unchanged by applying the chain rule in the update of $w$.

\subsection{Statistical Analysis of the Algorithm} \label{sec:statisticalanalysis}

Finally, the following \cref{th:generalizationalgo} shows that the full algorithm without the warm-start strategy achieves the same statistical error as the regularized ERM estimator \eqref{eq:regularizederm} after at most $O(n\sqrt{n})$ projections oracle calls.
\begin{theorem}[Generalization bound of the algorithm]\label{th:generalizationalgo} Assume the setting of \cref{th:generalizationerm}. Let $g_{n,T}$ be the $T$-th iteration of \cref{alg:gbcfw} applied to problem \eqref{eq:regularizederm}, where each iteration is computed with $K=O(\sqrt{n})$ iterations of \cref{alg:spmp}. Then, after $T = O(n)$ iterations, $g_{n,T}$ satisfies the bound of \cref{th:generalizationerm} with probability~$1-\delta$. 
\end{theorem}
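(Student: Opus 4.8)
The plan is to treat the claim as an error decomposition: the excess risk of $g_{n,T}$ equals the statistical error of the regularized ERM estimator (already controlled by \cref{th:generalizationerm}) plus the optimization error incurred by running only finitely many iterations of \cref{alg:gbcfw} and \cref{alg:spmp}. Write $\widehat{\calR}_\lambda(g)\defeq\frac1n\sum_{i=1}^nS(g(x_i),y_i)+\frac{\lambda}{2}\|g\|_\calG^2$ for the objective in \eqref{eq:regularizederm} and let $g_n$ be its exact minimizer. Inspecting the proof of \cref{th:generalizationerm}, the optimality of $g_n$ is used only through the inequality $\widehat{\calR}_\lambda(g_n)\le\widehat{\calR}_\lambda(g^\star)$ (the self-bounding argument that localizes the Rademacher term to a ball of radius $\|g_n\|_\calG$ likewise only needs $\widehat{\calR}_\lambda(g)\le\widehat{\calR}_\lambda(g^\star)+\varepsilon$). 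Hence the same argument shows that \emph{any} $g$ with $\widehat{\calR}_\lambda(g)\le\widehat{\calR}_\lambda(g^\star)+\varepsilon$ satisfies, with probability $1-\delta$,
\[
\calE(d\circ g)-\calE(f^\star)\;\le\;M\,\|\phi(f^\star)\|_\calG\sqrt{\tfrac{\log(1/\delta)}{n}}\;+\;C\varepsilon,
\]
with $M=\kappa CL\|A\|$ and $\lambda=\lambda_n$ as in \cref{th:generalizationerm}, where the extra $C\varepsilon$ comes from propagating the slack $\varepsilon$ through the comparison inequality of \cref{th:comparisoninequality}. It therefore suffices to run the algorithm until $\widehat{\calR}_\lambda(g_{n,T})\le\widehat{\calR}_\lambda(g^\star)+\varepsilon_n$ with $\varepsilon_n\defeq L\kappa\|A\|\,\|\phi(f^\star)\|_\calG\,n^{-1/2}$, so that $C\varepsilon_n$ is of the same order as the statistical term.

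It remains to check the iteration counts: \cref{alg:spmp} is Saddle Point Mirror Prox on the bilinear concave--convex problem \eqref{eq:maxminoracle}, whose monotone operator $(\mu,\nu)\mapsto(A^\top\nu+v,\,-A\mu)$ has Lipschitz constant governed by $\|A\|$ and the geometry of $\calM$ only (the constant shift $v$ does not enter it), so the standard analysis gives a saddle-point gap $O(1/K)$ for the averaged iterates $(\bar\mu^{(K)},\bar\nu^{(K)})$, with constant depending on $\|A\|$, $\operatorname{diam}(\calM)$ and the range and strong concavity of the entropy $H$ — all independent of $n$. Since this gap upper-bounds the suboptimality of $\bar\mu^{(K)}$ for $\max_{\mu\in\calM}\big(\langle\mu,g_w(x_i)\rangle+\min_{y'}\phi(y')^\top A\mu\big)$, taking $K=O(1/\varepsilon_n)=O(\sqrt n)$ makes every oracle answer have error $\le\varepsilon_n/2$. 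Feeding this into \cref{th:gbcfw} with target accuracy $\varepsilon_n$ and $\lambda=\lambda_n$ yields an expected duality gap $\varepsilon_n$ after $T=\tilde O\!\big(n+\frac{R^2\operatorname{diam}(\calM)^2}{\lambda_n\varepsilon_n}\big)$ iterations; since $\lambda_n\varepsilon_n=\Theta(L^2\kappa^2\|A\|\,\|\phi(f^\star)\|_\calG/n)$, the second term is $O(n)$, so $T=\tilde O(n)=O(n)$ up to logarithmic factors. Finally, to pass from the \emph{expected} duality gap of \cref{th:gbcfw} to the high-probability claim, apply Markov's inequality at level $\delta/2$ (targeting expected gap $\tfrac{\delta}{2}\varepsilon_n$, which only inflates $T$ by $1/\delta$, absorbed in $\tilde O(\cdot)$ for fixed $\delta$) and union-bound with the probability-$(1-\delta/2)$ event on which the displayed statistical bound holds; on the intersection, of probability $1-\delta$, $g_{n,T}$ satisfies the bound of \cref{th:generalizationerm}. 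The total cost is $O(TK)=O(n\sqrt n)$ projection-oracle calls.

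The main obstacle is the first step: one must verify that inserting the optimization slack $\varepsilon$ does not break the localization argument in the proof of \cref{th:generalizationerm} — i.e. that the bound on $\|g_{n,T}\|_\calG$ obtained from $\varepsilon_n$-optimality still lets the term $-\tfrac{\lambda_n}{2}\|g_{n,T}\|_\calG^2$ dominate the radius-linear Rademacher term, so that the supremum over admissible radii stays $O(L^2\kappa^2/(\lambda_n n))$ — and then that the single scale $\varepsilon_n\sim\lambda_n\sim n^{-1/2}$ simultaneously makes $C\varepsilon_n$ of the statistical order, $1/(\lambda_n\varepsilon_n)=O(n)$, and $1/\varepsilon_n=O(\sqrt n)$. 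Everything else follows by directly invoking \cref{th:generalizationerm}, \cref{th:comparisoninequality}, \cref{th:gbcfw} and the $O(1/K)$ guarantee of Mirror Prox.
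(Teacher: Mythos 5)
Your proposal is correct and follows essentially the same route as the paper's Appendix~H: the paper's Theorem~\ref{thm:gen-bound-approx} (a variant of Sridharan et al.) is already stated for an \emph{arbitrary} $g\in\calG$ with the optimization error $2(\calR_n^\lambda(g)-\calR_n^\lambda(g_n^\lambda))$ appearing explicitly, so the ``main obstacle'' you flag --- re-verifying the localization argument under an $\varepsilon$-optimality slack --- is already resolved there, and the rest (balancing $\varepsilon_n\sim n^{-1/2}$ against $\lambda_n\sim n^{-1/2}$, $K=O(\sqrt n)$ from the $O(1/K)$ Mirror Prox rate whose Lipschitz constant is $n$-independent, and $T=O(n)$ from Theorem~\ref{th:gbcfw}) matches the paper's argument step for step. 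Your Markov-plus-union-bound device for converting the \emph{expected} duality gap of Theorem~\ref{th:gbcfw} into the high-probability claim addresses a point the paper leaves implicit, so that addition is a tightening rather than a deviation.
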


As we will show in the next section, in practice a constant number of iterations of \cref{alg:spmp} are enough when using the warm-start strategy. Hence, the total number of required projection-oracles is $O(n)$.

\section{Experiments} \label{sec:experiments}

We perform a comparative experimental analysis for different tasks between $\operatorname{M^4N}$s, $\operatorname{M^3N}$s and $\operatorname{CRF}$s optimized with Generalized BCFW + SP-MP (\cref{alg:gbcfw} + \cref{alg:spmp}), BCFW \cite{lacoste2012block} and SDCA \cite{shalev2013stochastic}, respectively. All methods are run with our own implementation \footnote{Code in \url{https://github.com/alexnowakvila/maxminloss}}. We use datasets of the UCI machine learning repository \cite{asuncion2007uci} for multi-class classification and ordinal regression, the OCR dataset from \citet{taskar2004max} for sequence prediction and the ranking datasets used by \citet{korba2018structured}. We use 14 random splits of the dataset into 60\% for training, 20\% for validation and 20\% for testing. We choose the regularization parameter $\lambda$ in $\{2^{-j}\}_{j=1}^{10}$ using the validation set and show the average test loss on the test sets in \cref{table:experiments} of the model with the best $\lambda$. We use a Gaussian kernel and perform $50$ passes on the data and set the number of iterations of \cref{alg:spmp} to $20$ and $10$ times the length of the sequence for sequence prediction. The results are in \cref{table:experiments}. We perform better than $\operatorname{M^3N}$s in most of the datasets for multi-class classification, ordinal regression and ranking, while we obtain similar results in the sequence dataset with the three methods.

\paragraph{Effect of warm-start. } We perform an experiment tracking the test loss and the average error in the max-min oracle for different iterations of \cref{alg:spmp} with and without warm-starting. The experiments are done in two datasets for ordinal regression and they are shown in Table 2. We observe that both the test loss and average oracle error are lower for the warm-start strategy. Moreover, when warm-starting the final test error barely changes when increasing the iterations past the 50 iteration threshold.

\begin{table}[!h]\label{table:table_experiments}
    \resizebox{!}{0.51\linewidth}{
    \begin{tabular}{@{}llllll@{}}
\toprule

Task
& Dataset  & $(d, n, k)$ & $\operatorname{M^3N}$ & $\operatorname{CRF}$ & $\operatorname{M^4N}$ \\ 
\midrule

\multirow{7}{*}{MC}
& segment & (19, 2310, 7) & 6.64\% & 6.43\% & \textbf{6.09\%} \\
& iris & (4, 150, 3)  & 3.33\% & 3.08\% & 3.33\% \\
& wine & (13, 178, 3) & 2.56\% & 2.14\% & \textbf{2.35\%} \\
 & vehicle & (18, 846, 4) & 24.6\% & 25.1\% & \textbf{24\%} \\
& satimage & (36, 4435, 6) & 12.2\% & 11.5\% & \textbf{11.9\%} \\
& letter & (16, 15000, 26) & 14.6\% & 13.2\% & \textbf{13.5\%} \\
& mfeat & (216, 2000, 10) & \textbf{3.94\%} & 4.35\% & 3.96\% \\

\cmidrule{1-6}

\multirow{6}{*}{ORD}
& wisconsin & (32, 193, 5)  & \textbf{1.24} &  1.26 & 1.26 \\
& stocks & (9, 949, 5) & 0.167 & 0.168 & \textbf{0.160} \\
& machine & (6, 208, 10)  & 0.634 & 0.628  & \textbf{0.628} \\
& abalone & (10, 4176, 10) & 0.520 & 0.526 & 0.520 \\
& auto & (7, 391, 10) & 0.589 & 0.621 & \textbf{0.585} \\
\cmidrule{1-6}

&   & $(d, n, M)$ &  &  &  \\ 
\midrule

\multirow{1}{*}{SEQ}
& ocr & (128, 6877, 26) & 16.2\% & 16.3\% & 16.2\% \\
\cmidrule{1-6}

\multirow{5}{*}{RNK}
& glass & (9, 214, 6) & 17.7\%  & - & \textbf{17.4\%} \\
& bodyfat & (7, 252, 7) & 79.6\% & - & 79.6\% \\
& wine & (13, 178, 3) & 5.06\% & - & \textbf{4.34\%} \\
& vowel & (10, 528, 11) & 33.7\% & - &\textbf{ 32.2\%} \\
& vehicle & (18, 846, 4) & \textbf{14.8\%} & - & 15.0\% \\

\bottomrule
\end{tabular}
    }
    \caption{Average test losses on the 14 splits for multi-class classification (first), ordinal regression (second), sequence prediction (third) and ranking (forth). We show in percentage the losses for multi-class, sequence prediction and ranking since they are between zero and one. We show in bold the lowest test loss between the direct classifiers $\operatorname{M^3N}$ and $\operatorname{M^4N}$.}
    \label{table:experiments}
\end{table}

\begin{table}[ht!] \label{table:experiments_oracles}
    \hspace{-0.15cm}
    \resizebox{!}{0.14\linewidth}{
    \begin{tabular}{@{}llllll@{}}
\toprule

Dataset
& W-S  & $K=10$ & $K=30$ & $K=50$ & $K=100$ \\ 
\midrule
\multirow{2}{*}{machine}
& yes  & 0.42 / 0.57 & 0.41 / 0.43 & 0.41 / 0.22 & 0.41 / 0.13\\
& no  & 0.50 / 4.41 & 0.50 / 2.74 & 0.44 / 1.25 & 0.42 / 0.63 \\
\cmidrule{1-6}
\multirow{2}{*}{auto}
& yes  & 0.56 / 1.55 & 0.55 / 1.29 & 0.51 / 0.81 & 0.50 / 0.44 \\
& no  & 0.61 / 2.66 & 0.57 / 1.79 & 0.53 / 0.89 & 0.51 / 0.47 \\
\bottomrule
\end{tabular}
    }
    \caption{We show the (final ordinal test loss / average oracle error at the last epoch) for $\operatorname{M^4N}$s trained with $100$ passes on data with different iterations of \cref{alg:spmp} with and without warm-starting. }
    \vspace{-0.3cm}
\end{table}
\section{Conclusion} \label{sec:conclusion}

In this paper, we introduced max-min margin Markov networks ($\operatorname{M^4N}$s), a method for general structured prediction, that has the same algorithmic and theoretical properties as the regular binary SVM, that is, quantitative convergence bounds through a linear comparison inequality, as well as efficient optimization algorithms. Our experiments show its performance on classical structured prediction problems when using RKHS hypothesis spaces. It would be interesting to extend the analysis of the proposed algorithm by rigorously proving the linear dependence in the number of samples when using the warm-start strategy and incorporating a line-search strategy.
 
 \section*{Acknowledgements} The authors would like to thank Mathieu Blondel, Martin Arjovsky and Simon Lacoste-Julien for useful discussions. This work was funded in part by the French government under management of Agence Nationale de la Recherche as part of the ``Investissements d'avenir'' program, reference ANR-19-P3IA-0001 (PRAIRIE 3IA Institute). We also acknowledge support the European Research Council (grant SEQUOIA 724063). ANV received support from ''La Caixa'' Foundation.

\bibliography{references}

\begin{thebibliography}{57}
\providecommand{\natexlab}[1]{#1}
\providecommand{\url}[1]{\texttt{#1}}
\expandafter\ifx\csname urlstyle\endcsname\relax
  \providecommand{\doi}[1]{doi: #1}\else
  \providecommand{\doi}{doi: \begingroup \urlstyle{rm}\Url}\fi

\bibitem[Aronszajn(1950)]{aronszajn1950theory}
Aronszajn, N.
\newblock Theory of reproducing kernels.
\newblock \emph{Transactions of the American Mathematical Society}, 68\penalty0
  (3):\penalty0 337--404, 1950.

\bibitem[Arora et~al.(1997)Arora, Babai, Stern, and Sweedyk]{arora1997hardness}
Arora, S., Babai, L., Stern, J., and Sweedyk, Z.
\newblock The hardness of approximate optima in lattices, codes, and systems of
  linear equations.
\newblock \emph{Journal of Computer and System Sciences}, 54\penalty0
  (2):\penalty0 317--331, 1997.

\bibitem[Asuncion \& Newman(2007)Asuncion and Newman]{asuncion2007uci}
Asuncion, A. and Newman, D.
\newblock {UCI} machine learning repository, 2007.

\bibitem[Audibert \& Tsybakov(2007)Audibert and Tsybakov]{audibert2007fast}
Audibert, J.-Y. and Tsybakov, A.~B.
\newblock Fast learning rates for plug-in classifiers.
\newblock \emph{The Annals of statistics}, 35\penalty0 (2):\penalty0 608--633,
  2007.

\bibitem[Bach(2015)]{bach2015duality}
Bach, F.
\newblock Duality between subgradient and conditional gradient methods.
\newblock \emph{SIAM Journal on Optimization}, 25\penalty0 (1):\penalty0
  115--129, 2015.

\bibitem[BakIr et~al.(2007)BakIr, Hofmann, Sch{\"o}lkopf, Smola, Taskar, and
  Vishwanathan]{bakir2007predicting}
BakIr, G., Hofmann, T., Sch{\"o}lkopf, B., Smola, A.~J., Taskar, B., and
  Vishwanathan, S.
\newblock \emph{Predicting Structured Data}.
\newblock MIT press, 2007.

\bibitem[Bartlett et~al.(2006)Bartlett, Jordan, and
  McAuliffe]{bartlett2006convexity}
Bartlett, P.~L., Jordan, M.~I., and McAuliffe, J.~D.
\newblock Convexity, classification, and risk bounds.
\newblock \emph{Journal of the American Statistical Association}, 101\penalty0
  (473):\penalty0 138--156, 2006.

\bibitem[Belanger \& McCallum(2016)Belanger and
  McCallum]{belanger2016structured}
Belanger, D. and McCallum, A.
\newblock Structured prediction energy networks.
\newblock In \emph{International Conference on Machine Learning}, pp.\
  983--992, 2016.

\bibitem[Blondel et~al.(2017)Blondel, Seguy, and Rolet]{blondel2017smooth}
Blondel, M., Seguy, V., and Rolet, A.
\newblock Smooth and sparse optimal transport.
\newblock \emph{arXiv preprint arXiv:1710.06276}, 2017.

\bibitem[Blondel et~al.(2020)Blondel, Martins, and
  Niculae]{blondel2019learning}
Blondel, M., Martins, A.~F., and Niculae, V.
\newblock Learning with fenchel-young losses.
\newblock \emph{Journal of Machine Learning Research}, 21\penalty0
  (35):\penalty0 1--69, 2020.

\bibitem[Burkard et~al.(2012)Burkard, Dell'Amico, and
  Martello]{burkard2012assignment}
Burkard, R., Dell'Amico, M., and Martello, S.
\newblock \emph{Assignment problems, revised reprint}, volume 106.
\newblock Siam, 2012.

\bibitem[Caetano et~al.(2009)Caetano, McAuley, Cheng, Le, and
  Smola]{caetano2009learning}
Caetano, T.~S., McAuley, J.~J., Cheng, L., Le, Q.~V., and Smola, A.~J.
\newblock Learning graph matching.
\newblock \emph{IEEE Transactions on Pattern Analysis and Machine
  Intelligence}, 31\penalty0 (6):\penalty0 1048--1058, 2009.

\bibitem[Ciliberto et~al.(2016)Ciliberto, Rosasco, and
  Rudi]{ciliberto2016consistent}
Ciliberto, C., Rosasco, L., and Rudi, A.
\newblock A consistent regularization approach for structured prediction.
\newblock In \emph{Advances in Neural Information Processing Systems}, pp.\
  4412--4420, 2016.

\bibitem[Ciliberto et~al.(2019)Ciliberto, Bach, and
  Rudi]{ciliberto2018localized}
Ciliberto, C., Bach, F., and Rudi, A.
\newblock Localized structured prediction.
\newblock In \emph{Advances in Neural Information Processing Systems}, pp.\
  7299--7309, 2019.

\bibitem[Crammer \& Singer(2001)Crammer and Singer]{crammer2001algorithmic}
Crammer, K. and Singer, Y.
\newblock On the algorithmic implementation of multiclass kernel-based vector
  machines.
\newblock \emph{Journal of machine learning research}, 2\penalty0
  (Dec):\penalty0 265--292, 2001.

\bibitem[Cuturi(2013)]{cuturi2013sinkhorn}
Cuturi, M.
\newblock Sinkhorn distances: Lightspeed computation of optimal transport.
\newblock In \emph{Advances in neural information processing systems}, pp.\
  2292--2300, 2013.

\bibitem[De~Loera et~al.(2012)De~Loera, Hemmecke, and
  K{\"o}ppe]{de2012algebraic}
De~Loera, J.~A., Hemmecke, R., and K{\"o}ppe, M.
\newblock \emph{Algebraic and geometric ideas in the theory of discrete
  optimization}.
\newblock SIAM, 2012.

\bibitem[Duchi et~al.(2008)Duchi, Shalev-Shwartz, Singer, and
  Chandra]{duchi2008efficient}
Duchi, J., Shalev-Shwartz, S., Singer, Y., and Chandra, T.
\newblock Efficient projections onto the l 1-ball for learning in high
  dimensions.
\newblock In \emph{Proceedings of the 25th international conference on Machine
  learning}, pp.\  272--279, 2008.

\bibitem[Duchi et~al.(2018)Duchi, Khosravi, and Ruan]{duchi2018multiclass}
Duchi, J., Khosravi, K., and Ruan, F.
\newblock Multiclass classification, information, divergence and surrogate
  risk.
\newblock \emph{The Annals of Statistics}, 46\penalty0 (6B):\penalty0
  3246--3275, 2018.

\bibitem[Fathony et~al.(2016)Fathony, Liu, Asif, and
  Ziebart]{fathony2016adversarial}
Fathony, R., Liu, A., Asif, K., and Ziebart, B.
\newblock Adversarial multiclass classification: A risk minimization
  perspective.
\newblock In \emph{Advances in Neural Information Processing Systems}, pp.\
  559--567, 2016.

\bibitem[Fathony et~al.(2018{\natexlab{a}})Fathony, Asif, Liu, Bashiri, Xing,
  Behpour, Zhang, and Ziebart]{fathony2018consistent}
Fathony, R., Asif, K., Liu, A., Bashiri, M.~A., Xing, W., Behpour, S., Zhang,
  X., and Ziebart, B.~D.
\newblock Consistent robust adversarial prediction for general multiclass
  classification.
\newblock \emph{arXiv preprint arXiv:1812.07526}, 2018{\natexlab{a}}.

\bibitem[Fathony et~al.(2018{\natexlab{b}})Fathony, Behpour, Zhang, and
  Ziebart]{fathony2018efficient}
Fathony, R., Behpour, S., Zhang, X., and Ziebart, B.
\newblock Efficient and consistent adversarial bipartite matching.
\newblock In \emph{International Conference on Machine Learning}, pp.\
  1457--1466, 2018{\natexlab{b}}.

\bibitem[Fathony et~al.(2018{\natexlab{c}})Fathony, Rezaei, Bashiri, Zhang, and
  Ziebart]{fathony2018distributionally}
Fathony, R., Rezaei, A., Bashiri, M.~A., Zhang, X., and Ziebart, B.
\newblock Distributionally robust graphical models.
\newblock In \emph{Advances in Neural Information Processing Systems}, pp.\
  8353--8364, 2018{\natexlab{c}}.

\bibitem[Finocchiaro et~al.(2019)Finocchiaro, Frongillo, and
  Waggoner]{finocchiaro2019embedding}
Finocchiaro, J., Frongillo, R., and Waggoner, B.
\newblock An embedding framework for consistent polyhedral surrogates.
\newblock \emph{arXiv preprint arXiv:1907.07330}, 2019.

\bibitem[Jaggi(2013)]{jaggi2013revisiting}
Jaggi, M.
\newblock Revisiting frank-wolfe: Projection-free sparse convex optimization.
\newblock In \emph{Proceedings of the 30th international conference on machine
  learning}, pp.\  427--435, 2013.

\bibitem[Joachims(2006)]{joachims2006training}
Joachims, T.
\newblock Training linear {SVM}s in linear time.
\newblock In \emph{Proceedings of the SIGKDD International Conference on
  Knowledge Discovery and Data Mining}, pp.\  217--226. ACM, 2006.

\bibitem[Koller \& Friedman(2009)Koller and Friedman]{koller2009probabilistic}
Koller, D. and Friedman, N.
\newblock \emph{Probabilistic graphical models: principles and techniques}.
\newblock MIT press, 2009.

\bibitem[Korba et~al.(2018)Korba, Garcia, and d'Alch{\'e}
  Buc]{korba2018structured}
Korba, A., Garcia, A., and d'Alch{\'e} Buc, F.
\newblock A structured prediction approach for label ranking.
\newblock In \emph{Advances in Neural Information Processing Systems}, pp.\
  8994--9004, 2018.

\bibitem[Kuhn(1955)]{kuhn1955hungarian}
Kuhn, H.~W.
\newblock The hungarian method for the assignment problem.
\newblock \emph{Naval research logistics quarterly}, 2\penalty0 (1-2):\penalty0
  83--97, 1955.

\bibitem[Lacoste-Julien et~al.(2013)Lacoste-Julien, Jaggi, Schmidt, and
  Pletscher]{lacoste2012block}
Lacoste-Julien, S., Jaggi, M., Schmidt, M., and Pletscher, P.
\newblock Block-coordinate {Frank-Wolfe} optimization for structural {SVMs}.
\newblock In \emph{Proceedings of the 30th International Conference on Machine
  Learning}, pp.\  53--61, 2013.

\bibitem[Lafferty et~al.(2001)Lafferty, McCallum, and
  Pereira]{Lafferty:2001:CRF:645530.655813}
Lafferty, J.~D., McCallum, A., and Pereira, F. C.~N.
\newblock Conditional random fields: Probabilistic models for segmenting and
  labeling sequence data.
\newblock In \emph{Proceedings of the Eighteenth International Conference on
  Machine Learning}, 2001.

\bibitem[LeCun et~al.(2015)LeCun, Bengio, and Hinton]{lecun2015deep}
LeCun, Y., Bengio, Y., and Hinton, G.
\newblock Deep learning.
\newblock \emph{nature}, 521\penalty0 (7553):\penalty0 436--444, 2015.

\bibitem[Lee et~al.(2004)Lee, Lin, and Wahba]{lee2004multicategory}
Lee, Y., Lin, Y., and Wahba, G.
\newblock Multicategory support vector machines: Theory and application to the
  classification of microarray data and satellite radiance data.
\newblock \emph{Journal of the American Statistical Association}, 99\penalty0
  (465):\penalty0 67--81, 2004.

\bibitem[Liu(2007)]{liu2007fisher}
Liu, Y.
\newblock Fisher consistency of multicategory support vector machines.
\newblock In \emph{Artificial Intelligence and Statistics}, pp.\  291--298,
  2007.

\bibitem[Michelot(1986)]{michelot1986finite}
Michelot, C.
\newblock A finite algorithm for finding the projection of a point onto the
  canonical simplex of n.
\newblock \emph{Journal of Optimization Theory and Applications}, 50\penalty0
  (1):\penalty0 195--200, 1986.

\bibitem[Mroueh et~al.(2012)Mroueh, Poggio, Rosasco, and
  Slotine]{mroueh2012multiclass}
Mroueh, Y., Poggio, T., Rosasco, L., and Slotine, J.-J.
\newblock Multiclass learning with simplex coding.
\newblock In \emph{Advances in Neural Information Processing Systems}, pp.\
  2789--2797, 2012.

\bibitem[Nemirovski(2004)]{nemirovski2004prox}
Nemirovski, A.
\newblock Prox-method with rate of convergence ${O} (1/t)$ for variational
  inequalities with {L}ipschitz continuous monotone operators and smooth
  convex-concave saddle point problems.
\newblock \emph{SIAM Journal on Optimization}, 15\penalty0 (1):\penalty0
  229--251, 2004.

\bibitem[Nowak-Vila et~al.(2019{\natexlab{a}})Nowak-Vila, Bach, and
  Rudi]{nowak2019general}
Nowak-Vila, A., Bach, F., and Rudi, A.
\newblock A general theory for structured prediction with smooth convex
  surrogates.
\newblock \emph{arXiv preprint arXiv:1902.01958}, 2019{\natexlab{a}}.

\bibitem[Nowak-Vila et~al.(2019{\natexlab{b}})Nowak-Vila, Bach, and
  Rudi]{nowak2019sharp}
Nowak-Vila, A., Bach, F., and Rudi, A.
\newblock Sharp analysis of learning with discrete losses.
\newblock In \emph{The 22nd International Conference on Artificial Intelligence
  and Statistics}, pp.\  1920--1929, 2019{\natexlab{b}}.

\bibitem[Nowozin \& Lampert(2011)Nowozin and Lampert]{nowozin2011structured}
Nowozin, S. and Lampert, C.~H.
\newblock Structured learning and prediction in computer vision.
\newblock \emph{Foundations and Trends{\textregistered} in Computer Graphics
  and Vision}, 6\penalty0 (3--4):\penalty0 185--365, 2011.

\bibitem[Osokin et~al.(2017)Osokin, Bach, and
  Lacoste-Julien]{osokin2017structured}
Osokin, A., Bach, F., and Lacoste-Julien, S.
\newblock On structured prediction theory with calibrated convex surrogate
  losses.
\newblock In \emph{Advances in Neural Information Processing Systems}, pp.\
  302--313, 2017.

\bibitem[Petterson et~al.(2009)Petterson, Yu, McAuley, and
  Caetano]{petterson2009exponential}
Petterson, J., Yu, J., McAuley, J.~J., and Caetano, T.~S.
\newblock Exponential family graph matching and ranking.
\newblock In \emph{Advances in Neural Information Processing Systems}, pp.\
  1455--1463, 2009.

\bibitem[Pillaud-Vivien et~al.(2018{\natexlab{a}})Pillaud-Vivien, Rudi, and
  Bach]{pillaud2017exponential}
Pillaud-Vivien, L., Rudi, A., and Bach, F.
\newblock Exponential convergence of testing error for stochastic gradient
  methods.
\newblock \emph{Proceedings of the Conference on Learning Theory},
  2018{\natexlab{a}}.

\bibitem[Pillaud-Vivien et~al.(2018{\natexlab{b}})Pillaud-Vivien, Rudi, and
  Bach]{pmlr-v75-pillaud-vivien18a}
Pillaud-Vivien, L., Rudi, A., and Bach, F.
\newblock Exponential convergence of testing error for stochastic gradient
  methods.
\newblock In \emph{Proceedings of the Conference On Learning Theory}, pp.\
  250--296, 2018{\natexlab{b}}.

\bibitem[Ramaswamy \& Agarwal(2016)Ramaswamy and Agarwal]{ramaswamy2016convex}
Ramaswamy, H.~G. and Agarwal, S.
\newblock Convex calibration dimension for multiclass loss matrices.
\newblock \emph{Journal of Machine Learning Research}, 17\penalty0
  (1):\penalty0 397--441, 2016.

\bibitem[Ramaswamy et~al.(2013)Ramaswamy, Agarwal, and
  Tewari]{ramaswamy2013convex}
Ramaswamy, H.~G., Agarwal, S., and Tewari, A.
\newblock Convex calibrated surrogates for low-rank loss matrices with
  applications to subset ranking losses.
\newblock In \emph{Advances in Neural Information Processing Systems}, pp.\
  1475--1483, 2013.

\bibitem[Ryser(1963)]{ryser1963combinatorial}
Ryser, H.~J.
\newblock \emph{Combinatorial mathematics}, volume~14.
\newblock American Mathematical Soc., 1963.

\bibitem[Shalev-Shwartz \& Zhang(2013)Shalev-Shwartz and
  Zhang]{shalev2013stochastic}
Shalev-Shwartz, S. and Zhang, T.
\newblock Stochastic dual coordinate ascent methods for regularized loss
  minimization.
\newblock \emph{Journal of Machine Learning Research}, 14\penalty0
  (Feb):\penalty0 567--599, 2013.

\bibitem[Smith(2011)]{smith2011linguistic}
Smith, N.~A.
\newblock Linguistic structure prediction.
\newblock \emph{Synthesis lectures on human language technologies}, 4\penalty0
  (2):\penalty0 1--274, 2011.

\bibitem[Sridharan et~al.(2009)Sridharan, Shalev-Shwartz, and
  Srebro]{sridharan2009fast}
Sridharan, K., Shalev-Shwartz, S., and Srebro, N.
\newblock Fast rates for regularized objectives.
\newblock In \emph{Advances in Neural Information Processing Systems}, pp.\
  1545--1552, 2009.

\bibitem[Taskar et~al.(2004)Taskar, Guestrin, and Koller]{taskar2004max}
Taskar, B., Guestrin, C., and Koller, D.
\newblock Max-margin {M}arkov networks.
\newblock In \emph{Advances in neural information processing systems}, pp.\
  25--32, 2004.

\bibitem[Taskar et~al.(2005)Taskar, Chatalbashev, Koller, and
  Guestrin]{taskar2005learning}
Taskar, B., Chatalbashev, V., Koller, D., and Guestrin, C.
\newblock Learning structured prediction models: A large margin approach.
\newblock In \emph{Proceedings of the 22nd international conference on Machine
  learning}, pp.\  896--903, 2005.

\bibitem[Tsochantaridis et~al.(2005)Tsochantaridis, Joachims, Hofmann, and
  Altun]{tsochantaridis2005large}
Tsochantaridis, I., Joachims, T., Hofmann, T., and Altun, Y.
\newblock Large margin methods for structured and interdependent output
  variables.
\newblock \emph{Journal of machine learning research}, 6\penalty0
  (Sep):\penalty0 1453--1484, 2005.

\bibitem[Valiant(1979)]{valiant1979complexity}
Valiant, L.~G.
\newblock The complexity of computing the permanent.
\newblock \emph{Theoretical computer science}, 8\penalty0 (2):\penalty0
  189--201, 1979.

\bibitem[Viterbi(1967)]{viterbi1967error}
Viterbi, A.
\newblock Error bounds for convolutional codes and an asymptotically optimum
  decoding algorithm.
\newblock \emph{IEEE transactions on Information Theory}, 13\penalty0
  (2):\penalty0 260--269, 1967.

\bibitem[Wainwright \& Jordan(2008)Wainwright and
  Jordan]{wainwright2008graphical}
Wainwright, M.~J. and Jordan, M.~I.
\newblock Graphical models, exponential families, and variational inference.
\newblock \emph{Foundations and Trends{\textregistered} in Machine Learning},
  1\penalty0 (1--2):\penalty0 1--305, 2008.

\bibitem[Zhang(2004)]{zhang2004statistical}
Zhang, T.
\newblock Statistical analysis of some multi-category large margin
  classification methods.
\newblock \emph{Journal of Machine Learning Research}, 5\penalty0
  (Oct):\penalty0 1225--1251, 2004.

\end{thebibliography}
\bibliographystyle{icml2020}

\newpage

\appendix \label{sec:appendix}
\onecolumn

{\Huge\upshape\bfseries\mathversion{bold} Organization of the Appendix }

\begin{itemize}
\item [\textbf{A.}] {\large\bfseries Notation}
\item [\textbf{B.}] {\large\bfseries Geometrical Properties}
\begin{itemize}
    \item [\textbf{B.1.}] {\bfseries Geometry of the loss L} 
    \item [\textbf{B.2.}] {\bfseries Geometry of the loss S} 
    \item [\textbf{B.3.}] {\bfseries Relation between cell complexes} 
    \item [\textbf{B.4.}] {\bfseries Examples} 
\end{itemize}
\item [\textbf{C.}] {\large\bfseries Theoretical properties of the Surrogate}
\begin{itemize}
    \item [\textbf{C.1.}] {\bfseries Fisher consistency} \\
    Here {\bf Theorem~3.2} is derived as {\bf Theorem C.3}.
    \item [\textbf{C.2.}] {\bfseries Comparison inequality and calibration function} 
    \item [\textbf{C.3.}] {\bfseries Characterizing the calibration function for Max-Min Markov Networks} 
    \item [\textbf{C.4.}] {\bfseries Quantitative lower bound} \\
    Here {\bf Theorem~3.3} is derived as {\bf Theorem C.8}.
    \item [\textbf{C.5.}] {\bfseries Computation of the constant for specific losses} 
\end{itemize}
\item [\textbf{D.}] {\large\bfseries Sharp Generalization Bounds for Regularized Objectives} \\
Here {\bf Theorem~3.4} is derived.
\item [\textbf{E.}] {\large\bfseries Max-min margin and dual formulation}
\begin{itemize}
    \item [\textbf{E.1}] {\bfseries Derivation of the Dual Formulation}
    \item [\textbf{E.2}] {\bfseries Computation of the Dual Gap}
\end{itemize}
\item [\textbf{F.}] {\large\bfseries Generalized Block-Coordinate Frank-Wolfe}
\begin{itemize}
    \item [\textbf{F.1.}]{\bfseries General Convergence Result}
    \item [\textbf{F.2.}]{\bfseries Application to $\boldsymbol{\operatorname{M^4N}}$}\\
    Here {\bf Theorem 5.1} is proven using the analysis from section F.1.
\end{itemize}
\item [\textbf{G.}] {\large\bfseries Solving the Oracle with Saddle Point Mirror Prox}
 \begin{itemize}
    \item [\textbf{G.1.}]{\bfseries Saddle Point Mirror Prox (SP-MP}
    \item [\textbf{G.2.}]{\bfseries Max-Min Oracle for Sequences}
    \item [\textbf{G.3.}]{\bfseries Max-Min Oracle for Ranking and Matching}
\end{itemize}
\item [\textbf{H.}] {\large\bfseries Generalization Bounds for $\boldsymbol{\operatorname{M^4N}}$ solved via GBCFW and Approximate Oracle}\\
Here {\bf Theorem 5.2} is proven combining the results from section D and Theorem 5.1.
\end{itemize}

\section{Notation}
In this section we introduce some notation that will be useful in the rest of the appendix.

\textbf{Notation on the structured prediction setting.}
Denote by $\calP(A)$ the set of subsets of the set $A$.
We define the following quantities
\begin{itemize}
    \item[-] \textit{Marginal polytope}: $\calM = \operatorname{hull}(\phi(\calY)) = \{v\in\Rspace{k}~|~v = \sum_{y\in\calY}\alpha_y\phi(y), \alpha\in\Delta_{\calY}\}$.
    \item[-] \textit{Normal cone of $\calM$ at $\mu$}: $\calN_{\calM}(\mu) = \{u\in\Rspace{k}~|~\langle \mu' - \mu, u\rangle\leq 0, \forall\mu'\in\calM\}$.
    \item[-] \textit{Conditional moments}: $\nu(q) = \Expect_{y'\sim q}\phi(y')\in\calM$ where $q\in\operatorname{Prob}(\calY)$.
    \item[-] \textit{Conditional risk}: $\ell(y, \mu)\defeq \Expect_{y'\sim q}L(y, y') = \phi(y)^\top A\mu$.
    \item[-] \textit{Bayes risk}: $\ell(\mu) \defeq \min_{y\in\calY}\ell(y, \mu)$
    \item[-] \textit{Minus Bayes risk}: $\Omega(\mu) \defeq -\ell(\mu) + 1_{\calM}(\mu)$.
    \item[-]\textit{Excess conditional risk}: $\delta\ell(y, \mu) = \ell(y, \mu) - \ell(\mu)$
    \item[-] \textit{Optimal predictor set}: $y^\star(\mu) = \argmin_{y\in\calY}~\ell(y, \mu)\subseteq\calY$.
    \item[-] \textit{Marginal polytope cell complex}: $\calC(\calM) = (y^\star)^{-1}\circ y^\star(\calM)\subset \calP(\calM)$.
\end{itemize}

\textbf{Notation on the max-min surrogate.}
\begin{itemize}
    \item[-] \textit{Partition function}: $\Omega^*(v) = \max_{\mu\in\calM}\ell(\mu) + v^\top \mu$.
    \item[-] \textit{Surrogate loss}: $S(v,y) = \Omega^*(v) - v^\top \phi(y)$.
    \item[-] \textit{Conditional surrogate risk}: $s(v,\mu)\defeq \Expect_{y\sim q}S(v, y) = \Omega^*(v) - v^\top \mu$.
    \item[-] \textit{Bayes surrogate risk}: $s(\mu) \defeq \min_{v\in\Rspace{k}}s(v, \mu) \quad (=\ell(\mu))$.
    \item[-] \textit{Excess surrogate conditional risk}: $\delta s(v, \mu) \defeq s(v, \mu) - s(\mu)$.
    \item[-] \textit{Optimal predictors}: $v^\star(\mu) = \argmin_{v\in \Rspace{k}}~s(v, \mu)\subset\Rspace{k}$.
    \item[-] \textit{Surrogate space cell complex}: $\calC(\Rspace{k}) = v^\star(\calM)\subset \calP(\Rspace{k})$.
\end{itemize}

\section{Geometrical Properties}

In this section, we study the rich geometrical properties of the max-min surrogate construction. The geometrical interpretation provides a valuable intuition on different key mathematical objects appearing in the further analysis needed for the proofs of the main theorems. More precisely, we show that the max-min surrogate loss $S$ defines a partition~$\calC(\Rspace{k})\subset \calP(\Rspace{k})$ of the surrogate space $\Rspace{k}$ which is dual to the partition~$\calC(\calM)\subset \calP(\calM)$ of the marginal polytope~$\calM$ defined by $L$. Moreover, we show that the mapping between those partitions is the subgradient mapping $\partial\Omega$ with inverse $\partial\Omega^*$ (see \cref{fig:appgeometry}). Visualization for binary 0-1 loss, multi-class 0-1 loss, absolute loss for ordinal regression and Hamming loss are provided in \cref{sec:examples}.

\begin{figure}[ht!]
    \centering
    \includegraphics[width=0.7\textwidth]{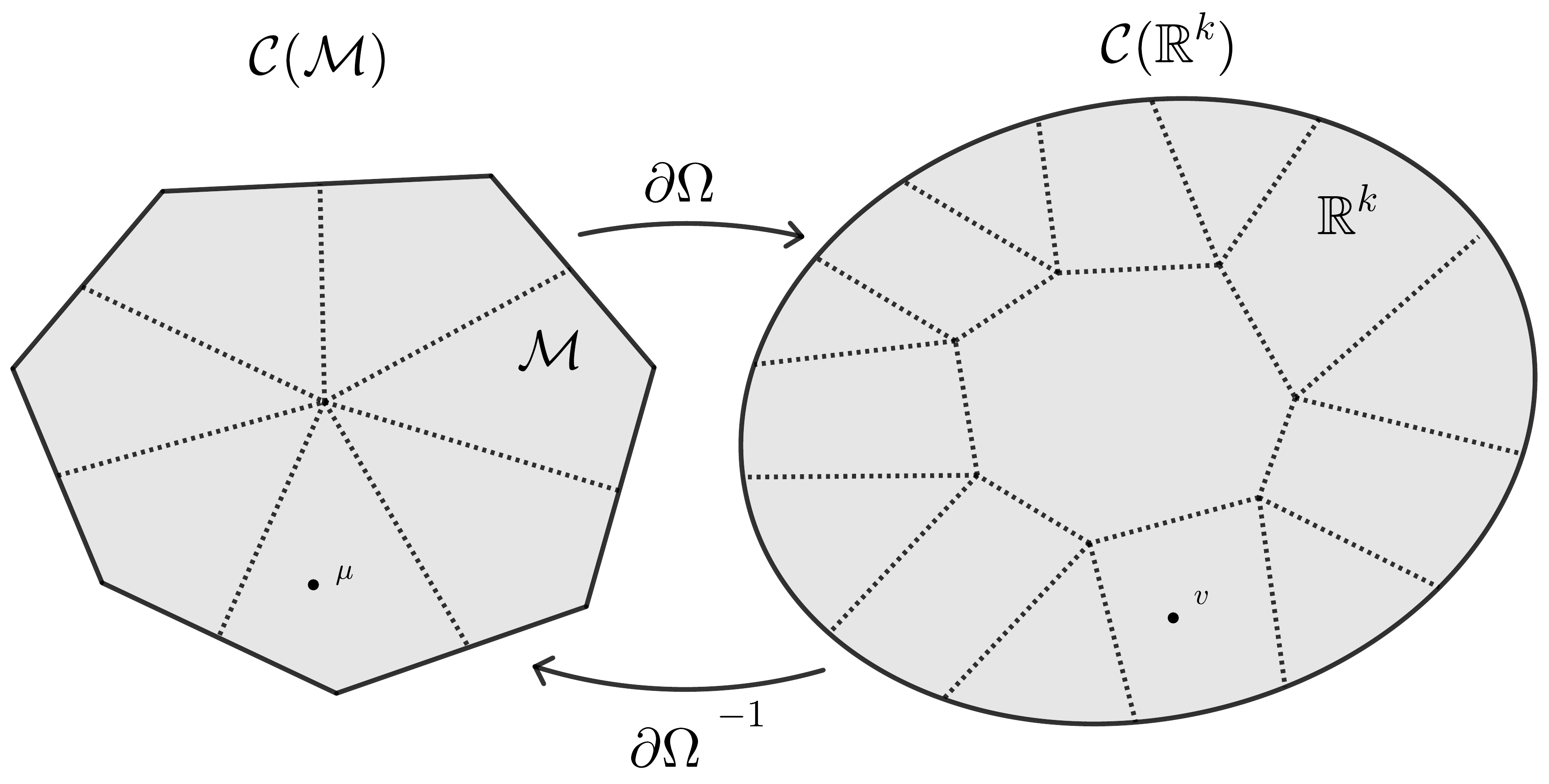}
    \caption{The cell complex $\calC(\calM)$ on the marginal polytope $\calM$ maps to the cell complex $\calC(\Rspace{k})$ on the surrogate space $\Rspace{k}$ through the subgradient mapping of the partition function $\partial\Omega$.}
    \label{fig:appgeometry}
\end{figure}

Following \cite{finocchiaro2019embedding}, we now introduce the definition of a \emph{cell complex}.
\begin{definition}[Cell complex] \label{def:cellcomplex} A cell complex in $\Rspace{k}$ is a set $\calC$ of faces (of dimension $0,\ldots,k$) such that:
\begin{itemize}
    \item[(i)] union to $\Rspace{k}$.
    \item[(ii)] have pairwise disjoint relative interiors.
    \item[(iii)] any nonempty intersection of faces $F,F'$ in $\calC$ is a face of $F$ and $F'$ and an element of $\calC$.
\end{itemize}
\end{definition}

Any convex affine-by-parts function has an associated cell complex defined by considering the polytope corresponding to the epigraph of the function and projecting the faces down to to the domain. Moreover, if $f(v)$ is convex affine-by-parts, the cell complex associated to $f(v)$ and $f(v) + v^\top a$ are the same for any $a$.

\subsection{Geometry of the Loss L}

The convex affine-by-parts function $\Omega(\mu) = -\ell(\mu) + 1_{\calM}(\mu)$ naturally defines a cell complex of its domain $\calM$ (see for instance \cite{ramaswamy2016convex, nowak2019general}). This can be constructed by considering the polyhedra corresponding to the epigraph of $-\ell(\mu)$ and then projecting the faces to $\calM$. Each face corresponds to a different group of active hyperplanes in the definition of $\ell(\mu)$.  The cell complex can be defined as $\calC(\calM) = \{\overline{(y^\star)^{-1}\circ y^\star(\mu)}~|~\mu\in\calM\}\subset\calP(\calM)$, i.e., each face is defined as the set of moments that share the same set of optimal predictors. Note that $\calC(\calM)$ contains faces of 0-dimensions (points) up to faces of $k$-dimensions.

\subsection{Geometry of the Loss S}

Recall that $S(v,y) = \Omega^*(v) - v^\top \mu$ and $\Omega(\mu) = -\ell(\mu) + 1_{\calM}(\mu)$, where $\ell(\mu)$ is concave affine-by-parts. In particular, as $\Omega$ is convex affine-by-parts with compact domain, then $\Omega^*$ is convex affine-by-parts with full-dimensional domain $\Rspace{k}$. The projection of the faces of the convex polyhedron defined as the epigraph of $\Omega^*$ defines a cell complex $\calC(\Rspace{k})\subset\calP(\Rspace{k})$ in the (unbounded) vector space $\Rspace{k}$. The cell complex defined by $\Omega^*(v)$ is the same as the one defined by $s(v,\mu) = \Omega^*(v) - v^\top \mu$ for every $\mu\in\calM$. The faces of $\calC(\Rspace{k})$ can be written as $v^\star(\mu) = \argmin_{v\in \Rspace{k}}~s(v, \mu)\in \calP(\Rspace{k})$ for a certain $\mu\in\calM$, i.e., the faces are the minimizers of the conditional surrogate risk. Hence, we can write in a compact form $v^\star(\mu)\in\calC(\Rspace{k})$.

\subsection{Relation between Cell Complexes}
Recall that $\calC(\calM)$ is generated by projecting the faces of the epigraph of $\Omega$ while $\calC(\Rspace{k})$ is generated by projecting the faces of the epigraph of $\Omega^*$. 
The subgradients are well-defined in the cell complexes and define a bijection between them:
\begin{equation*}
    \partial\Omega:\calC(\calM)\rightarrow\calC(\Rspace{k}),\hspace{1cm}
    \partial\Omega^*:\calC(\Rspace{k})\rightarrow\calC(\calM),\hspace{1cm} \partial\Omega^* = (\partial\Omega)^{-1}.
\end{equation*}
Moreover, we have that 
\begin{align*}
    \operatorname{dim}(\partial\Omega(F)) &= \operatorname{dim}(\calM) - \operatorname{dim}(F), \hspace{1cm} &\forall F\in \calC(\calM) \\
    \operatorname{dim}(\partial\Omega^*(F')) &= \operatorname{dim}(\calM) - \operatorname{dim}(F'), \hspace{1cm} &\forall F'\in \calC(\Rspace{k}), \\
\end{align*}
where $F,F'$ are faces of $\calC(\calM), \calC(\Rspace{k})$, respectively.
\subsection{Examples}\label{sec:examples} Let's now provide some concrete examples of cell complexes and the associated mapping subgradient mapping for several classical tasks.

\paragraph{Binary Classification.} The output space is $\calY = \{-1, 1\}$. The loss is $L(y, y') = 1(y\neq y')$ with affine decomposition~$a=1$, $\phi(1) = (1, 0)^\top $, $\phi(-1) = (0, 1)^\top $ and $A = -I_{2\times 2}$. The marginal polytope is $\calM=\Delta_2$.

\begin{equation*}
    \Omega(p) = -\min(p, 1-p) + 1_{\Delta_2}(p), \hspace{0.5cm}
    \Omega^*(v) = \max(|v|, 1/2),  \hspace{0.5cm} S(v,y) = \Omega^*(v) - yv.
\end{equation*}

\begin{figure}[ht!]
    \centering
    \includegraphics[width=0.7\textwidth]{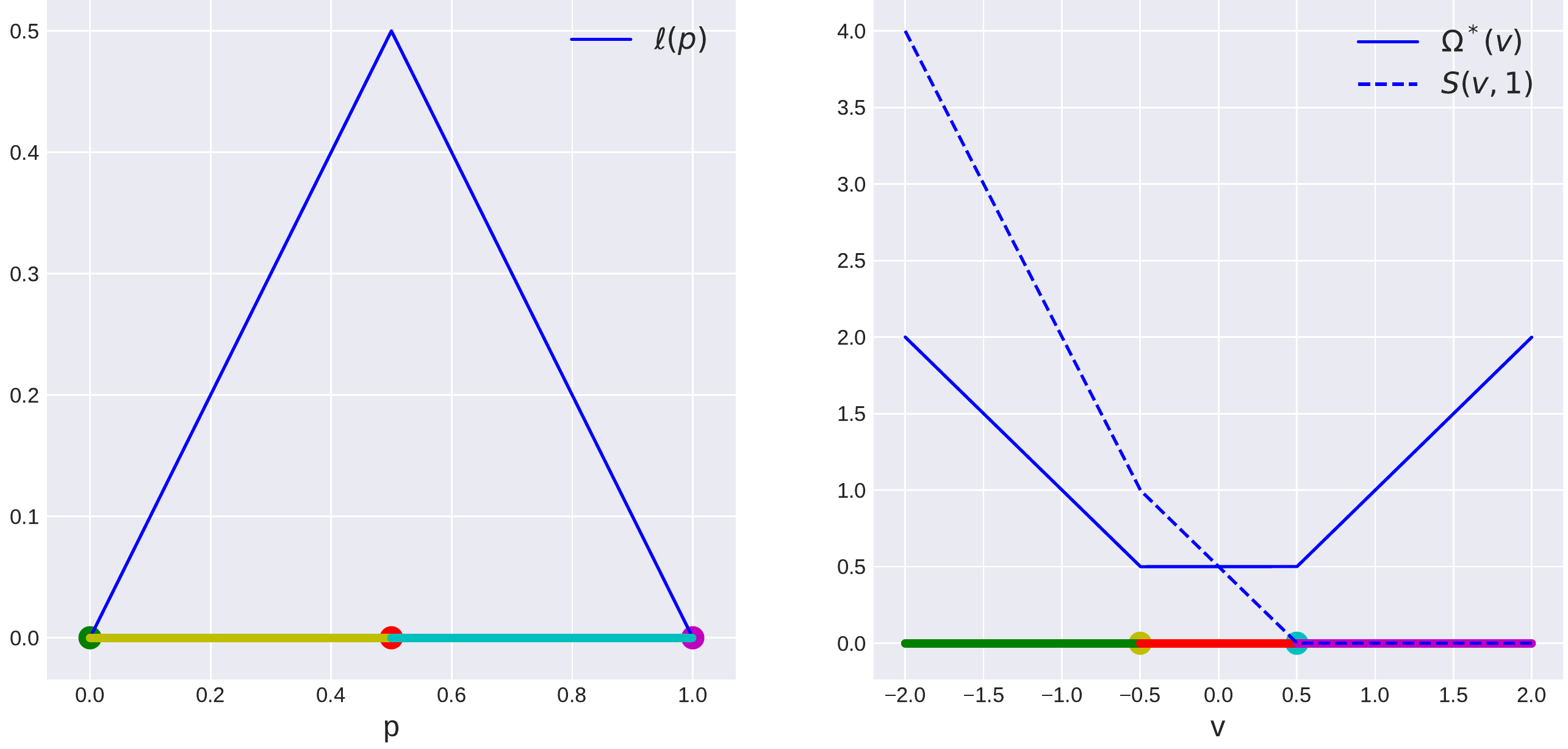}
    \caption{Binary 0-1 loss. \textbf{Left:} The Bayes risk $-\Omega$ is a concave polyhedral function defined in $\Delta_2=[0,1]$. The faces of the induced cell-complex are the 0-dimensional faces $\{0\}, \{1/2\}, \{1\}$ and the 1-dimensional faces $[0, 1/2]$, $[1/2, 1]$. \textbf{Right:} The partition function is a convex polyhedral function defined in $\Rspace{}$. The faces of the induced cell-complex are the 0-dimensional faces~$\{-1/2\}$, $\{1/2\}$ and the 1-dimensional faces $(-\infty, -1/2]$, $[1/2, 1/2]$, $[1/2, +\infty)$. }
    \label{fig:appbinary}
\end{figure}

See \cref{fig:appbinary}. The mapping between cells is

\begin{equation*}
    \begin{array}{lll}
       & \partial\Omega(\{0\}) = (-\infty, -1/2], \hspace{0.5cm} & \partial\Omega^*((-\infty, -1/2]) = \{0\}  \\
        & \partial\Omega(\{1/2\}) = [-1/2,1/2], \hspace{0.5cm} & \partial\Omega^*([-1/2,1/2]) = \{1/2\}  \\
     & \partial\Omega(\{1\}) = [1/2, +\infty), \hspace{0.5cm} & \partial\Omega^*([1/2, +\infty)) = \{1\}  \\
        & \partial\Omega([0, 1/2]) = \{-1/2\}, \hspace{0.5cm} & \partial\Omega^*(\{-1/2\}) = [0, 1/2]  \\
        & \partial\Omega([1/2, 1]) = \{1/2\}, \hspace{0.5cm} & \partial\Omega^*(\{1/2\}) = [1/2, 1]  \\
    \end{array}
\end{equation*}

\paragraph{0-1 Multi-class Classification.} The output space is $\calY = [k] = \{1, \cdots, k\}$. The loss is $L(y, y') = 1(y\neq y')$ with affine decomposition $a=1$, $\phi(y) = e_y$ and $A = -I_{k\times k}$, where $e_y\in\Rspace{k}$ is the $y$-th vector of the canonical basis in $\Rspace{k}$. The marginal polytope is the $k$-dimensional simplex $\calM=\Delta_k$.

In this case, $\ell(p) = 1 - \|p\|_{\infty}$ and so
\begin{equation*}
    \Omega(p) = \|p\|_{\infty} - 1 + 1_{\Delta_k}(p), \hspace{0.5cm}
    \Omega^*(v) = 1 + \max_{j\in[k]}\bigg\{\frac{1}{j}\sum_{r=1}^j v_{(r)} -\frac{1}{j}\bigg\},
\end{equation*}
where $v_{(1)}\geq\cdots\geq v_{(k)}$ (see \cref{fig:appmulticlass}).

\begin{figure}[ht!]
    \centering
    \includegraphics[width=0.7\textwidth]{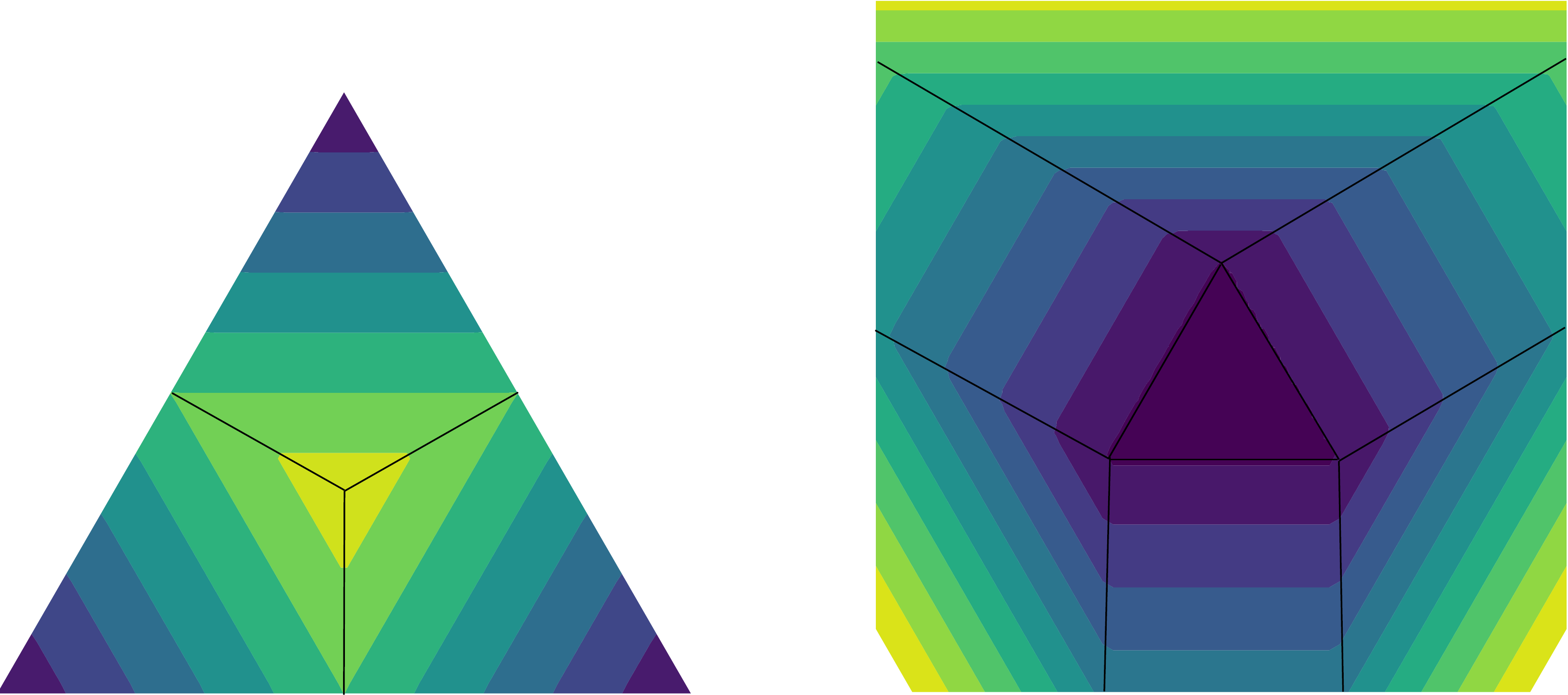}
    \caption{Multi-class 0-1 loss ($k=3$). \textbf{Left:} The Bayes risk $-\Omega$ has a pyramid shape centered at the simplex. The cell-complex $\calC(\Delta_k)$ has $2^k$ 0-dimensional faces (points) and $k$ full-dimensional faces. In the figure, the set of points are the center point, the 3 vertices of the triangle and the 3 middle points in the triangle face. The 3 full-dimensional faces are the 3 colored zones.  \textbf{Right:} The partition function~$\Omega^*$ is a convex polyhedral function. The cell-complex has $k$ 0-dimensional faces (points) and $2^k$ full-dimensional faces. In the figure, the set of points are the 3 vertices of the triangle in the center and the full-dimensional faces are the colored zones.}
    \label{fig:appmulticlass}
\end{figure}

Note that the subgradient mapping $\partial\Omega$ sends the $2^k$ 0-dimensional faces (points) and the full-dimensional faces of $\calC(\Delta_k)$ to the full-dimensional faces and 0-dimensional faces of $\calC(\Rspace{k-1})$, respectively.

\paragraph{Ordinal Regression.}  The output space is the same as for multiclass classification, but in this case there is an implicit ordering between outputs: $1\prec 2\prec \cdots\prec k$. This is encoded using the absolute difference loss $L(y, y') = |y-y'|$. We consider the affine decomposition $\phi(y) = e_y\in\Rspace{k}$, $A = (|i-j|)_{i,j\in[k]^2}$ and $a=0$. It is possible to obtain a closed form expression for the partition function (see Thm. 6 by \cite{fathony2018consistent}):
\begin{equation*}
    \Omega^*(v) = \frac{1}{2}\max_{i,j\in[k]}~v_i + v_j + j - i.
\end{equation*}

In \cref{fig:appordinal} we plot the Bayes risk and the partition function for the ordinal loss. Note that that the topology of the cell-complex is different from the previous example.

\begin{figure}[ht!]
    \centering
    \includegraphics[width=0.7\textwidth]{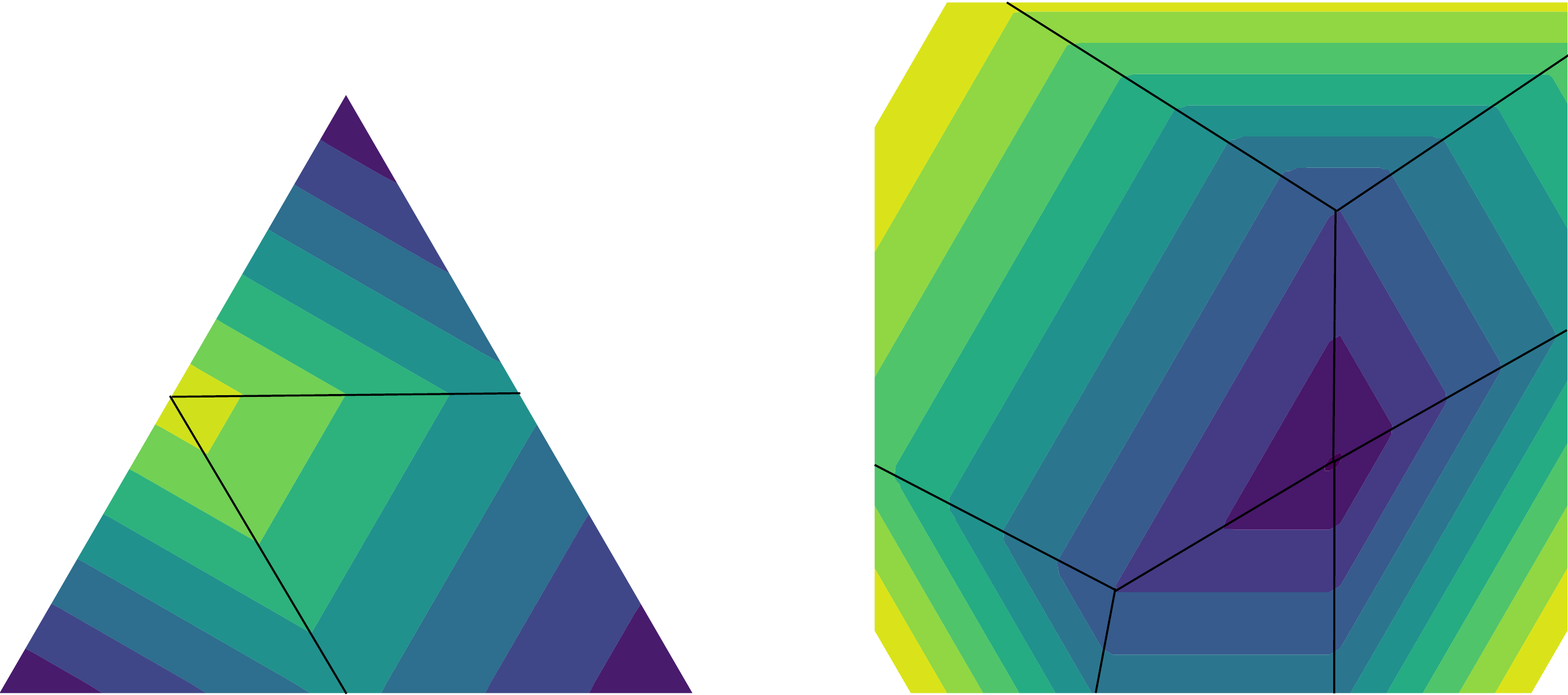}
    \caption{Absolute difference loss ($k=3$). \textbf{Left:} The Bayes risk $-\Omega$ has an asymmetrical pyramid shape with the tip in one face of the simplex. \textbf{Right:} The partition function $\Omega^*$ has a different topology than the one from multi-class.}
    \label{fig:appordinal}
\end{figure}

\paragraph{Multi-label Classification with Hamming Loss.} This corresponds to \cref{ex:factorgraph} with unary potentials. Let $\calY=\Pi_{m=1}^M\calY_m$ with $\calY_m=\{1,\dots, R\}$. 
We consider the Hamming loss defined as an average of multi-class losses: $L(y,y') = \frac{1}{M}\sum_{m=1}^M1(y_m\neq y_m')$. The marginal polytope factorizes as $\calM = \Pi_{m=1}^M\Delta_R$. The Bayes risk decomposes additively as the sum of the Bayes risks of the individual multi-class losses and the partition function decomposes analogously. In \cref{fig:apphamming} we plot the Bayes risk and the partition function for $R=M=2$.

\begin{figure}[ht!]
    \centering
    \includegraphics[width=0.7\textwidth]{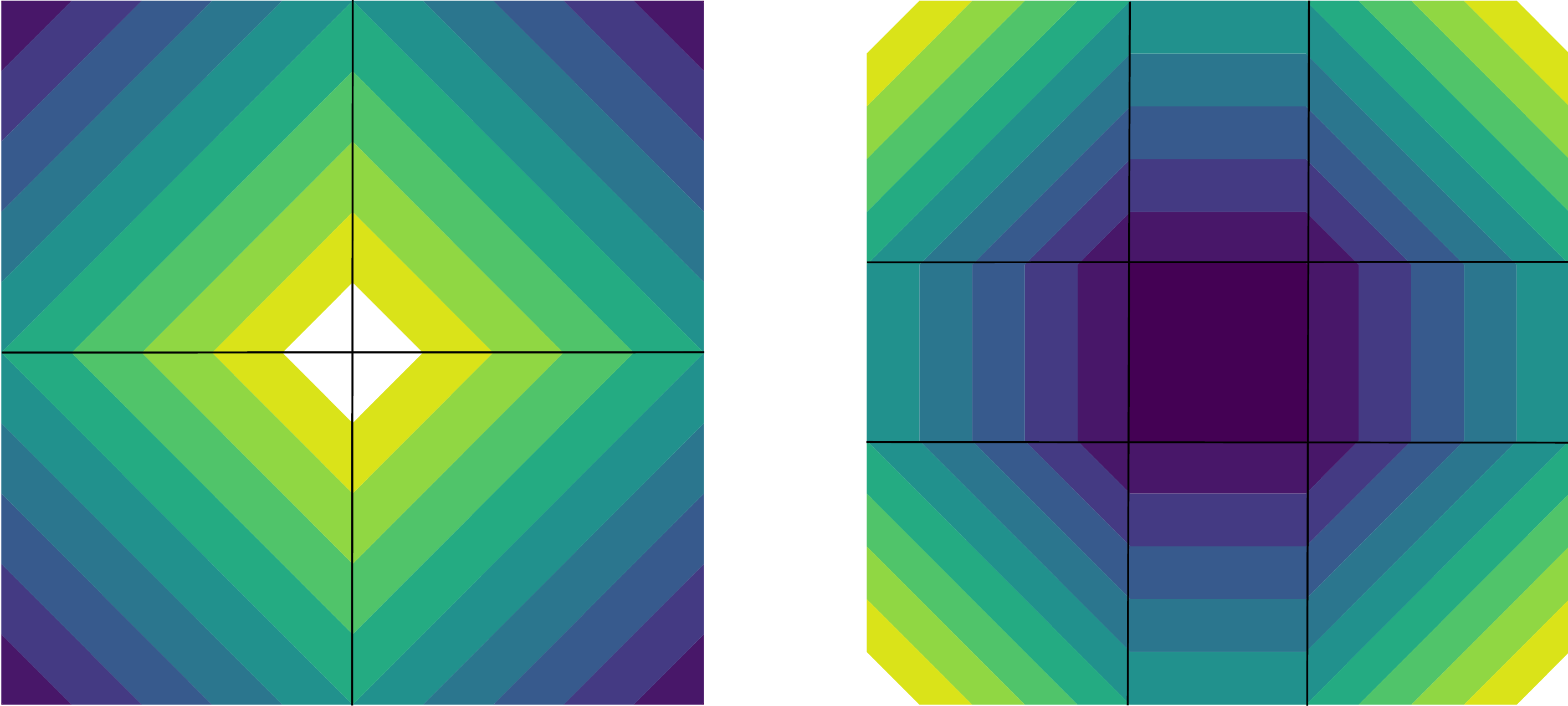}
    \caption{Hamming loss for $(R=M=2)$. \textbf{Left:} The marginal polytope is the cube $\calM=[0,1]^2$ and the Bayes risk $-\Omega$ has a pyramid shape centered in the cube. \textbf{Right:} The partition function $\Omega^*$.}
    \label{fig:apphamming}
\end{figure}

\section{Theoretical Properties of the Surrogate}\label{app:theorysurrogate}

The goal of this section is to prove the two theoretical requirements for the surrogate method. These are \emph{Fisher consistency~(1)} and a \emph{comparison inequality~(2)}:
\begin{equation*}
    \begin{array}{ll}
    \text{(1)} & \calE(f^\star) = \calE(d\circ g^\star)  \\
    \text{(2)} & \zeta(\calE(d\circ g) - \calE(f^\star)) \leq \calR(g) - \calR(g^\star).
\end{array}
\end{equation*}
for all measurable $g:\calX\xrightarrow{}\calH$, where $\zeta:\Rspace{}\xrightarrow{}\Rspace{}$ is such that $\zeta(\varepsilon)\to 0$ if $\varepsilon\to 0$. Fisher consistency ensures that the optimum of the surrogate loss $g^\star$ provides the Bayes optimum $f^\star$ of the problem. However, in practice the optimum of the surrogate is never attained and so one wants to control how close $f = d\circ g$ is to $f^\star$ in terms of the estimation error of~$g$ to~$g^\star$. 
The comparison inequality gives this quantification by relating the excess expected risk $\calE(d\circ g) - \calE(f^\star)$ to the excess expected surrogate risk $\calR(g) - \calR(g^\star)$, which allows to translate rates from the surrogate problem to the original problem.

Let's start first by showing that $s(\mu) = \ell(\mu)$ for all $\mu\in\calM$, i.e., that the minimizers of the conditional surrogate risks coincide.
\begin{lemma}\label{lem:bayesrisks}
    The Bayes risk and the surrogate Bayes risk are the same:
    \begin{equation*}
        s(\mu) = \min_{v\in\Rspace{k}}~s(v,\mu) = \min_{y'\in\calY}~\ell(y, \mu) = \ell(\mu), \hspace{1cm} \forall \mu\in\calM.
    \end{equation*}
\end{lemma}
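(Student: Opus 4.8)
The plan is to identify $s(\mu)=\min_{v}s(v,\mu)$ with minus the Fenchel biconjugate of $\Omega$ and then invoke the Fenchel–Moreau theorem. Recall that for every $\mu\in\calM$ one can pick $q\in\operatorname{Prob}(\calY)$ with $\nu(q)=\mu$, and then $s(v,\mu)=\Expect_{y\sim q}S(v,y)=\Omega^*(v)-v^\top\mu$, where $\Omega^*$ is by construction the Fenchel conjugate of $\Omega=-\ell+1_{\calM}$. Hence
\begin{equation*}
    s(\mu)=\min_{v\in\Rspace{k}}\bigl(\Omega^*(v)-v^\top\mu\bigr)=-\sup_{v\in\Rspace{k}}\bigl(v^\top\mu-\Omega^*(v)\bigr)=-\Omega^{**}(\mu).
\end{equation*}

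The only point requiring care is to justify $\Omega^{**}=\Omega$. For this I would check that $\Omega$ is proper, convex and lower semicontinuous: the term $\mu\mapsto-\ell(\mu)=\max_{y'\in\calY}\phi(y')^\top(-A)\mu$ is a finite maximum of linear functionals, hence finite-valued, continuous and convex on all of $\Rspace{k}$, while $1_{\calM}$ is the indicator of the nonempty compact convex polytope $\calM=\hull(\phi(\calY))$, hence closed, proper and convex; the sum inherits these properties. Fenchel–Moreau then gives $\Omega^{**}=\Omega$.

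Combining the two steps, for $\mu\in\calM$ we obtain $s(\mu)=-\Omega(\mu)=\ell(\mu)-1_{\calM}(\mu)=\ell(\mu)$, and $\ell(\mu)=\min_{y\in\calY}\ell(y,\mu)=\min_{y\in\calY}\phi(y)^\top A\mu$ holds by the very definition of the Bayes risk, which is the asserted chain of equalities. I do not anticipate any genuine obstacle here, since the statement is essentially biconjugation in disguise; if one prefers a self-contained argument avoiding Fenchel–Moreau, note that $s(v,\mu)=\Omega^*(v)-v^\top\mu\geq\ell(\mu)$ for every $v$ (take $\mu'=\mu$ in the supremum defining $\Omega^*$), and that equality is attained at any $v\in\partial\Omega(\mu)$, which is nonempty for every $\mu\in\calM$ because $\Omega$ is a polyhedral convex function (its subdifferential is nonempty at each point of its domain, including boundary points of $\calM$), thereby also showing the infimum over $v$ is a minimum.
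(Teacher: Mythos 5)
Your proof is correct and follows essentially the same route as the paper, which also reduces the claim to $\min_{v}\bigl(\Omega^*(v)-v^\top\mu\bigr)=-\Omega(\mu)$ via biconjugation. You simply make explicit two details the paper leaves implicit: the verification that $\Omega$ is proper, convex and lower semicontinuous so that $\Omega^{**}=\Omega$, and the attainment of the infimum via nonemptiness of $\partial\Omega(\mu)$ for the polyhedral function $\Omega$.
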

\begin{proof}
Note that $s(\mu) = \min_{v\in\Rspace{k}}~s(v,\mu) = \min_{v\in\Rspace{k}}~\Omega^*(v) - v^\top\mu = -\Omega = \ell(\mu) - 1_{\calM}(\mu)$.
\end{proof}
Note that this is not the case for smooth surrogates. It was noted by \cite{finocchiaro2019embedding} (see Prop. 1 and 2) that consistent polyhedral surrogates necessarily satisfy the property of matching Bayes risks.

\subsection{Fisher Consistency} \label{app:fisherconsistency}
The following \cref{th:minimzersurrogate} characterizes the form of the exact minimizer of the conditional surrogate risk $s(v, \mu)$. 
\begin{proposition} \label{th:minimzersurrogate}
    Let $\mu\in\calM$ and $y^\star(\mu) = \argmin_{y\in\calY}\phi(y)^\top A\mu$ be the set of optimal predictors. Then, we have that
    \begin{equation}
        \operatorname{hull}(-A^\top \phi(y))_{y\in y^\star(\mu)} + \calN_{\calM}(\mu) = \argmin_{v\in \Rspace{k}} s(v, \mu).
    \end{equation}
\end{proposition}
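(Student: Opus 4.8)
The plan is to characterize $\argmin_{v} s(v,\mu)$ via the optimality condition $0 \in \partial_v s(v,\mu)$. Since $s(v,\mu) = \Omega^*(v) - v^\top \mu$ is convex in $v$, a vector $v$ minimizes $s(\cdot,\mu)$ if and only if $\mu \in \partial \Omega^*(v)$, which by the Fenchel--Young equality is equivalent to $v \in \partial \Omega(\mu)$. Thus the set of minimizers is exactly $\partial \Omega(\mu)$, and the proposition reduces to computing the subdifferential of $\Omega(\mu) = -\ell(\mu) + 1_{\calM}(\mu)$ at $\mu \in \calM$. First I would invoke the sum rule for subdifferentials: since $-\ell$ is finite-valued (polyhedral) everywhere and $1_{\calM}$ is the indicator of $\calM$, under a standard constraint qualification (which holds here because $\operatorname{dom}(-\ell) = \Rspace{k}$ meets $\relinterior(\calM)$), one has $\partial \Omega(\mu) = \partial(-\ell)(\mu) + \partial 1_{\calM}(\mu) = \partial(-\ell)(\mu) + \calN_{\calM}(\mu)$.

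Next I would compute $\partial(-\ell)(\mu)$. Recall $\ell(\mu) = \min_{y \in \calY} \phi(y)^\top A \mu$, so $-\ell(\mu) = \max_{y \in \calY} \bigl(-A^\top \phi(y)\bigr)^\top \mu$ is a pointwise maximum of finitely many linear functions. By Danskin's theorem (or the standard subdifferential formula for a max of affine functions), $\partial(-\ell)(\mu) = \operatorname{hull}\bigl\{-A^\top \phi(y) : y \in \argmax_{y}(-A^\top\phi(y))^\top\mu\bigr\}$. The active index set $\argmax_y (-A^\top \phi(y))^\top \mu = \argmin_y \phi(y)^\top A \mu$ is precisely $y^\star(\mu)$ by definition. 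Combining the two pieces gives $\partial \Omega(\mu) = \operatorname{hull}\bigl(-A^\top \phi(y)\bigr)_{y \in y^\star(\mu)} + \calN_{\calM}(\mu)$, which matches the claimed expression.

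The main obstacle is being careful about the constraint qualification in the sum rule and the interplay between the max over $\calY$ and the boundary of $\calM$: when $\mu \in \partial \calM$, the function $-\ell$ is still differentiable-in-the-polyhedral-sense on all of $\Rspace{k}$ (it is defined by the finite family $\{-A^\top\phi(y)\}_{y\in\calY}$ independently of $\calM$), so its subdifferential formula is unaffected, while $\calN_{\calM}(\mu)$ absorbs all the boundary effects. I would therefore explicitly note that $\relinterior(\operatorname{dom}(-\ell)) \cap \relinterior(\calM) = \relinterior(\calM) \neq \emptyset$, so Rockafellar's sum rule applies verbatim, and that the minimizer characterization $\argmin_v s(v,\mu) = \partial\Omega(\mu)$ is exactly the Fenchel-duality fact already used in the sketch of \cref{th:fisherconsistency}. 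The remaining verification — that $\Omega^{**} = \Omega$ so that $\partial\Omega^* = (\partial\Omega)^{-1}$ on the relevant sets — follows because $\Omega$ is closed proper convex (a finite max of linear functions plus a closed polytope indicator), so no subtlety arises there.
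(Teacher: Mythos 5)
Your proposal is correct and follows essentially the same route as the paper's proof: identify $\argmin_v s(v,\mu)$ with $\partial\Omega(\mu)$ via Fenchel duality, then compute $\partial\Omega(\mu)$ as the convex hull of the active gradients $-A^\top\phi(y)$, $y\in y^\star(\mu)$, plus the normal cone $\calN_{\calM}(\mu)$. You are simply more explicit than the paper about the technical prerequisites (the sum rule with its constraint qualification, the max-of-affine subdifferential formula, and the closedness of $\Omega$ needed for $\partial\Omega^*=(\partial\Omega)^{-1}$), all of which hold here since everything is polyhedral.
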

\begin{proof}
The proof consists in noticing that $\operatorname{hull}(-A^\top \phi(y))_{y\in y^\star(\mu)}$ is a subgradient at $\mu$ of the non-smooth convex function~$\Omega = -\ell(\mu) + 1_{\calM}(\mu)$ with compact domain $\calM$. That is,
\begin{align*}
    \Omega(\mu) &= -\min_{y'\in\calY}\phi(y')^\top A\mu + 1_{\calM}(\mu) \\
    &= -\phi(y)^\top A\mu + 1_{\calM}(\mu), \hspace{1cm} y \in y^\star(\mu).
\end{align*}
The subgradient reads $\partial \Omega(\mu) = -\operatorname{hull}(A^\top \phi(y))_{y\in y^\star(\mu)} + \calN_\calM(\mu)$, where $ \calN_\calM(\mu)$ is the normal cone of $\calM$ at the point~$\mu$. Then, using Fenchel duality we have that
\begin{align*}
   \partial \Omega(\mu) = \argmin_{v\in \Rspace{k}}~\Omega^*(v) - v^\top \mu
   = \argmin_{v\in \Rspace{k}} s(v, \mu).
\end{align*}
\end{proof}

Let $\rho(\cdot|x)$ be the conditional distribution of outputs and $\mu(x)= \Expect_{y\sim \rho(\cdot|x)}\phi(y)$. If we assume that the set of points $x\in\calX$ for which $\calN_\calM(\nu(x))\neq \{0\}$ has measure zero, then we have that $g^\star(x) \in -\operatorname{hull}(A^\top \phi(y))_{y\in y^\star(\rho(\cdot|x))}$ almost-surely. Thus, we can write $g^\star(x) = -\sum_{y\in y^\star(\rho(\cdot|x))}\alpha_yA^\top\phi(y)$ with $\sum_{y\in y^\star(\rho(\cdot|x))}\alpha_y=1$. We have Fisher consistency as 
\begin{equation*}
    f^\star(x)\in\argmax_{y'\in\calY}~\phi(y')^\top g^\star(x) = \argmin_{y'\in\calY}\sum_{y\in y^\star(\rho(\cdot|x))}\alpha_yL(y', y).
\end{equation*}

\subsection{Comparison Inequality and Calibration Function} \label{app:comparisoninequality}

The goal of this section is to explicitly compute a comparison inequality. We will show that the relation between both excess risks is linear and that the constants appearing scale nicely with the natural dimension of the structured problem and not with the total number of possible outputs $|\calY|$ which can potentially be exponential.

The main object of study will be the so-called \emph{calibration function}, which is defined as the `worst' comparison inequality between both excess conditional risks.
\begin{definition}[Calibration function \cite{osokin2017structured}]\label[definition]{def:calibrationfunction}
    The calibration function $\zeta:\Rspace{}_{+}\longrightarrow\Rspace{}_{+}$ is defined for~$\varepsilon\geq 0$ as the infimum of the excess conditional surrogate risk when the conditional risk is at least $\varepsilon$:
    \begin{equation*}\label{eq:optimalcalibration}
        \zeta(\varepsilon) = \inf\delta s( v, \mu)\quad \text{such that} \quad
        \delta\ell(d\circ v, \mu)
        \geq\varepsilon,~\mu\in\calM,~ v\in \Rspace{k}.
\end{equation*}
We set $\zeta(\varepsilon)=\infty$ when the feasible set is empty.
\end{definition}
Note that $\zeta$ is non-decreasing in $[0, +\infty)$, not necessarily convex 
(see Example 5 by \cite{bartlett2006convexity})
and also~$\zeta(0)=0$. Note that a larger $\zeta$ is better because we want a large $\delta s( v,\mu)$ to incur small $\delta\ell(d\circ v,\mu)$. The following \cref{th:calibrationrisks} justifies \cref{def:calibrationfunction}.
\begin{theorem}[Comparison inequality in terms of calibration function \cite{osokin2017structured}]\label[theorem]{th:calibrationrisks} Let $\bar{\zeta}$ be a convex lower bound of $\zeta$.
We have 
\begin{equation}\label{eq:riskcalibration}
\bar{\zeta}(\ERL(d\circ g) - \ERL(\fstar))\leq \ERS(g) - \ERS(\gstar)
\end{equation}
for all $ g:\calX\rightarrow \Rspace{k}$.
The tightest convex lower bound $\bar{\zeta}$ of $\zeta$ is its lower convex envelope which is defined by the Fenchel bi-conjugate $\zeta^{**}$.
\end{theorem}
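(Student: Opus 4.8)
The plan is to establish the inequality pointwise in the conditioning variable $x$ and then integrate, using the convexity of $\bar\zeta$ via Jensen's inequality. First I would record the standard "pointwise" reduction: by the affine loss decomposition \eqref{eq:lossdecomposition} (with the additive constant dropped), for every measurable $g:\calX\to\Rspace{k}$ both risks split as expectations of the conditional quantities from the notation section, namely $\calE(d\circ g)=\Expect_x\,\ell(d\circ g(x),\mu(x))$ and $\calR(g)=\Expect_x\,s(g(x),\mu(x))$, where $\mu(x)=\Expect_{y\sim\rho(\cdot|x)}\phi(y)\in\calM$. Passing the infimum inside the expectation via a measurable selection of pointwise minimizers — which exists in closed form by \cref{th:minimzersurrogate} for the surrogate and is the usual Bayes predictor $f^\star$ for the loss — gives $\calE(f^\star)=\Expect_x\,\ell(\mu(x))$ and $\calR(g^\star)=\Expect_x\,s(\mu(x))$. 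Combining with \cref{lem:bayesrisks}, which states $s(\mu)=\ell(\mu)$ on $\calM$, I obtain
\begin{equation*}
\calE(d\circ g)-\calE(f^\star)=\Expect_x\,\delta\ell(d\circ g(x),\mu(x)), \qquad \calR(g)-\calR(g^\star)=\Expect_x\,\delta s(g(x),\mu(x)).
\end{equation*}

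Next I would invoke the definition of the calibration function directly. Fixing $x$ and writing $\varepsilon_x\defeq\delta\ell(d\circ g(x),\mu(x))$, the pair $(v,\mu)=(g(x),\mu(x))$ is feasible in \cref{def:calibrationfunction} at level $\varepsilon_x$, so $\zeta(\varepsilon_x)\le\delta s(g(x),\mu(x))$; this also shows $\zeta(\varepsilon_x)<\infty$, since if $\zeta$ were infinite at some level its feasible set would be empty and no $\delta\ell(d\circ v,\mu)$ could reach that level. As $\bar\zeta\le\zeta$ pointwise, $\bar\zeta(\varepsilon_x)\le\delta s(g(x),\mu(x))$ for every $x$. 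Taking expectations and then applying Jensen's inequality to the convex function $\bar\zeta$,
\begin{equation*}
\bar\zeta\big(\calE(d\circ g)-\calE(f^\star)\big)=\bar\zeta\big(\Expect_x\,\varepsilon_x\big)\le\Expect_x\,\bar\zeta(\varepsilon_x)\le\Expect_x\,\delta s(g(x),\mu(x))=\calR(g)-\calR(g^\star),
\end{equation*}
which is \eqref{eq:riskcalibration}. For the last assertion I would use standard convex analysis: the largest convex minorant of $\zeta$ is its lower convex envelope, which equals the Fenchel biconjugate $\zeta^{**}$ (the pointwise supremum of all affine functions below $\zeta$); since $\zeta^{**}$ is itself an admissible $\bar\zeta$, it yields the tightest inequality of this form.

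The only genuinely technical point is the pointwise decomposition in the first paragraph — one must check that both expected risks commute with conditioning on $x$ and that the Bayes predictor $f^\star$ and the surrogate-optimal $g^\star$ can be taken as measurable functions of $x$ so that the infimum moves inside the expectation; here the explicit expression $g^\star(x)=-A^\top\phi(f^\star(x))$ from the Fisher consistency analysis makes the selection concrete. Everything after that is a one-line consequence of the definition of $\zeta$ and the convexity of $\bar\zeta$.
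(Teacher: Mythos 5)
Your proposal is correct and follows essentially the same route as the paper's proof: the pointwise bound $\zeta(\delta\ell(d\circ g(x),\mu(x)))\leq \delta s(g(x),\mu(x))$ from the definition of the calibration function, combined with $\bar\zeta\leq\zeta$ and Jensen's inequality applied to the convex minorant $\bar\zeta$. The extra care you take with the conditional decomposition of the risks (via \cref{lem:bayesrisks} and the measurable selection of $f^\star$, $g^\star$) is left implicit in the paper but is exactly the right justification.
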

\begin{proof} ~Note that by the definition of the calibration function, we have that
\begin{equation}\label{th:calibrationbayes}
    \zeta(\delta\ell(d\circ  g( x),\mu(x))) \leq \delta s( g( x),\mu(x)),
\end{equation}
where $\mu(x) = \Expect_{y'\sim\rho(\cdot|x)}\phi(y')$. The comparison between risks is then a consequence of Jensen's inequality:
\begin{align*}
    \bar{\zeta}(\ERL(d\circ  g) - \ERL(\fstar))
   &= \bar{\zeta}(\Expect_{ x\sim\rho_{\calX}} \delta\ell(d\circ  g( x),\mu(x))) && \\
   &\leq \Expect_{ x\sim\rho_{\calX}} \bar{\zeta}(\delta\ell(d\circ  g(x),\mu(x))) && (\text{Jensen ineq.})\\
   &\leq \Expect_{\sim\rho_{\calX}} \zeta(\delta\ell(d\circ  g( x),\mu(x))) && (\bar{\zeta}\leq\zeta) \\
   &\leq \Expect_{ x\sim\rho_{\calX}} \delta s( g( x),\mu(x)) && \\
   & = \ERS( g) - \ERS(\gstar). &&
\end{align*}
\end{proof}

\subsection{Characterizing the Calibration Function for Max-Min Margin Markov Networks}\label{app:calibrationfunction-characterization}

Following \cite{osokin2017structured}, we write the calibration function in terms of pairwise interactions.

\begin{lemma}[Lemma 10]\label{lem:calibrationfunction-split}
    We can re-write the calibration function $\zeta(\varepsilon)$ as
    \begin{equation*}
     \zeta(\varepsilon) = \min_{ y\neq y'}~\zeta_{ y, y'}(\varepsilon),   
    \end{equation*}
     where
     \begin{equation}\label{eq:zetayy}
         \zeta_{ y, y'}(\varepsilon) = \left\{\begin{array}{llr}
             \min_{v,\mu\in \Rspace{k}} & \delta s( v, \mu)  &  \\
             \text{s.t} & \delta\ell( y', \mu) \geq  \varepsilon & (\varepsilon-\text{suboptimality}) \\
             & y = y^\star(\mu) & (\text{optimal prediction}) \\
             & y' = d\circ v & (\text{prediction}) \\
             & \mu\in\calM &  \\
         \end{array}\right.
     \end{equation}
\end{lemma}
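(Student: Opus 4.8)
I would prove \cref{lem:calibrationfunction-split} by showing that the feasible region of the optimization problem defining $\zeta(\varepsilon)$ decomposes according to which pair $(y, y')$ realizes the roles of ``optimal prediction'' and ``actual prediction''. Concretely, the objective and constraints in \cref{def:calibrationfunction} only refer to $\mu$ and $v$ through the quantities $\delta s(v,\mu)$ and $\delta\ell(d\circ v, \mu)$. I would parametrize the feasible set by introducing the auxiliary variables $y \in y^\star(\mu)$ (an optimal predictor for $\mu$) and $y' = d\circ v$ (the decoded prediction from $v$), and observe that once these are fixed, the excess risk $\delta\ell(d\circ v,\mu) = \ell(y',\mu) - \ell(\mu) = \ell(y',\mu) - \ell(y,\mu) = \delta\ell(y',\mu)$ becomes the quantity appearing in the constraint of \eqref{eq:zetayy}. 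Hence the global infimum splits as a minimum over the (finitely many) choices of $(y,y')$ with $y \neq y'$ of the infimum over the corresponding slice.

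\textbf{Key steps, in order.} First, I would recall from \cref{lem:bayesrisks} that $s(\mu) = \ell(\mu)$, so that $\delta s(v,\mu) = s(v,\mu) - \ell(\mu)$ and the two ``excess'' notions are measured against the same baseline; this is what makes the bookkeeping consistent across the split. Second, I would note that the decoding constraint $y' = d\circ v$ means $\phi(y')^\top g(\cdot)$ — more precisely $y' \in \argmax_{z} \phi(z)^\top v$ in the surrogate convention, equivalently $y' \in \argmin_z \ell(z,\mu')$-type condition depending on sign — is exactly encoded by the line ``$y' = d\circ v$'' in \eqref{eq:zetayy}, and similarly $y = y^\star(\mu)$ encodes optimality of $y$ for $\mu$. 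Third, I would argue the inclusion ``$\geq$'': any feasible $(v,\mu)$ for $\zeta(\varepsilon)$ with $\delta\ell(d\circ v,\mu)\ge\varepsilon$ picks out *some* optimal $y\in y^\star(\mu)$ and *some* decoded $y' = d\circ v$, and then $\delta\ell(y',\mu) = \delta\ell(d\circ v,\mu)\ge\varepsilon$, so $(v,\mu)$ is feasible for $\zeta_{y,y'}(\varepsilon)$; hence $\zeta(\varepsilon) \ge \min_{y\neq y'}\zeta_{y,y'}(\varepsilon)$ (the case $y = y'$ can be excluded because $\delta\ell(d\circ v,\mu) \ge \varepsilon > 0$ forces $y' \notin y^\star(\mu)$, so $y' \neq y$; for $\varepsilon = 0$ one handles the degenerate case separately or notes $\zeta(0) = 0$ trivially). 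Fourth, the reverse inclusion ``$\leq$'': any $(v,\mu)$ feasible for some $\zeta_{y,y'}(\varepsilon)$ is automatically feasible for $\zeta(\varepsilon)$, since $y = y^\star(\mu)$ and $y' = d\circ v$ together give $\delta\ell(d\circ v,\mu) = \delta\ell(y',\mu)\ge\varepsilon$; hence $\zeta(\varepsilon)\le \zeta_{y,y'}(\varepsilon)$ for each such pair, and taking the min gives $\zeta(\varepsilon)\le\min_{y\neq y'}\zeta_{y,y'}(\varepsilon)$.

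\textbf{Main obstacle.} The delicate point is handling set-valued quantities cleanly: $y^\star(\mu)$ is generally a set, not a single element, and likewise $d\circ v$ may be a set when $v$ lies on a cell boundary of $\calC(\Rspace k)$. I would deal with this by adopting the convention (consistent with the decoding \eqref{eq:decoding} and with \cite{osokin2017structured}) that the constraints ``$y = y^\star(\mu)$'' and ``$y' = d\circ v$'' mean $y \in y^\star(\mu)$ and $y' \in d\circ v$ respectively, i.e. $y'$ is *a* maximizer; since $\delta\ell(y',\mu)$ is the same for every $y' \in d\circ v$ when $v$ is in the relative interior of a cell, and the infimum is unaffected by boundary behavior, the split is exact. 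A secondary subtlety is ensuring the minimum over $(y,y')$ is genuinely attained (so ``$\min$'' rather than ``$\inf$'' is justified): this is immediate because $\calY\times\calY$ is finite. I would remark that this is precisely the argument of Lemma 10 in \cite{osokin2017structured}, specialized to the max-min surrogate, so the proof is essentially a verification that their decomposition goes through verbatim once \cref{lem:bayesrisks} is in hand.
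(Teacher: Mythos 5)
Your proposal is correct and follows essentially the same route as the paper: decompose the feasibility set of the calibration problem into slices indexed by the pair (optimal prediction $y$, decoded prediction $y'$), verify both inclusions, and handle the tie-breaking/boundary issue by replacing the open prediction sets with their closures — the paper justifies this last step slightly more explicitly via the continuity of $\delta s(v,\mu)$ (Lemma 27 of Zhang, 2004), whereas you assert the infimum is unaffected by boundary behavior, but the idea is the same. No substantive difference.
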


\begin{proof}
The idea of the proof is to decompose the feasibility set of the optimization problem into a union of sets enumerated by the pairs $(y, y')$ corresponding to the optimal prediction $y$ and the prediction $y'$. Let's first define the sets $V(y)\subset\Rspace{k}$ and~$\calM_{y,y',\varepsilon}\subset\calM$.
\begin{enumerate}
    \item Define the prediction sets as $V(y) \defeq \{v\in \Rspace{k}~|~v^\top (\phi(y) - \phi(y'))> 0, \forall y'\in\calY\}\subset \Rspace{k}$ to denote the set of elements in the surrogate space $\Rspace{k}$ for which the prediction is the output element $y\in\calY$.  Note that the sets $V(y)$ do not contain their boundary, but their closure can be expressed as
    \begin{equation*}
    \overline{ V}(y')\defeq \{v\in \Rspace{k}~|~v^\top (\phi(y') - \phi(y))\geq 0, \forall y\in\calY\}.    
    \end{equation*}
    Note that $\bigcup_{y'\in\calY}\overline{V}(y') =  \Rspace{k}$.
    \item If $v\in V(y')$, the feasible set of conditional moments $\mu$ for which output $y$ is one of the best possible predictions (i.e., $\ell(y',\mu)-\ell(y,\mu)\geq\varepsilon$) is
\begin{equation*}
    \calM_{y,y',\varepsilon} = \{\mu\in\calM~|~\ell(y,\mu)=\ell(\mu)~|~\ell(y',\mu)-\ell(y,\mu)\geq\varepsilon\}.
\end{equation*}
\end{enumerate}

The union of the sets $\{\overline{ V}(y')\times\calM_{y,y',\varepsilon}\}_{y,y'\in\calY}$ exactly equals the feasibility set of the optimization problem \cref{eq:optimalcalibration}. We can then re-write the calibration function as
\begin{equation}\label{eq:calibrationfunctionfactorized}
    \zeta(\varepsilon) = \min_{y\neq y'}\left\{\begin{array}{ll}
        \min_{v,\mu} &\delta s(v, \mu)  \\
        \text{s.t.} & v\in V(y') \\
         & \mu\in\calM_{y,y',\varepsilon}
    \end{array}\right. .
\end{equation}

Finally, by Lemma 27 of \cite{zhang2004statistical}, the function $\delta s(v, \mu)$ is continuous w.r.t both $\mu$ and $v$, allowing to substitute the sets $V(y')$ in \cref{eq:calibrationfunctionfactorized} by their closures $\overline{ V}(y')$ without changing the minimum.
\end{proof}

Until now, the results were general for any calibration function. We will now construct a lower bound on the calibration function for $\operatorname{M^4N}$s. Let's first introduce some notation.
\paragraph{Notation.}
\begin{itemize}
    \item[-] Let $\calM_0$ be the finite set of $0$-dimensional faces (points) of the cell complex $\calC(\calM)\subset\calP(\calM)$, or equivalently (mapped by $\partial\Omega$), the full dimensional faces of the cell complex~$\calC(\Rspace{k})\subset\calP(\Rspace{k})$. Note that $|\calM_0|$ is finite.
    \item[-] Let $w(y) = -A^\top \phi(y)$.
\end{itemize}
Recall that in \cref{lem:calibrationfunction-split} we split the optimization problem into $|\calY|(|\calY|-1)$ optimization problems corresponding to all possible (ordered) pairs of different optimal prediction and prediction. The following \cref{thm:characterization-calibration} further splits the inner optimization problems into some faces of the cell complex $\calC(\Rspace{k})$ and simplifies the objective function into an affine function.
\begin{theorem}[Calibration Function for the surrogate loss of $\operatorname{M^4N}$]\label{thm:characterization-calibration}
    We have that 
    \begin{equation*}
        \zeta_{y,y'}(\varepsilon) = \min_{\bar{\mu}\in\calM_0(y,y')}\zeta_{y,y',\bar{\mu}}(\varepsilon),
    \end{equation*}
    where  $\calM_0(y,y') = \{\bar{\mu}\in\calM_0~|~w(y), w(y')\in\partial\Omega(\bar{\mu})\}\subseteq\calM_0\subset\calM$, and
    \begin{equation}\label{eq:zetayymu}
     \zeta_{y,y', \bar{\mu}}(\varepsilon) = \left\{\begin{array}{lllr}
        \underset{ v,\mu\in\Rspace{k}}{\min} & \langle w(y) -  v, \mu - \bar{\mu}\rangle & & \\
         \text{s.t} & \langle w(y) - w(y'), \mu\rangle \geq \varepsilon & & (\varepsilon-\text{suboptimality}) \\
         & \langle w(y) - w(z), \mu\rangle \geq 0, & \forall z\in \calY & (\text{optimal prediction}) \\
         & y' = d\circ v &  & (\text{prediction}) \\
         & v\in\partial\Omega(\bar{\mu}) & & (\text{face in }\calC(\Rspace{k})) \\
         & \mu\in\calM &  \\
     \end{array}\right.
    \end{equation}
\end{theorem}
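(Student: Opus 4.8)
The plan is to start from the two-variable problem for $\zeta_{y,y'}(\varepsilon)$ given by \cref{lem:calibrationfunction-split} and rewrite its objective and constraints in elementary terms. By \cref{lem:bayesrisks}, $\delta s(v,\mu)=s(v,\mu)-s(\mu)=\Omega^*(v)-v^\top\mu-\ell(\mu)$; and since $\ell(z,\mu)=\phi(z)^\top A\mu=-\langle w(z),\mu\rangle$, the constraint $y=y^\star(\mu)$ becomes the family of linear inequalities $\langle w(y)-w(z),\mu\rangle\ge 0$ for $z\in\calY$, while $\delta\ell(y',\mu)\ge\varepsilon$ reads $\langle w(y)-w(y'),\mu\rangle\ge\varepsilon$. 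As in the proof of \cref{lem:calibrationfunction-split}, the constraint $y'=d\circ v$ is relaxed to $v\in\overline{V}(y')$ by continuity of $\delta s(v,\mu)$.

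Next I would split the minimization over $v$ along the cell complex $\calC(\Rspace{k})$. Since $\Omega^*$ is convex affine-by-parts with full-dimensional domain, its full-dimensional faces are exactly the sets $\partial\Omega(\bar\mu)$ as $\bar\mu$ ranges over the $0$-dimensional faces $\calM_0$ of $\calC(\calM)$ (the dimension-reversing bijection $\partial\Omega:\calC(\calM)\to\calC(\Rspace{k})$ of Appendix B), and they cover $\Rspace{k}$. Hence
\[
\zeta_{y,y'}(\varepsilon)=\min_{\bar\mu\in\calM_0}\min\bigl\{\delta s(v,\mu):v\in\partial\Omega(\bar\mu),\,v\in\overline{V}(y'),\,y\in y^\star(\mu),\,\langle w(y)-w(y'),\mu\rangle\ge\varepsilon,\,\mu\in\calM\bigr\}.
\]
On the face $\partial\Omega(\bar\mu)$ the Fenchel-Young equality gives $\Omega^*(v)=\langle v,\bar\mu\rangle-\Omega(\bar\mu)=\langle v,\bar\mu\rangle+\ell(\bar\mu)$, so there $\delta s(v,\mu)=\langle v,\bar\mu-\mu\rangle+\ell(\bar\mu)-\ell(\mu)$.

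The third step is to reduce the outer index set from $\calM_0$ to $\calM_0(y,y')$ and to simplify the objective. The inverse subgradient relation $w(z)\in\partial\Omega(\bar\mu)\iff\bar\mu\in\argmax_{\mu\in\calM}\bigl(\langle w(z),\mu\rangle+\ell(\mu)\bigr)$, together with the identity $\langle w(z),\mu\rangle+\ell(\mu)=\langle w(z),\mu\rangle-\max_{z'}\langle w(z'),\mu\rangle\le 0$ which is tight iff $z\in y^\star(\mu)$ (so the maximum over $\calM$ equals $0$), shows $w(z)\in\partial\Omega(\bar\mu)\iff z\in y^\star(\bar\mu)$; thus $\calM_0(y,y')=\{\bar\mu\in\calM_0:y,y'\in y^\star(\bar\mu)\}$, and for such $\bar\mu$ one has $\ell(\bar\mu)=-\langle w(y),\bar\mu\rangle$. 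Plugging this and $\ell(\mu)=-\langle w(y),\mu\rangle$ (valid since $y\in y^\star(\mu)$) into the face objective collapses it to $\langle w(y)-v,\mu-\bar\mu\rangle$, which is exactly the objective of \eqref{eq:zetayymu}. What remains is to discard the faces with $\bar\mu\in\calM_0\setminus\calM_0(y,y')$: for these, either the inner feasible set is empty, or its infimum is already attained on a lower-dimensional face shared with some $\bar\mu'\in\calM_0(y,y')$ --- here one uses that $\overline{V}(y')=\calN_{\calM}(\phi(y'))$ is compatible with the cells of $\calC(\Rspace{k})$ and that, by the face duality of Appendix B, a feasible pair $(v,\mu)$ (with $y'\in d\circ v$ and $y\in y^\star(\mu)$) can always be matched with a $0$-cell $\bar\mu$ satisfying $v\in\partial\Omega(\bar\mu)$ and $y,y'\in y^\star(\bar\mu)$.

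I expect this final reduction --- eliminating the faces outside $\calM_0(y,y')$ --- to be the main obstacle, as it is the only genuinely geometric step: it needs the precise interplay between the decoding regions $\overline{V}(y')$, the subdifferential cells $\partial\Omega(\bar\mu)$, and the optimal-predictor cells $y^\star(\bar\mu)$, i.e., the full correspondence between $\calC(\calM)$ and $\calC(\Rspace{k})$ via $\partial\Omega$ and $\partial\Omega^*$, together with the continuity of $\delta s$ used to pass to closures. The remaining steps are Fenchel-Young bookkeeping.
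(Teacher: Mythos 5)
Your route is the same as the paper's: \cref{lem:calibrationfunction-split} plus the linear rewriting $\ell(z,\mu)=-\langle w(z),\mu\rangle$ of the constraints, a split of the $v$-minimization over the full-dimensional cells $\partial\Omega(\bar\mu)$, $\bar\mu\in\calM_0$, of $\calC(\Rspace{k})$, the Fenchel--Young identity $\Omega^*(v)=\langle v,\bar\mu\rangle+\ell(\bar\mu)$ on each cell, and the collapse of $\delta s(v,\mu)$ to $\langle w(y)-v,\mu-\bar\mu\rangle$ once $y\in y^\star(\bar\mu)\cap y^\star(\mu)$. Your identification $w(z)\in\partial\Omega(\bar\mu)\iff z\in y^\star(\bar\mu)$, hence $\calM_0(y,y')=\{\bar\mu\in\calM_0: y,y'\in y^\star(\bar\mu)\}$, is also exactly what the paper uses. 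All of this bookkeeping is correct.

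The one step you leave open --- discarding the cells with $\bar\mu\in\calM_0\setminus\calM_0(y,y')$ --- is indeed where the paper does its only real work, and your proposed mechanism ("the infimum is attained on a shared lower-dimensional face") is not quite how it is closed. The paper argues in two separate pieces. For $w(y')$: any $v$ feasible for the cell $\partial\Omega(\bar\mu)$ must satisfy $y'=d\circ v$, and the paper asserts that this forces $w(y')\in\partial\Omega(\bar\mu)$, so cells not containing $w(y')$ contribute an empty (hence infinite) subproblem. For $w(y)$: rather than matching feasible pairs to other cells, the paper uses that for fixed feasible $\mu$ the map $v\mapsto\delta s(v,\mu)=s(v,\mu)-s(w(y),\mu)$ is a nonnegative convex affine-by-parts function minimized at $w(y)$, so replacing $\Omega^*(v)$ by the affine piece of any cell containing the minimizer $w(y)$ only lowers the objective; consequently restricting attention to cells with $w(y)\in\partial\Omega(\bar\mu)$ (where that affine piece is exact and equals $\langle w(y)-v,\mu-\bar\mu\rangle$) cannot miss a smaller value attained on some other cell. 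You would need to supply one of these two arguments (or your matching claim, proved via the duality $\partial\Omega^*=(\partial\Omega)^{-1}$ and $\overline V(y')=\calN_{\calM}(\phi(y'))$) to make the reduction rigorous; as written your proposal correctly locates the difficulty but does not resolve it.
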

\begin{proof} We split the proof into three steps. First, we split the optimization problem w.r.t $v\in\Rspace{k}$ among the faces $\partial\Omega(\calM_0)$ of the complex cell $\calC(\Rspace{k})\subset\calP(\Rspace{k})$. Second, we show that the minimizer is achieved in a face $\partial\Omega(\bar{\mu})$ such that $w(y),w(y')\in\partial\Omega(\bar{\mu})$ and simplify the objective function. Finally, we update the notation of some constraints. 

\textbf{1st step. Split the optimization problem according to the affine parts.} Recall that $s(v,\mu)$ is defined as a supremum of affine functions, where each affine function corresponds to a $\bar{\mu}\in\calM_0$: $s(v,\mu) = \sup_{\bar{\mu}\in\calM_0}~\ell(\bar{\mu}) + v^\top(\bar{\mu}-\mu)$. Using that~$\bigcup_{\bar{\mu}\in\calM_0}\overline{\partial\Omega(\bar{\mu})} = \Rspace{k}$ and the continuity of the loss, we split problem \eqref{eq:zetayy} into $|\calM_0|$ minimization problems and define 
\begin{equation*}
\zeta_{y,y'}(\varepsilon) = \min_{\bar{\mu}\in\calM_0}\zeta_{y,y',\bar{\mu}}(\varepsilon),    
\end{equation*}
where $\zeta_{y,y',\bar{\mu}}(\varepsilon)$ is given by problem \eqref{eq:zetayy} with the additional constraint $v\in\partial\Omega(\bar{\mu})$.

\begin{figure}[h!]
    \centering
    \includegraphics[width=0.5\textwidth]{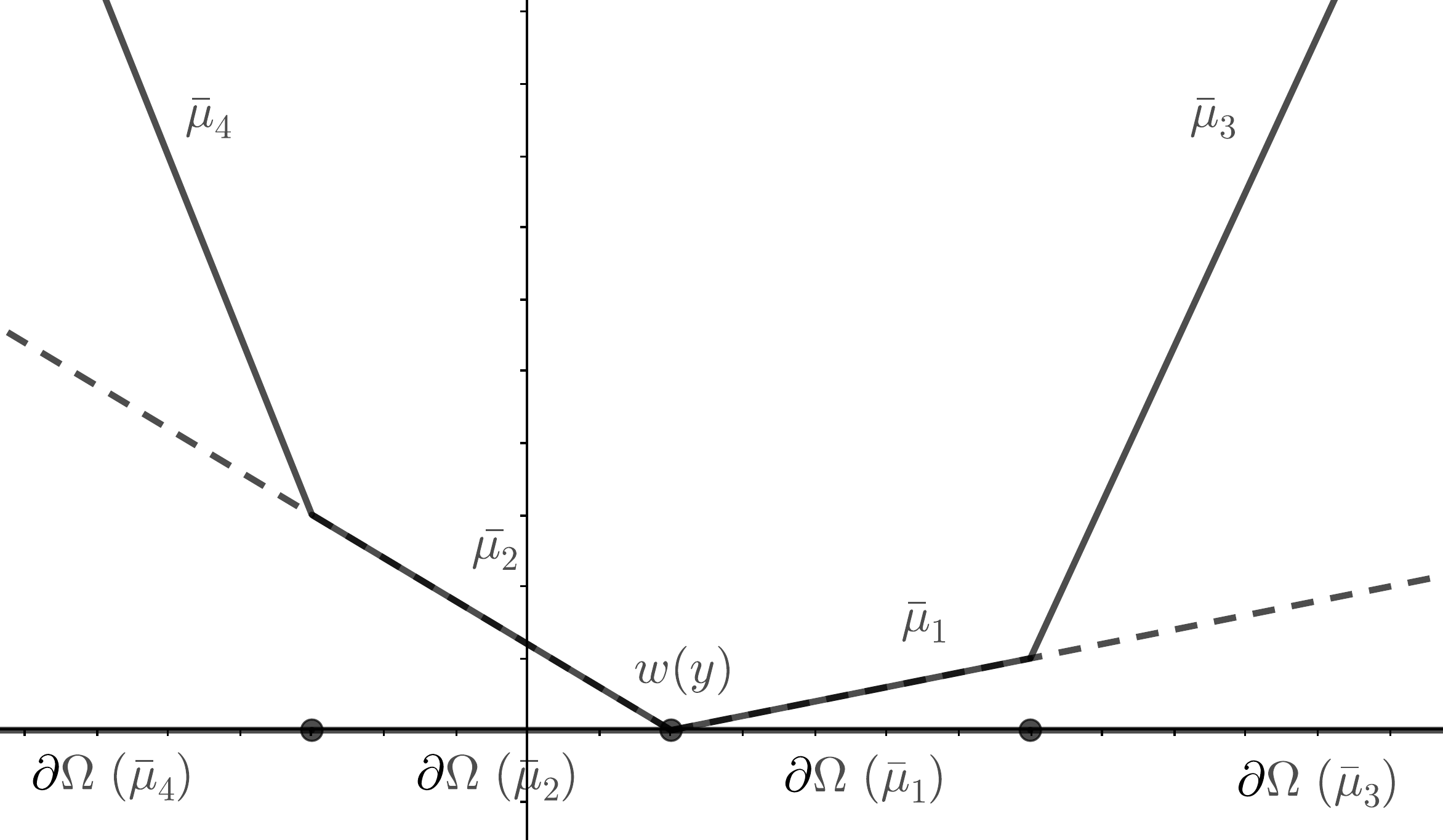}
    \caption{The minimizer of the objective $\delta s(v, \mu)$ over $\Rspace{k}$ is $w(y)$. In order to minimize the objective under the constraints, we only need to consider the faces which include the minimizer $w(y)$. In the figure above, we can safely remove the optimization over the faces~$\partial\Omega(\bar{\mu}_4)$ and $\partial\Omega(\bar{\mu}_3)$. }
    \label{fig:lowerboundproof}
\end{figure}

\textbf{2nd step. Reduce the number of considered affine parts and simplify objective.} We will show that 
\begin{equation*}
    \min_{\bar{\mu}\in\calM_0}\zeta_{y,y',\bar{\mu}}(\varepsilon) = \min_{\bar{\mu}\in\calM_0(y,y')}\zeta_{y,y',\bar{\mu}}(\varepsilon),
\end{equation*}
where $\calM_0(y,y') = \{\bar{\mu}\in\calM_0~|~w(y), w(y')\in\partial\Omega(\bar{\mu})\}\subset\calM_0$. Moreover, when $\bar{\mu}\in\calM_0(y,y')$, the objective function in the definition of $\zeta_{y,y',\bar{\mu}}(\varepsilon)$ takes the affine form $\langle w(y) -  v, \mu - \bar{\mu}\rangle$.
In order to see this, let's make the following observations.
\begin{itemize}
    \item[-] We have that $w(y')$ must belong to the feasibility set as $y'=d\circ w(y')$. And so, we must have $w(y')\in\partial\Omega(\bar{\mu})$.
    \item[-] Fix $\mu$ in the feasibility set of \cref{eq:zetayy}. As $y$ is the optimal prediction, $w(y)$ is a minimizer of the conditional surrogate risk: $\min_{v'}~s(v',\mu) = s(w(y), \mu)$. The objective function $s(v,\mu) - s(w(y), \mu)\geq 0$ is a convex affine-by-parts function with minimizer $w(y)$. We can lower bound this quantity by simply considering the affine parts $\bar{\mu}\in\calM_0$ that include the minimizer, i.e., $w(y)\in\partial\Omega(\bar{\mu})$ (see \cref{fig:lowerboundproof}). Moreover, note that if $w(y)\in\partial\Omega(\bar{\mu})$, then $\Omega^*( v) = \Omega^*(   w(y)) + \langle \bar{\mu}, v - w(y)\rangle$, as $\bar{\mu}$ is the slope of the affine part $\partial\Omega(\bar{\mu})$. Using that $\min_{v'\in \Rspace{k}}s(v',\mu) = s(w(y), \mu) = \Omega^*(  w(y)) - \langle   w(y), \mu\rangle$, we have that 
    \begin{align*}
        \delta s(v, \mu) &= s(v, \mu) - s(w(y), \mu) \\
        &= \Omega^*( v) - \Omega^*(   w(y)) + \langle    w(y) -  v, \mu\rangle \\
        &= \langle w(y) -  v, \mu - \bar{\mu}\rangle.
    \end{align*}
\end{itemize}
\textbf{3rd step. Re-write constraints in terms of $\boldsymbol{w(y)}$.} The constraint $ y =  y^\star(\mu) = \argmin_{y\in\calY} \phi( y)^\top A\mu$ is equivalent to~$\ell(z, \mu) - \ell(y, \mu)\geq 0$ for all $z\in\calY$, which can be written $\delta\ell(z, \mu) =\langle  w(y) -   w(z), \mu\rangle$, for all $z\in\calY$. Similarly, the constraint  $\delta\ell(y', \mu) \geq \varepsilon$ reads $\langle  w(y) -   w(y'), \mu\rangle \geq \varepsilon$.
\end{proof}

In order to state \cref{th:maintheoremcalibrationfunction}, let's first define the function $\lambda_{y'}^{\mu}:\partial\Omega(\mu)\rightarrow\Rspace{}_{\geq 0}$. By \cref{th:minimzersurrogate}, we know that~$\partial\Omega(\mu) = \operatorname{hull}(w(y))_{y\in y^\star(\mu)} + \calN_{\calM}(\mu)$.
In general, there exist multiple ways to describe a vector $v\in\partial\Omega(\mu)$ as~$v = \sum_{y\in y^\star(\mu)} \lambda_y w(y) + n$ with $\lambda\in\Delta_{\calY}$ and $n\in\calN_{\calM}(\mu)$. The function $v\mapsto \lambda_{y'}^{\mu}(v)$ is defined as the maximal weight of the vector $w(y')$ over all possible decompositions:
\begin{equation}\label{eq:lambdadefinition}
    \lambda_{y'}^{\mu}(v) = \left\{\begin{array}{llr}
        \underset{\lambda, n}{\max} & \lambda_{ y'} \\
        \text{s.t} &v = \sum_{y\in y^\star(\mu)} \lambda_y w(y) + n \\
        & \lambda\in y^\star(\mu) \\
        & n\in\calN_{\calM}(\mu)
    \end{array}\right. .
\end{equation}
The following \cref{th:maintheoremcalibrationfunction} gives a \emph{constant positive} lower bound of the ratio $\zeta(\varepsilon)/\varepsilon$ as a minimization of $\lambda_{y'}^{\bar{\mu}}(v)$ over the prediction set of $y'$.
\begin{theorem}\label{th:maintheoremcalibrationfunction} We have that
\begin{equation*}
    \zeta(\varepsilon) = \min_{y'\in\calY}\min_{\bar{\mu}\in\calM_0(y')}~\zeta_{y',\bar{\mu}}(\varepsilon)
\end{equation*},
where $\calM_0(y') = \{\bar{\mu}\in\calM_0~|~w(y')\in\partial\Omega(\bar{\mu})\}\subseteq\calM_0\subset\calM$ and 
\begin{equation}\label{eq:calibrationfunction-compressedform}
    \zeta_{y',\bar{\mu}}(\varepsilon) / \varepsilon \geq  \left\{\begin{array}{ll}
         \underset{v\in\partial\Omega(\bar{\mu})}{\min} & \lambda_{y'}^{\bar{\mu}}(v) \\
         \text{s.t} &  y'=d\circ v
     \end{array}\right. .
\end{equation}
\end{theorem}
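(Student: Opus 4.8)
The plan is to chain the two earlier reductions and then estimate each inner problem \eqref{eq:zetayymu} directly. First I would combine \cref{lem:calibrationfunction-split} with \cref{thm:characterization-calibration} to write $\zeta(\varepsilon) = \min_{y\neq y'}\min_{\bar\mu\in\calM_0(y,y')}\zeta_{y,y',\bar\mu}(\varepsilon)$, and then reorganize the minimization: since $\bar\mu\in\calM_0(y,y')$ precisely when $w(y)$ \emph{and} $w(y')$ lie in $\partial\Omega(\bar\mu)$, i.e.\ $\bar\mu\in\calM_0(y)\cap\calM_0(y')$ with $\calM_0(z)\defeq\{\bar\mu\in\calM_0 : w(z)\in\partial\Omega(\bar\mu)\}$, I can pull $y'$ and $\bar\mu$ outside and set $\zeta_{y',\bar\mu}(\varepsilon)\defeq\min_{y\neq y',\,\bar\mu\in\calM_0(y)}\zeta_{y,y',\bar\mu}(\varepsilon)$ (with the convention $+\infty$ when no admissible $y$ exists), which gives the claimed identity $\zeta(\varepsilon)=\min_{y'\in\calY}\min_{\bar\mu\in\calM_0(y')}\zeta_{y',\bar\mu}(\varepsilon)$. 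What then remains is to prove the lower bound \eqref{eq:calibrationfunction-compressedform}, for which it suffices to bound $\zeta_{y,y',\bar\mu}(\varepsilon)$ for each fixed admissible triple $(y,y',\bar\mu)$.

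For that bound I would fix a feasible pair $(v,\mu)$ for \eqref{eq:zetayymu} and use \cref{th:minimzersurrogate}, which yields $\partial\Omega(\bar\mu)=\operatorname{hull}(w(z))_{z\in y^\star(\bar\mu)}+\calN_{\calM}(\bar\mu)$. The constraint $v\in\partial\Omega(\bar\mu)$ thus provides a decomposition $v=\sum_{z\in y^\star(\bar\mu)}\lambda_z w(z)+n$ with $\lambda$ a probability vector supported on $y^\star(\bar\mu)$ and $n\in\calN_{\calM}(\bar\mu)$. If $y'\in y^\star(\bar\mu)$ (the only case in which $w(y')\in\partial\Omega(\bar\mu)$ is informative), I pick a decomposition maximizing the weight $\lambda_{y'}$, so that $\lambda_{y'}=\lambda_{y'}^{\bar\mu}(v)$; otherwise $\lambda_{y'}^{\bar\mu}(v)=0$ and \eqref{eq:calibrationfunction-compressedform} is trivially true. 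Using $\sum_z\lambda_z=1$ to write $w(y)-v=\sum_z\lambda_z(w(y)-w(z))-n$, the objective becomes
\begin{equation*}
\langle w(y)-v,\,\mu-\bar\mu\rangle=\sum_{z\in y^\star(\bar\mu)}\lambda_z\,\langle w(y)-w(z),\,\mu-\bar\mu\rangle-\langle n,\,\mu-\bar\mu\rangle .
\end{equation*}

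I would then sign each term. Since $w(z)=-A^\top\phi(z)$ and $\ell(z,\mu)=\phi(z)^\top A\mu$, the optimal-prediction constraint $y=y^\star(\mu)$ gives $\langle w(y)-w(z),\mu\rangle=\delta\ell(z,\mu)\geq 0$, while $z\in y^\star(\bar\mu)$ gives $\langle w(y)-w(z),\bar\mu\rangle=\ell(z,\bar\mu)-\ell(y,\bar\mu)=-\delta\ell(y,\bar\mu)$, so $\langle w(y)-w(z),\mu-\bar\mu\rangle=\delta\ell(z,\mu)+\delta\ell(y,\bar\mu)\geq 0$; and $-\langle n,\mu-\bar\mu\rangle\geq 0$ because $n\in\calN_{\calM}(\bar\mu)$ and $\mu\in\calM$. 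Retaining only the $z=y'$ summand and invoking the $\varepsilon$-suboptimality constraint $\delta\ell(y',\mu)=\langle w(y)-w(y'),\mu\rangle\geq\varepsilon$ together with $\delta\ell(y,\bar\mu)\geq 0$, one gets $\langle w(y)-v,\mu-\bar\mu\rangle\geq\lambda_{y'}\varepsilon=\lambda_{y'}^{\bar\mu}(v)\,\varepsilon$. Taking the infimum over feasible $(v,\mu)$ and relaxing to the $v$-only constraints $v\in\partial\Omega(\bar\mu)$, $y'=d\circ v$ yields $\zeta_{y,y',\bar\mu}(\varepsilon)\geq\varepsilon\cdot\min\{\lambda_{y'}^{\bar\mu}(v):v\in\partial\Omega(\bar\mu),\,y'=d\circ v\}$, and the min over admissible $y$ gives \eqref{eq:calibrationfunction-compressedform}.

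The main obstacle I anticipate is the bookkeeping around the face $\partial\Omega(\bar\mu)$: one must verify that the normal-cone component $n$ can only help (as above), that for the informative faces indeed $y'\in y^\star(\bar\mu)$ so $\lambda_{y'}^{\bar\mu}(v)$ is a genuine weight rather than a vacuous zero, and that discarding the non-$y'$ summands and the $\mu$-feasibility constraints does not destroy the $\varepsilon$-dependence. Note that this statement only asserts a bound of the form $\zeta_{y',\bar\mu}(\varepsilon)\geq(\text{const})\,\varepsilon$; the \emph{strict positivity} of that constant, which is what makes \cref{th:comparisoninequality} nonvacuous, is a separate argument (showing $\lambda_{y'}^{\bar\mu}(v)>0$ uniformly over the relevant closed nonempty set of $v$) to be carried out afterwards.
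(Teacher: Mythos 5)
Your proposal is correct and arrives at exactly the bound in \eqref{eq:calibrationfunction-compressedform}; the only substantive difference from the paper is how the key estimate on $\zeta_{y,y',\bar{\mu}}$ is obtained. The paper proceeds by (1) relaxing the constraints of \eqref{eq:zetayymu} and changing variables to $\mu-\bar{\mu}$, (2) explicitly forming the LP dual of the minimization in $\mu$, which pulls $\varepsilon$ out as a multiplicative factor in the dual objective $\varepsilon\lambda_{y'}$, and (3) adding the simplex constraint $\sum_z\lambda_z=1$ to that dual, at which point the dual feasibility constraint becomes precisely the hull-plus-normal-cone decomposition $v=\sum_{z\in y^\star(\bar{\mu})}\lambda_z w(z)+n$ and the optimal value becomes $\varepsilon\lambda_{y'}^{\bar{\mu}}(v)$. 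You instead verify the resulting weak-duality inequality by hand: you take the decomposition of $v$ supplied by \cref{th:minimzersurrogate}, expand $\langle w(y)-v,\mu-\bar{\mu}\rangle$ along it, check that every summand is nonnegative (the normal-cone term because $n\in\calN_{\calM}(\bar{\mu})$ and $\mu\in\calM$; the terms $\langle w(y)-w(z),\mu-\bar{\mu}\rangle=\delta\ell(z,\mu)+\delta\ell(y,\bar{\mu})$ because $y$ is optimal for $\mu$ and $z$ for $\bar{\mu}$), and keep only the $z=y'$ term, which the $\varepsilon$-suboptimality constraint bounds below by $\lambda_{y'}\varepsilon$. This is mathematically the same certificate the paper's dual produces, but your presentation is more elementary (no LP dualization needed) and makes transparent exactly which constraints of \eqref{eq:zetayymu} are used and which are discarded; the paper's version has the advantage of exhibiting the linearity in $\varepsilon$ as an exact consequence of duality before any relaxation. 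Your handling of the bookkeeping is also sound: reorganizing $\min_{y\neq y'}\min_{\bar{\mu}\in\calM_0(y,y')}$ into $\min_{y'}\min_{\bar{\mu}\in\calM_0(y')}$ with the inner $\min$ over admissible $y$ absorbed into the definition of $\zeta_{y',\bar{\mu}}$ is exactly what the (somewhat implicit) statement intends, and you correctly note that strict positivity of the resulting constant is deferred to \cref{th:calibrationfunction-quantitative}.
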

\begin{proof}
We split the proof into four steps. First, we remove some constraints and write the optimization problem in terms of~$\mu - \bar{\mu}$. Second, we construct the dual of the linear program associated to the minimization w.r.t. $\mu$ and extract the variable $\varepsilon$ as a multiplying factor in the objective, thus showing the linearity of the calibration function. Then, we add a simplex constraint to simplify the problem and finally, we put everything together to obtain the desired result.

\textbf{1st step. Write optimization w.r.t $\boldsymbol{\mu}$ in terms of $\boldsymbol\mu-\boldsymbol{\bar{\mu}}$ by removing some constraints.} 
Let's proceed with the following editions of the constraints of \eqref{eq:zetayymu} to obtain a lower bound:
\begin{enumerate}
    \item The cone $\calN_{\calM}(\bar{\mu})$ is polyhedral, as it is the normal cone of the convex polytope $\calM$ at the point $\bar{\mu}$. Hence, it is a finitely generated cone \cite{de2012algebraic}, which can be described as
    \begin{equation*}
        \calN_{\calM}(\bar{\mu}) = \{a_1n_1 + \cdots + a_rn_r~|~a_i\geq 0, n_i\in\Rspace{k}\}.
    \end{equation*}
    Let's now replace the constraint $\mu\in\calM$ (last constraint of \eqref{eq:zetayymu}) by the constraints $\langle -n_i, \mu - \bar{\mu} \rangle\geq 0$ where $1\leq i \leq r$ and $n_i$ are the generators of the cone $\calN_{\calM}(\bar{\mu})$.
    \item Note that by construction, we have that 
    \begin{equation}\label{eq:optmubar}
        \langle\phi(z), A\bar{\mu}\rangle = \langle\phi(z'), A\bar{\mu}\rangle, \quad \forall z,z'\in y^\star(\bar{\mu}),
    \end{equation}
    as $z,z'$ are optimal for the conditional moments $\bar{\mu}$.
    Let's remove from the second line of constraints of \eqref{eq:zetayymu} the ones corresponding to $z\in\calY\setminus y^\star(\bar{\mu})$ and use \eqref{eq:optmubar} for the remaining constraints.
    We obtain
    \begin{equation*}
     \zeta_{y,y', \bar{\mu}}(\varepsilon) \geq \left\{\begin{array}{lll}
        \underset{ v,\mu\in\Rspace{k}}{\min} & \langle w(y) -  v, \mu - \bar{\mu}\rangle & \\
        \text{s.t} & \langle A^\top (\phi(y')-\phi(y)), \mu - \bar{\mu}\rangle \geq \varepsilon, & \\
        & \langle A^\top (\phi(z)-\phi(y)), \mu - \bar{\mu}\rangle \geq 0, & \forall z\in y^\star(\bar{\mu}) \\
        & \langle -n_i, \mu - \bar{\mu} \rangle\geq 0, & 1\leq i \leq r \\
         & \langle v, \phi( y') - \phi(z)\rangle \geq 0, & \forall z\in\calY \\
     \end{array}\right.
    \end{equation*}
    \item Do the change of variables $\mu'=\mu-\bar{\mu}$ and re-define $\mu\defeq \mu'$ to ease notation.
    \begin{equation*}
     \zeta_{y,y', \bar{\mu}}(\varepsilon) \geq \left\{\begin{array}{lll}
        \underset{ v,\mu\in\Rspace{k}}{\min} & \langle w(y) -  v, \mu\rangle & \\
        \text{s.t} & \langle A^\top (\phi(y')-\phi(y)), \mu\rangle \geq \varepsilon, & \\
        & \langle A^\top (\phi(z)-\phi(y)), \mu\rangle \geq 0, & \forall z\in y^\star(\bar{\mu}) \\
        & \langle -n_i, \mu\rangle\geq 0, & 1\leq i \leq r \\
         & \langle v, \phi( y') - \phi(z)\rangle \geq 0, & \forall z\in\calY \\
     \end{array}\right.
    \end{equation*}
\end{enumerate}

\textbf{2nd step. Linearity in $\boldsymbol{\varepsilon}$ via duality.} 
Define $\overline{\calY} = y^\star(\bar{\mu})$. Let's now study separately the linear program corresponding to the variables $\mu$, which reads as
\begin{equation*}
   \textbf{(P)} \hspace{1cm} \left\{\begin{array}{lll}
    \underset{\mu}{\min} & \langle w(y) -  v, \mu\rangle & \\
    \text{s.t} & \langle A^\top (\phi(y')-\phi(y)), \mu\rangle \geq \varepsilon, & \\
    & \langle A^\top (\phi(z)-\phi(y)), \mu\rangle \geq 0, & \forall z\in \overline{\calY} \\
    & \langle -n_i, \mu\rangle\geq 0, & 1\leq i \leq r \\
 \end{array}\right.
\end{equation*}
Let's consider the dual formulation $\textbf{(D)}$ of $\textbf{(P)}$: 
\begin{equation*}
    \textbf{(D)} \hspace{1cm} \left\{\begin{array}{llr}
        \underset{\lambda\in\Rspace{\overline{\calY}+r}}{\max} & \varepsilon\lambda_{ y'} \\
        \text{s.t} & A^\top \sum_{z\in \overline{\calY}}
         \lambda_{z}(\phi(y) - \phi(z)) + \sum_{i=1}^r\lambda_i^nn_i = A^\top \phi(y) + v \\
        & \lambda_y \geq 0 & y\in\overline{\calY} \\
        & \lambda_i^n\geq 0 & 1 \leq i \leq r
    \end{array}\right. ,
\end{equation*}
where we have used that $w(y) = -A^\top \phi(y)$. 

\textbf{3rd step. Simplify by adding a simplex constraint (dependence of optimal prediction $\boldsymbol{y}$ disappears).} As problem $\textbf{(D)}$ is written as a maximization, we can lower bound the objective by adding constraints.
If we add the constraint $\sum_{z\in\overline{\calY}}\lambda_z=1$, the term $A^\top\phi(y)$ simplifies and we obtain the following lower bound
\begin{equation}\label{eq:addsimplexconstraint}
    \left\{\begin{array}{llr}
        \underset{\lambda\in\Rspace{\overline{\calY}+r}}{\max} & \varepsilon\lambda_{ y'} \\
        \text{s.t} & A^\top \sum_{z\in \overline{\calY}}
         \lambda_{z}w(z) + \sum_{i=1}^r\lambda_i^nn_i = v \\
        & \lambda_y \geq 0 & y\in\overline{\calY} \\
        &\sum_{z\in\overline{\calY}}\lambda_z=1 \\
        & \lambda_i^n\geq 0 & 1 \leq i \leq r
    \end{array}\right. ,
\end{equation}
Note that the term $\sum_{i=1}^r\lambda_i^nn_i$ with $\lambda_i^n\geq 0$ covers all possible normal cone vectors, and so the maximization can be written over vectors in $\calN_{\calM}(\bar{\mu})$. Hence, \cref{eq:addsimplexconstraint} can be written as 
\begin{equation}\label{eq:addsimplexconstraint2}
    \left\{\begin{array}{llr}
        \underset{\lambda, n}{\max} & \varepsilon\lambda_{ y'} \\
        \text{s.t} &v = \sum_{y\in \overline{\calY}} \lambda_y w(y) + n \\
        & \lambda\in\Delta_{\overline{\calY}} \\
        & n\in\calN_{\calM}(\bar{\mu})
    \end{array}\right. .
\end{equation}

\textbf{4th step. Putting everything together.} Recall that problem \eqref{eq:addsimplexconstraint2} is a function of $v$. The desired lower bound is constructed by minimizing the quantity \eqref{eq:addsimplexconstraint2} under the constraints $v\in\partial\Omega(\bar{\mu})$ and $y'=d\circ v$.
\end{proof}
\subsection{Quantitative Lower Bound. } \label{app:quantitativaelowerbound}
The compressed form of the calibration function \eqref{eq:calibrationfunction-compressedform} given by \cref{th:maintheoremcalibrationfunction} is still far from a quantitative understanding on the value of the function. The following \cref{th:calibrationfunction-quantitative} provides a quantitative lower bound under mild assumptions on the loss $L$. 

\paragraph{Assumption on L.} $L$ is symmetric and there exists $C > 0$ such that
\begin{equation}\label{eq:assumptiononL}
    y \in \argmin_{y' \in \calY} \mathbb{E}_{z \sim \alpha} L(y',z) \implies \alpha_y \geq 1/C > 0,
\end{equation}
for all $\alpha \in \Delta_{\calY}$.

\begin{theorem}\label{th:calibrationfunction-quantitative}
Assume \eqref{eq:assumptiononL}. Then, for any $\varepsilon>0$, the calibration function is lower bounded by 
\begin{equation*}
    \zeta(\varepsilon) \geq \frac{\varepsilon}{D},
\end{equation*}
where $D=\max_{y'\in\calY}D_{y'}$ and 
\begin{equation}\label{eq:definitionDy}
    1/D_{y'} = \min_{\bar{\mu}\in\calM_0(y')}\left\{\begin{array}{ll}
         \underset{\alpha\in\Delta_{\overline{\calY}}}{\min}\underset{\beta\in\Delta_{\overline{\calY}}}{\max} & \beta_y \\
         \text{s.t} &  A^\top\Expect_{z\sim\alpha}\phi(z) = A^\top\Expect_{z'\sim\beta}\phi(z') \\
         & y'\in \argmin_{y \in \calY} \mathbb{E}_{z \sim \alpha}~L(y,z)
     \end{array}\right. ,
\end{equation}
where $\overline{\calY} = y^\star(\bar{\mu})$.
\end{theorem}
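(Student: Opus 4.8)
The plan is to deduce \eqref{eq:definitionDy} directly from the characterization of the calibration function in \cref{th:maintheoremcalibrationfunction}. That theorem gives $\zeta(\varepsilon)=\min_{y'\in\calY}\min_{\bar\mu\in\calM_0(y')}\zeta_{y',\bar\mu}(\varepsilon)$ together with the bound $\zeta_{y',\bar\mu}(\varepsilon)/\varepsilon\geq\min_{v\in\partial\Omega(\bar\mu),\,y'=d\circ v}\lambda_{y'}^{\bar\mu}(v)$, so it is enough to show that for each $y'$ and each $\bar\mu\in\calM_0(y')$ the quantity $\min_{v\in\partial\Omega(\bar\mu),\,y'=d\circ v}\lambda_{y'}^{\bar\mu}(v)$ is lower bounded by the inner minimax $q_{y',\bar\mu}$ sitting inside $\min_{\bar\mu\in\calM_0(y')}$ in \eqref{eq:definitionDy}; taking minima over $\bar\mu\in\calM_0(y')$ and then over $y'$ then yields $\zeta(\varepsilon)\geq\varepsilon/D$ with $D=\max_{y'}D_{y'}$.

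Fix $y'$ and $\bar\mu\in\calM_0(y')$ and set $\overline{\calY}=y^\star(\bar\mu)$; note $y'\in\overline{\calY}$, since $w(y')\in\partial\Omega(\bar\mu)$ means $\bar\mu\in\partial\Omega^*(w(y'))=\argmin_{\mu\in\calM}\delta\ell(y',\mu)$, forcing $\delta\ell(y',\bar\mu)=0$. As $L$ is symmetric I may take $A$ symmetric (replace it by its symmetric part, which does not change $\ell$, $\Omega$, $S$), so $\langle\phi(z),-A^\top\mu_\beta\rangle=-\Expect_{z'\sim\beta}L(z,z')$, where $\mu_\beta=\Expect_{z'\sim\beta}\phi(z')$. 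Given any $v\in\partial\Omega(\bar\mu)$ with $y'=d\circ v$, I would first fix a decomposition $v=\sum_{y\in\overline{\calY}}\alpha_yw(y)+n=-A^\top\mu_\alpha+n$ with $\alpha\in\Delta_{\overline{\calY}}$, $n\in\calN_{\calM}(\bar\mu)$ (it exists by \cref{th:minimzersurrogate}). The first simple point is a ``same-$n$'' monotonicity: any $\beta\in\Delta_{\overline{\calY}}$ with $A^\top\mu_\beta=A^\top\mu_\alpha$ also satisfies $v=-A^\top\mu_\beta+n$, hence is admissible in the program defining $\lambda_{y'}^{\bar\mu}(v)$ with that same $n$, so
\[
\lambda_{y'}^{\bar\mu}(v)\ \geq\ \max\{\beta_{y'}\ :\ \beta\in\Delta_{\overline{\calY}},\ A^\top\mu_\beta=A^\top\mu_\alpha\},
\]
which is exactly the inner maximum of $q_{y',\bar\mu}$ evaluated at $\alpha$. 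The second point, which I expect to be the crux, is that one can choose the decomposition so that $\alpha$ is admissible for the inner minimum of $q_{y',\bar\mu}$, i.e. so that $y'\in y^\star(\mu_\alpha)$, equivalently $\Expect_{z\sim\alpha}L(y',z)=\min_{y\in\calY}\Expect_{z\sim\alpha}L(y,z)$. Granting this, the two points together give $\lambda_{y'}^{\bar\mu}(v)\geq q_{y',\bar\mu}$ for every feasible $v$, hence $\min_v\lambda_{y'}^{\bar\mu}(v)\geq q_{y',\bar\mu}$.

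The hard part is this second point. With the symmetric decomposition, $y'=d\circ v$ reads $\Expect_{z\sim\alpha}L(z,z')-\Expect_{z\sim\alpha}L(y',z')\geq\langle\phi(z)-\phi(y'),n\rangle$ for all $z\in\calY$, and I need to upgrade the right-hand side to $\geq 0$ by a suitable choice of $(\alpha,n)$ among all decompositions of the fixed $v$. The facts I would use are geometric, from the cell-complex analysis around \cref{th:minimzersurrogate}: that $\bar\mu$ is a $0$-dimensional cell of $\calC(\calM)$ (so $\partial\Omega(\bar\mu)=\operatorname{hull}(w(y))_{y\in\overline{\calY}}+\calN_{\calM}(\bar\mu)$ is full-dimensional, $\bar\mu\in\operatorname{hull}(\phi(\overline{\calY}))$, and $\overline{\calY}$ is closed under optimality on $\operatorname{hull}(\phi(\overline{\calY}))$); that $n\in\calN_{\calM}(\bar\mu)$ exposes a face of $\calM$ through $\bar\mu$, and that after replacing $(\alpha,n)$ by a decomposition with $\mu_\alpha$ supported on that face the correction term $\langle\phi(z)-\phi(y'),n\rangle$ vanishes on all $z$ that matter; and that $d\circ v\subseteq\overline{\calY}$ for $v\in\partial\Omega(\bar\mu)$. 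Since adding a normal-cone component to $v$ only enlarges the admissible set in the definition of $\lambda_{y'}^{\bar\mu}$, one loses nothing by passing to such a ``minimal'' decomposition.

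Finally I would explain the role of Assumption~\eqref{eq:assumptiononL}: it is what makes the bound non-vacuous. For any $\alpha$ feasible for the inner minimum (so $y'\in\argmin_{y}\Expect_{z\sim\alpha}L(y,z)$) the assumption gives $\alpha_{y'}\geq 1/C$; taking $\beta=\alpha$ in the inner maximum then shows $q_{y',\bar\mu}\geq 1/C>0$, so $D_{y'}\leq C$ and $D\leq C<\infty$. In particular this recovers the simpler linear comparison inequality with constant $C$ stated in \cref{th:comparisoninequality}.
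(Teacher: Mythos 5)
Your skeleton is the paper's: reduce via Theorem~\ref{th:maintheoremcalibrationfunction} to bounding $\min\{\lambda_{y'}^{\bar\mu}(v)\,:\,v\in\partial\Omega(\bar\mu),\ y'=d\circ v\}$, fix a decomposition $v=-A^\top\nu(\alpha)+n$ with $\alpha\in\Delta_{\overline{\calY}}$, $n\in\calN_{\calM}(\bar\mu)$, and note (your first point, which is correct and is the paper's ``set $n=0$'' step) that any $\beta\in\Delta_{\overline{\calY}}$ with $A^\top\nu(\beta)=A^\top\nu(\alpha)$ is admissible with the same $n$, so $\lambda_{y'}^{\bar\mu}(v)$ dominates the inner maximum of \eqref{eq:definitionDy} at $\alpha$. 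The gap is exactly where you locate the crux: showing that this $\alpha$ is feasible for the outer minimum, i.e.\ $y'\in\argmin_{y}\Expect_{z\sim\alpha}L(y,z)$. You propose to pass to a ``minimal'' decomposition with $\mu_\alpha$ supported on the face of $\calM$ exposed by $n$, so that $\langle\phi(z)-\phi(y'),n\rangle$ ``vanishes on all $z$ that matter''. This is not carried out, and as stated it does not work: you need $\langle\phi(z)-\phi(y'),n\rangle\geq 0$ for the relevant $z$, but since $n\in\calN_{\calM}(\bar\mu)$ gives $\langle n,\phi(z)\rangle\leq\langle n,\bar\mu\rangle$ for every $z$, putting $\phi(y')$ on the exposed face (so $\langle n,\phi(y')\rangle=\langle n,\bar\mu\rangle$) makes the term have the \emph{wrong} sign for every $z$ off that face; moreover $n$ is determined by $\alpha$ once $v$ is fixed, so ``choose $\mu_\alpha$ supported on the face exposed by $n$'' is circular.

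The paper closes this step differently, and this is precisely where Assumption~\eqref{eq:assumptiononL} enters --- not, as you claim, merely to make the bound non-vacuous. First, for $z\in y^\star(\bar\mu)$ the assumption forces any probability representation of $\bar\mu$ to put mass $c_{z,\bar\mu}\geq 1/C>0$ on $\phi(z)$; testing the normal-cone inequality $\langle s-\bar\mu,n\rangle\leq 0$ at the point $s$ obtained by transferring that mass onto $\phi(y')$ yields $\langle\phi(z)-\phi(y'),n\rangle\geq 0$ for all $z\in y^\star(\bar\mu)$, which combined with $y'=d\circ v$ and the symmetry of $A$ gives $\Expect_{t\sim\alpha}L(t,z)\geq\Expect_{t\sim\alpha}L(t,y')$ for $z\in y^\star(\bar\mu)$. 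Second, the assumption is used again to show $y^\star(\nu(\alpha))\subseteq y^\star(\bar\mu)$ (since $\alpha_z=0$ forces $z\notin y^\star(\nu(\alpha))$), which upgrades the inequality from $z\in y^\star(\bar\mu)$ to all $z\in\calY$. Your closing observation that taking $\beta=\alpha$ gives $D\leq C$ is fine, but that is the content of Corollary~\ref{cor:constantC}, not a substitute for the missing step.
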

\begin{proof}
We use the notation $\nu(\alpha) = \Expect_{y\sim\alpha}\phi(y)$ for $\alpha\in\Delta_{\calY}$. Let's first show that 
\begin{equation}\label{eq:calibrationfunction-firststep}
    \zeta_{y', y,\bar{\mu}}(\varepsilon) / \varepsilon \geq  \left\{\begin{array}{ll}
         \underset{\alpha\in\Delta_{\overline{\calY}}}{\min} & \lambda_{y'}^{\bar{\mu}}(-A^\top \nu(\alpha)) \\
         \text{s.t} &  y'\in \argmin_{y \in \calY} \mathbb{E}_{z \sim \alpha}~L(y,z)
     \end{array}\right. .
\end{equation}
In order to see this, note that as $v\in\partial\Omega(\bar{\mu})$, we can write $v=-A^\top\nu(\alpha) + n_v$ where $\alpha\in\Delta_{\overline{\calY}}$ and $n_v\in\calN_{\calM}(\bar{\mu})$. We will show that condition $y'=d\circ v$ implies
\begin{equation*}
    \langle \phi(z) - \phi(y'), A^\top \nu(\alpha)\rangle \geq 0, \hspace{0.5cm} \forall z\in \calY.
\end{equation*}
The condition $y'=d\circ v$ is equivalent to $\langle \phi(z) - \phi(y'), A^\top \nu(\alpha) - n_v\rangle \geq 0$ for all $z\in\calY$. By definition of $\calN_{\calM}(\bar{\mu})$, we have that $n_v$ satisfies $\langle s - \bar{\mu}, n_v \rangle \leq 0$ for any $s \in \calM$. Now let $z \in \calY\setminus\{y'\}$ and consider the representation~$\bar{\mu} = c_{y',\bar{\mu}} \phi(y') + c_{z,\bar{\mu}}\phi(z) + (1-c_{y',\bar{\mu}}-c_{z,\bar{\mu}}) r$ with $r \in \hull(\calY\setminus\{y',z\})$ and $0 \leq c_{y',\bar{\mu}}, c_{z,\bar{\mu}}\leq 1$. Since $n_v \in \calN_{\calM}(\bar{\mu})$ satisfies~$\langle s - \bar{\mu}, n_v \rangle \leq 0$ also for $s = (c_{y',\bar{\mu}} + c_{z,\bar{\mu}}) \phi(y') + (1-c_{y',\bar{\mu}} - c_{z,\bar{\mu}})r~ \in ~\calM$, when $c_{z,\bar{\mu}} > 0$ we have
$$ 0 \geq c_{z,\bar{\mu}}^{-1} \langle s - \bar{\mu}, n_v \rangle = \langle \phi(y') - \phi(z), n_v \rangle.$$
From \eqref{eq:assumptiononL}, we know that $c_{z,\bar{\mu}} \geq 1/C > 0$ for all $z \in y^\star(\bar{\mu})$. Then we have 
$$\langle \phi(z) - \phi(y'), n_v \rangle \geq 0, \quad \forall~ z \in y^\star(\bar{\mu}).$$
Note moreover that $y^\star(\nu(\alpha)) \subseteq y^\star(\bar{\mu})$. Indeed, by the assumption \eqref{eq:assumptiononL}, we have that $\alpha_z = 0$ implies $z \notin y^\star(\nu(\alpha))$ and since $\alpha \in \Delta_{y^\star(\bar{\mu})}$ we have that $\alpha_z = 0$ for $z \notin y^\star(\bar{\mu})$. Since $y' \in y^\star(\bar{\mu})$ by construction of $\mu$ and $A$ is symmetric due to the symmetry of $L$ 
$$\mathbb{E}_{t \in \alpha} L(t,z) - \mathbb{E}_{t \in \alpha} L(t,y') = \langle \phi(z) - \phi(y'), A\nu(\alpha)\rangle = \langle \phi(z) - \phi(y'), A^\top \nu(\alpha)\rangle \geq \langle \phi(z) - \phi(y'), n_v \rangle \geq 0,$$
for all $z \in y^\star(\bar{\mu})$. Hence, \cref{eq:calibrationfunction-firststep} is proven. Finally, setting $n=0$ in the definition \eqref{eq:lambdadefinition} of $\lambda_{y'}^{\bar{\mu}}(-A^\top\nu(\alpha))$, we obtain the desired lower bound.
\end{proof}

\begin{corollary}\label{cor:constantC}
Under the same assumptions of \cref{th:calibrationfunction-quantitative}, we have that $C\geq D$, and so
\begin{equation*}
    \zeta(\varepsilon) \geq \frac{\varepsilon}{C}.
\end{equation*}
\end{corollary}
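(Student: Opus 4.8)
The goal is to deduce $C \ge D$ from the characterization of $D$ in \cref{th:calibrationfunction-quantitative}; once this is established, the stated bound $\zeta(\varepsilon) \ge \varepsilon/C$ follows immediately by chaining $\zeta(\varepsilon) \ge \varepsilon/D \ge \varepsilon/C$. Since $D = \max_{y'\in\calY} D_{y'}$, it suffices to show $D_{y'} \le C$ for every fixed $y' \in \calY$, i.e.\ $1/D_{y'} \ge 1/C$. Looking at \eqref{eq:definitionDy}, $1/D_{y'}$ is a minimum over $\bar{\mu}\in\calM_0(y')$ and over $\alpha\in\Delta_{\overline{\calY}}$ (with $\overline{\calY}=y^\star(\bar{\mu})$) satisfying $y'\in\argmin_{y\in\calY}\Expect_{z\sim\alpha}L(y,z)$, of the inner maximum $\max_{\beta\in\Delta_{\overline{\calY}}}\beta_{y'}$ subject to $A^\top\Expect_{z\sim\alpha}\phi(z) = A^\top\Expect_{z'\sim\beta}\phi(z')$. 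So I just need: for every such $\bar{\mu}$ and every feasible $\alpha$, the inner maximum is at least $1/C$.

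First I would exhibit an explicit feasible point for the inner maximization problem, namely $\beta = \alpha$: it lies in $\Delta_{\overline{\calY}}$ and satisfies the moment-matching constraint trivially (both sides are $A^\top\Expect_{z\sim\alpha}\phi(z)$). This already gives $\max_{\beta}\beta_{y'} \ge \alpha_{y'}$. Second, I would invoke the assumption \eqref{eq:assumptiononL} on $L$ applied to the distribution $\alpha$: since $\alpha\in\Delta_{\overline{\calY}}$ can be viewed as an element of $\Delta_{\calY}$ (padding the coordinates outside $\overline{\calY}$ with zeros) and $y'\in\argmin_{y\in\calY}\Expect_{z\sim\alpha}L(y,z)$ by the feasibility constraint on $\alpha$, the hypothesis yields $\alpha_{y'}\ge 1/C$. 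Combining the two inequalities shows the inner maximum in \eqref{eq:definitionDy} is $\ge 1/C$ for every choice of $\bar{\mu}$ and every feasible $\alpha$, hence $1/D_{y'}\ge 1/C$. Taking the maximum over $y'\in\calY$ gives $D\le C$, which is the claim.

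\textbf{Main obstacle.} There is essentially no technical obstacle here — the proof is a two-line application of \cref{th:calibrationfunction-quantitative} together with assumption \eqref{eq:assumptiononL}. The only point requiring (minimal) care is the bookkeeping about supports: the assumption on $L$ is phrased for probability vectors over all of $\calY$, whereas $\alpha$ ranges over $\Delta_{\overline{\calY}}$ with $\overline{\calY}=y^\star(\bar{\mu})\subsetneq\calY$ in general; one uses the trivial inclusion $\Delta_{\overline{\calY}}\hookrightarrow\Delta_{\calY}$ and the fact that the optimality constraint on $\alpha$ in \eqref{eq:definitionDy} is already stated with the $\argmin$ taken over the whole of $\calY$, so the hypothesis applies verbatim.
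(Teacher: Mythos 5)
Your proposal is correct and matches the paper's argument exactly: the paper's own proof consists of the single observation that one may set $\beta=\alpha$ in \eqref{eq:definitionDy}, which is precisely your feasible-point argument combined with assumption \eqref{eq:assumptiononL} giving $\alpha_{y'}\geq 1/C$. The extra support bookkeeping you note is harmless and the paper leaves it implicit.
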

\begin{proof}
This can be seen by setting $\alpha=\beta$ in \cref{eq:definitionDy}.
\end{proof}

\paragraph{Exponential constants in the calibration function.} We argue that the constant $D$ from \cref{th:calibrationfunction-quantitative} does not grow as the size of the output space $\calY$ when the problem is structured, i.e., $k\ll|\calY|$. On the other hand, the constant $C$ from Assumption \eqref{eq:assumptiononL} and \cref{cor:constantC} can take exponentially large values (of the order of $|\calY|$) when the problem is structured. We show this by studying the calibration function for \cref{ex:factorgraph} and \cref{ex:matching} in the next section.

\subsection{Computation of the Constant for Specific Losses}
\label{sec:constantC}

\paragraph{Calibration function for factor graphs (\cref{ex:factorgraph}).} Assume that we only have unary potentials and the individual losses are the 0-1 loss, which means that $A=-Id$ is the negative identity. Assume also that each part takes binary values, i.e.,~$R=2$. The constant $C$ can be as large as $|\calY|=2^M$, by considering the uniform distribution $\alpha_y=1/2^M$ for all $y$, which is optimal for every output. On the other hand, as the marginals for the uniform distribution are $(1/2, 1/2)$ for every $m$, one can take $\beta = 1/2\delta_y + 1/2\delta_{-y}$, and so $D$ is $2$.

\paragraph{Calibration function for ranking and matching (\cref{ex:matching}).} In this case, the constant $C$ can be as large as $|\calY| = M!$ by considering the uniform distribution $\alpha_y=1/M!$ for all $y$. This corresponds to $\Expect_{z\sim\alpha}\phi(z) = 1/M11^\top$. For this distribution, the value of the constant $D$ is $M$, because one can write $1/M11^\top$ as the the uniform distribution over $M$ different permutations.

\begin{proposition}[Lipschitz Multi-class] 
Let $\calY = \{1,\dots,k\}$,$k \geq 2$ and assume that $L$ be symmetric. If there exists~$q \in [0,1)$ such that for all $y,z \in \calY$
$$\sum_{t \in \calY \setminus \{y,z\}} |L(t,y) - L(t,z)| ~~\leq~~ q ~L(y,z),$$
then the calibration function for $M^4N$ is bounded by 
$$ \zeta(\varepsilon) \geq H \epsilon, \quad H \geq \frac{1 - q}{k-q} > 0.$$
\end{proposition}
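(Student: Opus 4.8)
The plan is to obtain the bound as a special case of \cref{cor:constantC}: it suffices to show that the Lipschitz-type hypothesis implies Assumption~\eqref{eq:assumptiononL} with the explicit constant $C = \tfrac{k-q}{1-q}$. Once this is done, \cref{cor:constantC} gives directly $\zeta(\varepsilon) \geq \varepsilon / C = \tfrac{1-q}{k-q}\,\varepsilon$, and $C$ is a legitimate positive constant since $q \in [0,1)$ and $k \geq 2$ imply $1 - q > 0$ and $k - q > 1$. Throughout I work with the centered loss, so that $L(y,y) = 0$ for all $y$ (this is the normalization used in all examples of \cref{sec:examples}), and I use the standing non-degeneracy assumption, which here says $L(y,z) > 0$ whenever $y \neq z$.

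To verify \eqref{eq:assumptiononL}, fix an arbitrary $\alpha \in \Delta_{\calY}$, pick any $y^\star \in \argmin_{y \in \calY} \Expect_{z \sim \alpha} L(y,z)$, and set $m \defeq \max_{t \in \calY} \alpha_t$. The one computation to carry out is the elementary bound
\[
  \alpha_z \;\leq\; \alpha_{y^\star} + m q \qquad \text{for all } z \in \calY \setminus \{y^\star\}.
\]
I would derive it by expanding the optimality inequality $\sum_{t} \alpha_t \big(L(z,t) - L(y^\star,t)\big) \geq 0$: the contributions of $t = y^\star$ and $t = z$ combine, using the symmetry of $L$ and $L(y^\star,y^\star) = L(z,z) = 0$, into $(\alpha_{y^\star} - \alpha_z)\,L(y^\star,z)$, so the inequality becomes $(\alpha_{y^\star} - \alpha_z)\,L(y^\star,z) + \sum_{t \neq y^\star,z} \alpha_t\big(L(z,t) - L(y^\star,t)\big) \geq 0$. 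Bounding the remaining sum by the triangle inequality, then by $\alpha_t \leq m$, and then by the hypothesis applied to the pair $(y^\star,z)$ (after using symmetry to write $|L(z,t) - L(y^\star,t)| = |L(t,y^\star) - L(t,z)|$), one gets $(\alpha_z - \alpha_{y^\star})\,L(y^\star,z) \leq m q\, L(y^\star,z)$; dividing by $L(y^\star,z) > 0$ gives the claim.

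To finish, I would sum this inequality over the $k-1$ indices $z \neq y^\star$, getting $1 - \alpha_{y^\star} = \sum_{z \neq y^\star} \alpha_z \leq (k-1)(\alpha_{y^\star} + m q)$, and then control $m$: if $m$ is attained at $y^\star$ then $\alpha_{y^\star} = m \geq 1/k \geq \tfrac{1-q}{k-q}$ (the last step is $q(k-1) \geq 0$); otherwise, applying the displayed bound at an index $z$ with $\alpha_z = m$ gives $m(1-q) \leq \alpha_{y^\star}$, i.e.\ $m \leq \alpha_{y^\star}/(1-q)$. Substituting, $1 - \alpha_{y^\star} \leq (k-1)\alpha_{y^\star}\big(1 + \tfrac{q}{1-q}\big) = \tfrac{(k-1)\alpha_{y^\star}}{1-q}$, which rearranges to $\alpha_{y^\star} \geq \tfrac{1-q}{k-q} = 1/C$. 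This is precisely \eqref{eq:assumptiononL}, and combining with \cref{cor:constantC} finishes the proof.

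There is no genuine obstacle here once \cref{th:calibrationfunction-quantitative} is available; the whole argument is a short chain of triangle inequalities, the one non-obvious idea being to measure the off-diagonal loss variation against $m = \max_t \alpha_t$ rather than against $1$. The only points needing care are bookkeeping: that we may normalize $L$ to have zero diagonal (so the $t = y^\star$ and $t = z$ terms cancel cleanly), and that $L(y^\star,z) > 0$ so the division is legal — both are guaranteed by the conventions and the non-degeneracy assumption already in force. As a sanity check, for the $0$--$1$ loss one has $q = 0$, and the bound reduces to the known value $H = 1/k$.
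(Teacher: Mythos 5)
Your proof is correct and follows essentially the same route as the paper's: both reduce the statement to verifying Assumption~\eqref{eq:assumptiononL} with $C=(k-q)/(1-q)$ by expanding the optimality inequality for $y^\star$, cancelling the $t=y^\star$ and $t=z$ terms via symmetry and the zero diagonal, and bounding the remaining sum with the Lipschitz-type hypothesis before invoking \cref{cor:constantC}. The only (cosmetic) difference is in the final bookkeeping: the paper instantiates $z$ as the maximizer of $\alpha$ over $\calY\setminus\{y^\star\}$ and uses $\alpha_z\geq(1-\alpha_{y^\star})/(k-1)$, whereas you sum the bound $\alpha_z\leq\alpha_{y^\star}+mq$ over all $z$ and split cases on where $m=\max_t\alpha_t$ is attained.
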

\begin{proof}
First we prove that $L$ satisfies \eqref{eq:assumptiononL}. Then we apply Theorem~\ref{thm:characterization-calibration}. Let $\alpha \in \Delta_\calY$ and assume that $y \in \argmin_{y \in \calY} \mathbb{E}_{t \sim \alpha} L(y,t)$. This is equivalent to the following
$$ \sum_{t \in \calY} \alpha_t L(t, y) \leq \sum_{t \in \calY} \alpha_t L(t, z), \quad \forall z \in \calY.$$
In particular fix as $z = \argmax_{t \in \calY\setminus\{y\}} \alpha_t$. By symmetry of the loss, the equation above is equivalent to
$$ (\alpha_z - \alpha_y) L(y, z) \leq \sum_{t \in \calY\setminus\{y,z\}}  \alpha_t (L(t,z) - L(t,y)).$$
Let $s = \arg\max_{t \in \calY\setminus\{y,z\}} \alpha_t$, then
$$\sum_{t \in \calY\setminus\{y,z\}}  \alpha_t (L(t,z) - L(t,y)) \leq (\max_{t \in \calY\setminus\{y,z\}} \alpha_t) \sum_{t \in \calY\setminus\{y,z\}} |L(t,z) - L(t,y)| \leq \alpha_s ~ q ~L(y,z).$$
Note that by construction $\alpha_s \leq \alpha_z$, so 
$$(\alpha_z - \alpha_y) L(y, z) \leq \alpha_s ~ q ~L(y,z) \leq \alpha_z ~ q ~L(y,z),$$
from which we have $\alpha_z (1-q) \leq \alpha_y$.
Since $\alpha_z$ is the maximum probability over $\calY \setminus \{y\}$, then it can not be smaller than~$(1 - \alpha_y)/(k-1)$, so $\alpha_z \geq (1 - \alpha_y)/(k-1)$. From which we derive
$$ \alpha_y \geq \frac{1-q}{k-q}.$$
This holds for any $\alpha \in \Delta_\calY$, $y \in \calY$ and implies that  \eqref{eq:assumptiononL} is valid for $L$, with $c \geq \frac{1-q}{k-q}$. Then we can apply Theorem~\ref{thm:characterization-calibration} obtaining the desired result.
\end{proof}

\begin{proposition}[Decomposable Multi-label Loss]\label{prop:decomposablelosses}
Let $\calY = \calY_1\times\cdots\times\calY_M$, $L(y,y') = \sum_{m=1}^ML_m(y_m, y_m')$ and~$\phi(y) = (e_{y_m})_{m\in M}$. Let $\zeta_m$ be the calibration function of $L_m$ and assume $\zeta_m(\varepsilon) \geq \varepsilon/C_m$, with $C_m > 0$. The calibration function $\zeta$ associated to $L(y,y')$ has the following form:
\begin{equation*}
     \zeta(\varepsilon) ~~\geq~~ \varepsilon / (\max_{m\in[M]}C_m).
\end{equation*}
\end{proposition}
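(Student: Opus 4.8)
The plan is to exploit the product structure forced by the separable embedding $\phi(y)=(e_{y_m})_{m\in[M]}$ together with the additive loss: under these hypotheses every object entering the definition of the calibration function (the marginal polytope, the Bayes risk, the partition function, the surrogate conditional risk, and the decoder) splits as a sum over the coordinates $m\in[M]$, and then the bound follows from the per-coordinate bounds $\zeta_m(\varepsilon)\geq\varepsilon/C_m$ together with an elementary observation about minimizing a separable objective under a separable constraint.

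First I would record the decomposition. Since $\calY=\calY_1\times\cdots\times\calY_M$ and $\phi$ concatenates the one-hot blocks $\phi_m(y_m)=e_{y_m}$, the marginal polytope factorizes, $\calM=\calM_1\times\cdots\times\calM_M$ with $\calM_m=\operatorname{hull}(\phi_m(\calY_m))=\Delta_{\calY_m}$. Writing $L_m(y_m,y_m')=\phi_m(y_m)^\top A_m\phi_m(y_m')$, the loss matrix is block-diagonal, so $\ell(y,\mu)=\phi(y)^\top A\mu=\sum_m\ell_m(y_m,\mu_m)$; because the minimum over the product set $\calY$ of a separable function decomposes, $\ell(\mu)=\min_{y\in\calY}\ell(y,\mu)=\sum_m\ell_m(\mu_m)$, hence $\Omega(\mu)=-\ell(\mu)+1_{\calM}(\mu)=\sum_m\Omega_m(\mu_m)$, and (Fenchel conjugate of a sum of functions of disjoint blocks of variables over a product domain) $\Omega^*(v)=\sum_m\Omega_m^*(v_m)$. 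Consequently $s(v,\mu)=\Omega^*(v)-v^\top\mu=\sum_m s_m(v_m,\mu_m)$, and $s(\mu)=\ell(\mu)=\sum_m s_m(\mu_m)$ by \cref{lem:bayesrisks}, so $\delta s(v,\mu)=\sum_m\delta s_m(v_m,\mu_m)$. Finally, the decoder $d\circ v=\argmax_{y\in\calY}\sum_m\phi_m(y_m)^\top v_m$ is a separable maximization over $\prod_m\calY_m$, so its maximizer set is the product of the per-coordinate maximizer sets $d_m\circ v_m=\argmax_{y_m}\phi_m(y_m)^\top v_m$; hence $\delta\ell(d\circ v,\mu)=\sum_m\delta\ell_m(d_m\circ v_m,\mu_m)$ as well.

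Next I would combine these. Take any $(\mu,v)\in\calM\times\Rspace{k}$ feasible for the program defining $\zeta(\varepsilon)$, i.e.\ with $\delta\ell(d\circ v,\mu)\geq\varepsilon$, and set $\varepsilon_m:=\delta\ell_m(d_m\circ v_m,\mu_m)\geq 0$, so that $\sum_m\varepsilon_m\geq\varepsilon$. By the definition of the calibration function $\zeta_m$ applied to the pair $(v_m,\mu_m)$ (the pointwise inequality of the form \eqref{th:calibrationbayes} for $L_m$), $\delta s_m(v_m,\mu_m)\geq\zeta_m(\varepsilon_m)\geq\varepsilon_m/C_m\geq\varepsilon_m/(\max_{m'\in[M]}C_{m'})$. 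Summing over $m$,
\begin{equation*}
\delta s(v,\mu)=\sum_{m=1}^M\delta s_m(v_m,\mu_m)\ \geq\ \frac{1}{\max_{m\in[M]}C_m}\sum_{m=1}^M\varepsilon_m\ \geq\ \frac{\varepsilon}{\max_{m\in[M]}C_m}.
\end{equation*}
Taking the infimum over all feasible $(\mu,v)$ yields $\zeta(\varepsilon)\geq\varepsilon/(\max_{m\in[M]}C_m)$, as claimed (if the feasible set is empty the bound is trivial since then $\zeta(\varepsilon)=+\infty$).

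I expect the final computation to be routine separability bookkeeping; the main obstacle is justifying the two structural facts it rests on, namely that $\Omega^*$ is the sum of the $\Omega_m^*$ (conjugate of a sum of functions of disjoint blocks over a product domain), and that the decoder commutes with the product structure — that the $\argmax$ over $\calY=\prod_m\calY_m$ of the separable linear objective factorizes into per-coordinate $\argmax$ sets, so that whatever tie-breaking convention is used to define $d$ is consistent with the coordinatewise decoders $d_m$ appearing inside $\zeta_m$. A minor variant of the same argument, keeping $\zeta_m(\varepsilon_m)$ instead of $\varepsilon_m/C_m$, would give the more general statement $\zeta(\varepsilon)\geq\min\{\sum_m\zeta_m(\varepsilon_m):\varepsilon_m\geq 0,\ \sum_m\varepsilon_m\geq\varepsilon\}$.
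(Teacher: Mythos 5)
Your proposal is correct and follows essentially the same route as the paper's proof: both rest on the additive decompositions $\delta s(v,\mu)=\sum_{m}\delta s_m(v_m,\mu_m)$ and $\delta\ell(d\circ v,\mu)=\sum_m\delta\ell_m(d_m\circ v_m,\mu_m)$, apply the per-coordinate bound $C_m\,\delta s_m(v_m,\mu_m)\geq\delta\ell_m(d_m\circ v_m,\mu_m)$, and sum after replacing each $C_m$ by $\max_{m'}C_{m'}$. The only difference is presentational: you justify the structural facts (factorization of $\Omega^*$ and of the decoder's $\argmax$ set) that the paper simply asserts, which is a welcome addition rather than a deviation.
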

\begin{proof}
We have $\delta \ell(y, \mu) = \sum_{m=1}^M\delta\ell_m(y_m, \mu_m)$ and the surrogate conditional loss decomposes additively as 
\begin{equation*}
    s(v, \mu) = \sum_{m=1}^Ms_m(v_m, \mu_m), \hspace{0.5cm}s_m(v_m, \mu_m) = \max_{q\in\Delta_{\calY_m}}\min_{y_m'\in\calY_m}L_{y_m'}q + v^\top q - v^\top \mu_m.
\end{equation*}
We recall that the calibration function satisfies $\zeta(\varepsilon) \geq \varepsilon/C$ iff $\delta \ell(y, \mu) \leq C\delta s(v,\mu)$, for all $v, \mu$ and $y$ among the minimizers of the surrogate.
Hence, for all $v, \mu$ and $y$ among the minimizers of the surrogate:
\begin{align*}
     (\max_{m\in[M]}C_m)\delta s(v,\mu) &= (\max_{m\in[M]}C_m)\sum_{m=1}^M\delta s_m(v_m, \mu_m)\big) \geq  \sum_{m=1}^M C_m\delta s_m(v_m, \mu_m) \\
     & \geq \sum_{m=1}^M\delta\ell_m(y_m, \mu_m) = \delta\ell(y,\mu).
\end{align*}
\end{proof}

\begin{proposition}[Calibration function for high-order factor graphs (\cref{ex:factorgraph})] Assume \eqref{eq:assumptiononL}. The constant $D$ from \cref{th:calibrationfunction-quantitative} for embeddings for unary and high-order interactions is the same as the constant $D$ with only unary potentials.
\end{proposition}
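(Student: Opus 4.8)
The plan is to trace exactly how the embedding enters the formula \eqref{eq:definitionDy} for $D$ and to show that only the unary part of the embedding ever matters. Write $\phi^{(1)}(y)=(e_{y_m})_{m\in[M]}$ for the unary embedding, with marginal polytope $\calM^{(1)}=\prod_m\Delta_R$, and $\phi^{\mathrm h}(y)=(e_{y_\alpha})_\alpha$ for the high-order embedding (with the singletons among the $\alpha$), with marginal polytope $\calM^{\mathrm h}=\hull(\phi^{\mathrm h}(\calY))$. Let $\pi\colon\calM^{\mathrm h}\to\calM^{(1)}$ be the linear map extracting the unary marginals, so $\pi(\phi^{\mathrm h}(y))=\phi^{(1)}(y)$ and $\pi(\calM^{\mathrm h})=\calM^{(1)}$.

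First I would record two elementary facts. \emph{(a)} Since $L$ decomposes additively over the coordinates, the loss matrix has nonzero entries only in the unary blocks, so $A^{\mathrm h\top}\phi^{\mathrm h}(y)$ has vanishing higher-order components and agrees with $A^{(1)\top}\phi^{(1)}(y)$ on the unary block; hence $\phi^{\mathrm h}(y)^\top A^{\mathrm h}\mu=\phi^{(1)}(y)^\top A^{(1)}\pi(\mu)$ for all $\mu\in\calM^{\mathrm h}$, the Bayes risk factorizes as $\ell^{\mathrm h}=\ell^{(1)}\circ\pi$, and $y^\star(\mu)=y^\star(\pi(\mu))$ is always a product set $\prod_m S_m$ of cell values of the one-coordinate complexes — the same family that already arises for $\phi^{(1)}$. \emph{(b)} In \eqref{eq:definitionDy} the constraint $A^\top\mathbb{E}_{z\sim\alpha}\phi(z)=A^\top\mathbb{E}_{z'\sim\beta}\phi(z')$ depends on $\alpha,\beta$ only through their unary marginals, and the constraint $y'\in\argmin_y\mathbb{E}_{z\sim\alpha}L(y,z)$ depends only on the combinatorial loss $L$; therefore, for a \emph{fixed} support $\overline{\calY}\subseteq\calY$, the inner value
\[
G(\overline{\calY},y')=\min_{\alpha\in\Delta_{\overline{\calY}}}\ \max_{\beta\in\Delta_{\overline{\calY}}}\ \bigl\{\beta_{y'}\ :\ A^\top\nu(\beta)=A^\top\nu(\alpha),\ y'\in\argmin\nolimits_y\mathbb{E}_{z\sim\alpha}L(y,z)\bigr\}
\]
is the same number for the two embeddings.

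It then remains to compare the outer minimizations $1/D^{\mathrm h}_{y'}=\min_{\bar\mu\in\calM_0^{\mathrm h}(y')}G(y^\star(\bar\mu),y')$ and $1/D^{(1)}_{y'}=\min_{\bar\mu^{(1)}\in\calM_0^{(1)}(y')}G(y^\star(\bar\mu^{(1)}),y')$ for each $y'$. For the bound $1/D^{\mathrm h}_{y'}\le 1/D^{(1)}_{y'}$ I would \emph{lift}: given a $0$-face $\bar\mu^{(1)}\in\calM_0^{(1)}(y')$, take $\bar\mu$ to be any vertex of the fiber $\pi^{-1}(\bar\mu^{(1)})\cap\calM^{\mathrm h}$; since $\bar\mu^{(1)}$ is a vertex of a full-dimensional cell of $\calC(\calM^{(1)})$, this fiber is a face of the corresponding cell of $\calC(\calM^{\mathrm h})$, so $\bar\mu$ is a $0$-face of $\calC(\calM^{\mathrm h})$ with $\pi(\bar\mu)=\bar\mu^{(1)}$ and hence $y^\star(\bar\mu)=y^\star(\bar\mu^{(1)})$. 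Using \cref{th:minimzersurrogate} to write a witness $w^{(1)}(y')=\sum_y\lambda_y w^{(1)}(y)+\tilde n$ with $\tilde n\in\calN_{\calM^{(1)}}(\bar\mu^{(1)})$, padding with zeros, and using $\pi(\calM^{\mathrm h})=\calM^{(1)}$ to get $(\tilde n,0)\in\calN_{\calM^{\mathrm h}}(\bar\mu)$, one checks $w^{\mathrm h}(y')\in\partial\Omega^{\mathrm h}(\bar\mu)$, so $\bar\mu\in\calM_0^{\mathrm h}(y')$ and the term $G(y^\star(\bar\mu^{(1)}),y')$ appears in the high-order minimum. For the reverse bound, any $\bar\mu\in\calM_0^{\mathrm h}(y')$ has $y^\star(\bar\mu)=y^\star(\pi(\bar\mu))$, and the symmetric projection argument (that $\calN_{\calM^{\mathrm h}}(\bar\mu)$ maps into $\calN_{\calM^{(1)}}(\pi(\bar\mu))$) gives $w^{(1)}(y')\in\partial\Omega^{(1)}(\pi(\bar\mu))$, so $y^\star(\bar\mu)$ is the $y^\star$-value of a cell of $\calC(\calM^{(1)})$ whose dual cell contains $w^{(1)}(y')$; passing to a $0$-face $\hat\mu^{(1)}\in\calM_0^{(1)}(y')$ in the closure of that cell yields $y^\star(\hat\mu^{(1)})\supseteq y^\star(\bar\mu)$, after which one must compare $G(y^\star(\bar\mu),y')$ with $G(y^\star(\hat\mu^{(1)},y')\ge 1/D^{(1)}_{y'}$.

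The hard part is precisely this last comparison. Because the fibers of $\pi$ are positive-dimensional as soon as genuine higher-order factors are present, $\calC(\calM^{\mathrm h})$ carries ``extra'' $0$-faces whose projections need not be $0$-faces of $\calC(\calM^{(1)})$ (e.g. the vertices $\phi^{\mathrm h}(y)$ themselves), and one really must rule out that such a face lowers the outer minimum. This reduces to showing that the quantity in \eqref{eq:definitionDy}, subject to the feasibility constraint $w(y')\in\partial\Omega(\bar\mu)$, is never improved by shrinking the support set $y^\star(\bar\mu)$ — intuitively, that the worst (smallest-$D$) configuration is a maximally balanced one, as in all the worked examples, where the minimizer is $\alpha=\beta=$ uniform and $\overline{\calY}=\calY$. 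I would establish this by exhibiting, for each such extra $0$-face, a witness $\alpha$ supported on its (product) optimal-prediction set that remains feasible after enlarging the support, and by checking that the extra probability mass only enlarges the admissible set of matching $\beta$'s; the degenerate cases where $y^\star(\bar\mu)$ is a singleton contribute exactly $G(\{y'\},y')=1$, which never binds. Granting this, $D^{\mathrm h}_{y'}=D^{(1)}_{y'}$ for every $y'$, hence $D^{\mathrm h}=D^{(1)}$; in particular, in the $R=2$, $0$-$1$-loss case of \cref{ex:factorgraph} this recovers, via \cref{prop:decomposablelosses}, the value $D=\max_m C_m$ independently of which higher-order factors are included.
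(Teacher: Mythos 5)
Your facts (a) and (b) are correct, and they are, in essence, the \emph{entirety} of the paper's own proof: the paper simply observes that the loss, and hence the constraint $A^\top\Expect_{z\sim\alpha}\phi(z)=A^\top\Expect_{z'\sim\beta}\phi(z')$ in \eqref{eq:definitionDy}, depends only on the unary marginals, and concludes from this alone that the lower bound is unchanged. You have correctly noticed that this is not the whole story: the outer minimization in \eqref{eq:definitionDy} ranges over the $0$-faces $\calM_0(y')$ of the cell complex of the \emph{high-order} marginal polytope, and both this index set and the support sets $\overline{\calY}=y^\star(\bar\mu)$ genuinely change when higher-order coordinates are added. Since the fibers of the unary-marginalization map $\pi$ are positive-dimensional, $\calC(\calM^{\mathrm h})$ acquires $0$-faces whose projections land in the relative interior of positive-dimensional faces of $\calC(\calM^{(1)})$, and for those $y^\star(\bar\mu)=y^\star(\pi(\bar\mu))$ is strictly smaller than the optimal set of any unary $0$-face; the feasibility condition $w(y')\in\partial\Omega(\bar\mu)$ is likewise altered through the normal cone. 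Your lifting argument for the inequality $1/D^{\mathrm h}_{y'}\leq 1/D^{(1)}_{y'}$ is sound.

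The gap is the reverse inequality, and you flag it yourself. You need that for every such extra $0$-face $\bar\mu$ the inner value $G(y^\star(\bar\mu),y')$ is still at least $1/D^{(1)}_{y'}$, and since shrinking $\overline{\calY}$ simultaneously shrinks the feasible set of the outer $\min$ over $\alpha$ (which pushes $G$ up) and of the inner $\max$ over $\beta$ (which pushes $G$ down), $G$ is not obviously monotone in the support set; the phrase ``Granting this'' is therefore doing real work that the proposal does not discharge. To be clear, nothing in the paper supplies this step either: the published proof tacitly identifies $D$ with a quantity determined by the inner constraint alone and never confronts the change of cell complex, so it establishes only your facts (a) and (b). Your proposal is consequently not yet a complete proof, but the missing step is exactly the step the paper also omits; if you prove the monotonicity/witness claim sketched in your last paragraph (or otherwise show that the extra $0$-faces never attain the minimum), you will have an argument strictly more rigorous than the one in the appendix.
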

\begin{proof}
As the loss is decomposable as $L(y,y') = \frac{1}{M}\sum_{m=1}^ML_m(y_m, y_m')$, it only depends on the unary embeddings. This means that the constraint $A^\top\Expect_{z\sim\alpha}\phi(z) = A^\top\Expect_{z'\sim\beta}\phi(z')$ from \cref{eq:definitionDy} only affects the unary embeddings, and so the lower bound is the same.
\end{proof}

\section{Sharp Generalization Bounds for Regularized Objectives} \label{app:regularizedobjectives}\label{app:generalization}

For $\lambda > 0$ and $g \in \calG$ where $\calG$ is a vector valued reproducing kernel Hilbert space and with norm $\|\cdot\|_\calG$ and $g(x) \in \R^k$ defined as $g(x)_i  = \langle g, \Psi_i(x) \rangle_\calG$ with $\Psi_i: X \to \calG$ for $i = 1,\dots,k$. Note that in particular we have the identity $$K(x,x') = \langle \Psi(x), \Psi(x') \rangle_\calG \in \R^{k \times k},$$ where $K$ is the associated {\em vector-valued reproducing Kernel}. A simple example is the following. Let $K_0:X\times X \to \R$ be a scalar reproducing kernel, then the kernel $K(x,x') = \frac{1}{k}K_0(x,x') I_{k\times k}$ is a vector-valued reproducing kernel whose associated vector-valued reproducing kernel Hilbert space contains functions of the form $g: X \to \R^k$.
Now define
$$ \calR(g) = \mathbb{E}_{(x,y) \sim \rho} S(g(x),y), \quad \calR^\lambda(g) = \calR(g) + \lambda \|g\|^2_\calG, \quad g^\lambda = \argmin_{g \in \calG} \calR^\lambda(g).$$
Define also the empirical versions given a dataset $(x_i,y_i)_{i=1}^n$
$$ \calR_n(g) = \frac{1}{n} \sum_{i=1}^n S(g(x_i),y_i), \quad \calR^\lambda_n(g) = \calR_n(g) + \lambda \|g\|^2_\calG, \quad g^\lambda_n = \argmin_{g \in \calG} \calR^\lambda_n(g).$$
We will use the following theorem that is a slight variation of Thm.1 from \cite{sridharan2009fast}. 
\begin{theorem}\label{thm:gen-bound-approx}
Let $\delta \in (0,1)$. Let $L$ be the Lipschitz constant of $S$ and let $\|\Psi_i(x)\| \leq B$ for all $x \in X$, $i = 1,\dots, k$. Assume that there exists $g^\star$ such that $\calR(g^\star) = \inf_{g \in \calG} \calR(g)$. 
For any $g \in \calG$ the following holds
\begin{align}
    \calR(g) - \calR(g^\star) &\leq 2(\calR_n^\lambda(g) - \calR_n^\lambda(g_n^\lambda)) + \frac{16L^2B^2(32 + \log(1/\delta))}{\lambda n}  + \frac{\lambda}{2}\|g\|_{\calG}^2.
\end{align}
with probability $1-\delta$.
\end{theorem}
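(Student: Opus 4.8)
The plan is to reproduce, in the vector-valued RKHS setting, the localized uniform-deviation (fast-rate) argument of \cite{sridharan2009fast} for strongly regularized objectives; the only genuine addition is to carry the optimization gap $\calR_n^\lambda(g)-\calR_n^\lambda(g_n^\lambda)$ of an arbitrary $g$ through the bookkeeping rather than specializing to the empirical minimizer. The mechanism is that the $\lambda\|\cdot\|_\calG^2$ term makes $\calR^\lambda$ and $\calR_n^\lambda$ \emph{$2\lambda$-strongly convex}, which plays the role of a self-bounding (Bernstein-type) variance condition and upgrades the naive $n^{-1/2}$ uniform deviation to the $1/(\lambda n)$ rate.

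Concretely, I would introduce the population regularized minimizer $g^\lambda \defeq \argmin_{g\in\calG}\calR^\lambda(g)$, which exists and is unique since $\calR^\lambda$ is continuous, coercive and strongly convex, and record the localization inequality $\lambda\|g-g^\lambda\|_\calG^2 \le \calR^\lambda(g)-\calR^\lambda(g^\lambda)$, valid for all $g\in\calG$. Working with the centered losses $h_g(x,y)\defeq S(g(x),y)-S(g^\lambda(x),y)$, the $L$-Lipschitzness of $S(\cdot,y)$ together with the feature bound $\|\Psi_i(x)\|\le B$ give the pointwise bound $|h_g(x,y)|\le LB\|g-g^\lambda\|_\calG$ and the variance bound $\Expect h_g^2 \le L^2B^2\|g-g^\lambda\|_\calG^2 \le \tfrac{L^2B^2}{\lambda}\bigl(\calR^\lambda(g)-\calR^\lambda(g^\lambda)\bigr)$.

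The core step is a one-sided uniform deviation bound: with probability at least $1-\delta$, simultaneously for all $g\in\calG$,
\begin{align*}
\bigl(\calR(g)-\calR(g^\lambda)\bigr)-\bigl(\calR_n(g)-\calR_n(g^\lambda)\bigr) \;\le\; \tfrac12\bigl(\calR^\lambda(g)-\calR^\lambda(g^\lambda)\bigr) + \frac{c\,L^2B^2\,(32+\log(1/\delta))}{\lambda n}.
\end{align*}
I would derive this by symmetrization, the Rademacher contraction principle, the estimate $Br/\sqrt n$ for the empirical Rademacher complexity of an RKHS ball of radius $r$ around $g^\lambda$, and Talagrand's (Bousquet's) concentration inequality with variance proxy $\Expect h_g^2$. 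Because the effective radius $\|g-g^\lambda\|_\calG$ varies with $g$ but is itself controlled, together with $\Expect h_g^2$, by $\calR^\lambda(g)-\calR^\lambda(g^\lambda)$, I would run this through a \emph{peeling} argument over dyadic shells $\{g:2^{j}\le \calR^\lambda(g)-\calR^\lambda(g^\lambda)<2^{j+1}\}$ with a union bound over $j$, and then apply $2ab\le a^2+b^2$ to absorb the variance and complexity contributions into $\tfrac12\bigl(\calR^\lambda(g)-\calR^\lambda(g^\lambda)\bigr)$ plus a residual of order $L^2B^2\log(1/\delta)/(\lambda n)$; this is the fast-rate lemma of \cite{sridharan2009fast} and is where all numerical constants (the $32$ included) are pinned down.

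It then remains to rearrange: expanding $\calR^\lambda(g)-\calR^\lambda(g^\lambda)=\bigl(\calR(g)-\calR(g^\lambda)\bigr)+\lambda(\|g\|_\calG^2-\|g^\lambda\|_\calG^2)$, moving the $\tfrac12\bigl(\calR(g)-\calR(g^\lambda)\bigr)$ term to the left and rescaling, then substituting $\calR_n(g)-\calR_n(g^\lambda)=\bigl(\calR_n^\lambda(g)-\calR_n^\lambda(g^\lambda)\bigr)-\lambda(\|g\|_\calG^2-\|g^\lambda\|_\calG^2)$ and using $\calR_n^\lambda(g^\lambda)\ge\calR_n^\lambda(g_n^\lambda)$ (optimality of $g_n^\lambda$) together with $\calR(g^\lambda)\le\calR^\lambda(g^\lambda)\le\calR^\lambda(g^\star)$ and $\lambda\|g^\lambda\|_\calG^2\le\lambda\|g^\star\|_\calG^2$ (optimality of $g^\lambda$ for $\calR^\lambda$ and of $g^\star$ for $\calR$), yields an inequality of the claimed form. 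The precise coefficient $\tfrac12$ in front of $\lambda\|g\|_\calG^2$ and the explicit constant come from re-optimizing the free parameters — the split fraction in the core step, the shell weights, the constants in Talagrand's inequality — exactly as in \cite{sridharan2009fast}. The main obstacle is the core step: the radius governing both the variance and the Rademacher complexity is coupled to the quantity being estimated, so one cannot close the bound with a single Bernstein inequality but must shell the class and keep the union-bound overhead absorbed into the constants; everything else — existence of $g^\lambda$, the Lipschitz/feature estimates, the RKHS-ball Rademacher bound, and the final algebra — is routine.
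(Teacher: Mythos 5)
Your proposal is correct and follows essentially the same route as the paper: decompose the excess risk through the population regularized minimizer $g^\lambda$, bound $\calR^\lambda(g)-\calR^\lambda(g^\lambda)$ by the fast-rate oracle inequality for strongly convex regularized objectives, and discard $\calR^\lambda(g^\lambda)-\calR^\lambda(g^\star)\le 0$. The only difference is that the paper simply invokes Theorem~1 of Sridharan et al.\ (2009) as a black box for the middle step, whereas you sketch its re-derivation (localization, peeling, Talagrand), which is indeed where the explicit constants are pinned down.
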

\begin{proof}
We apply the following error decomposition:
\begin{align*}
    \calR(g) - \calR(g^\star) &= 
    (\calR^\lambda(g) - \calR^\lambda(g^\lambda))
    + (\calR^\lambda(g^\lambda) - \calR^\lambda(g^\star))
    + \frac{\lambda}{2}(\|g^\star\|_{\calG}^2 - \|g\|_{\calG}^2).
\end{align*}
By applying Theorem~1 of \cite{sridharan2009fast} on $\calR^\lambda(g) - \calR^\lambda(g^\lambda)$, we have
\begin{equation*}
    \calR^\lambda(g) - \calR^\lambda(g^{\lambda}) \leq 2(\calR_n^\lambda(g) - \calR_n^\lambda(g_n^\lambda)) + \frac{16L^2B^2(32 + \log(1/\delta))}{\lambda n}.
\end{equation*}
Considering that $\calR^\lambda(g^\lambda) - \calR^\lambda(g^\star) \leq 
0$ by definition of $g^\lambda$, we obtain the desired result.
\end{proof}

Now we are ready to prove Theorem~3.4

\paragraph{Proof of Theorem 3.4.}
Apply the theorem above with $g = g^\lambda_n$. Note moreover that by Fisher consistency in Theorem~\ref{th:fisherconsistency},~$g^\star(x) = -A^\top \phi(f^\star(x))$. Let $\calG$ be the vector-valued reproducing kernel Hilbert space associated to the vector-valued kernel $K$, then
$B = \sup_{x\in X} \|\Psi(x)\|_\calG = \|K(x,\cdot)\|_\calG = \sup_{x\in X} \operatorname{Tr}(K(x,x))^{1/2}$, where $\operatorname{Tr}$ is the trace.
Moreover we have $L \leq 2\|A\| \max_{y \in \calY} \|\phi(y)\|$. The result is obtained by minimizing the resulting upper bound in $\lambda$ and then applying comparison inequality of Theorem~\ref{th:comparisoninequality}.
\qed{}

$ $

To conclude we extend a result from \cite{pillaud2017exponential} to our case. In the following assume $X = \R^d$ and denote by $\calG_m$ the vector-valued reproducing kernel induced by $K_m(x,x') = \frac{1}{k} \bar{K}_m(x,x') I_{k\times k}$ where $I_{k \times k}$ is the identity matrix and $\bar{K}_m(x,x')$ is the scalar kernel associated to the Sobolev space $W^m_s(\R^d)$ for $m > d/2$. Note that when $m = (d+1)/2$, $\bar{K}_m(x,x') = e^{-\|x-x'\|}$. 
\begin{theorem}\label{thm:conditions-fstar-in-G}
Let $X = \R^d$ and $\rho$ be such that $\overline{X_y} \cap \overline{X_{y'}} = \emptyset$, for every $y \neq y'$ where $X_y = \{x \in X ~|~ y \in \argmin_{z \in \calY} \mathbb{E}_{y\sim \rho(\cdot|x)} L(z,y)\}$. When $\calG \subseteq \calG_m$ for $m > d/2$, we have that
$$\|\phi(f^\star)\|_\calG < \infty.$$
\end{theorem}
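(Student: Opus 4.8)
The plan is to reduce the claim to a statement about each scalar coordinate of $\phi(\fstar)$ and then, following the binary‑classification argument of \citet{pillaud2017exponential}, to exhibit a genuinely smooth ($C^\infty$, hence Sobolev) function that agrees with that coordinate $\rho_\calX$‑almost everywhere. Concretely, since $\calG_m$ carries the kernel $K_m(x,x')=\tfrac1k\bar K_m(x,x')\,I_{k\times k}$, a field $h=(h_1,\dots,h_k):\R^d\to\R^k$ lies in $\calG_m$ iff each $h_j$ lies in the scalar Sobolev space $W^m_s(\R^d)=\overline{\calG}_m$, with $\|h\|_{\calG_m}^2=k\sum_j\|h_j\|_{W^m_s}^2$. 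So it suffices to build, for every coordinate $j$, a function $h_j\in W^m_s(\R^d)$ coinciding with $x\mapsto(\phi(\fstar(x)))_j$ for $\rho_\calX$‑a.e.\ $x$; stacking these gives a finite‑norm representative of $\phi(\fstar)$, and the conclusion transfers to $\calG$ via the assumed relation between $\calG$ and $\calG_m$.

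Next I would record the level‑set structure that makes this possible. For $\rho_\calX$‑a.e.\ $x$ one has $\fstar(x)\in\argmin_{z\in\calY}\Expect_{y\sim\rho(\cdot|x)}L(z,y)$, i.e.\ $x\in X_{\fstar(x)}$, so the $j$‑th coordinate $x\mapsto(\phi(\fstar(x)))_j$ is constant on each $X_y$, with value $(\phi(y))_j$. Since $\calY$ is finite, grouping the labels by the value of the $j$‑th coordinate of their embedding yields finitely many values $c_1,\dots,c_{N_j}$ and sets $B_r=\bigcup_{y:(\phi(y))_j=c_r}X_y$, on whose union (which covers $\operatorname{supp}(\rho_\calX)$ up to a null set) the coordinate equals $\sum_r c_r 1_{B_r}$. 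The essential point is that $\overline{B_r}=\bigcup_{y:(\phi(y))_j=c_r}\overline{X_y}$ is a \emph{finite} union of the $\overline{X_y}$, so the hypothesis $\overline{X_y}\cap\overline{X_{y'}}=\emptyset$ for $y\neq y'$ forces $\overline{B_1},\dots,\overline{B_{N_j}}$ to be pairwise disjoint closed sets.

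Then I would perform the smooth interpolation. Finitely many pairwise disjoint closed subsets of $\R^d$ have pairwise disjoint open neighborhoods $V_1,\dots,V_{N_j}$ (normality of $\R^d$), and for each $r$ one can choose $\chi_r\in C^\infty(\R^d)$ with $0\le\chi_r\le 1$, $\chi_r\equiv 1$ on a neighborhood of $\overline{B_r}$, and $\operatorname{supp}\chi_r\subset V_r$ — e.g.\ by mollifying the indicator of a set squeezed strictly between $\overline{B_r}$ and $V_r$. Setting $h_j=\sum_{r=1}^{N_j}c_r\chi_r$, the disjointness of the supports makes $h_j$ a $C^\infty$ function that equals $\sum_r c_r 1_{B_r}$ on $\bigcup_r B_r$, hence equal to the target coordinate $\rho_\calX$‑a.e.; after intersecting the construction with a large ball containing $\operatorname{supp}(\rho_\calX)$ we may take $h_j\in C_c^\infty(\R^d)\subset W^m_s(\R^d)$, so $\|h_j\|_{W^m_s}<\infty$. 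Assembling $h=(h_1,\dots,h_k)$ then gives $\|\phi(\fstar)\|_{\calG}\le\|h\|_{\calG}<\infty$.

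I expect the smooth‑extension step to be the crux: a piecewise‑constant function has infinite Sobolev norm as soon as two of its level sets touch, so everything hinges on the separation hypothesis, which is precisely what produces the "gaps" in which the jumps can be smoothly interpolated; the only other technical wrinkle is guaranteeing square‑integrability despite a possibly unbounded support, which is handled harmlessly by localizing to a ball since $\calR$ and $\|\cdot\|_\calG$ only interact with $\rho_\calX$ through its support.
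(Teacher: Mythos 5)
Your proof is correct and follows essentially the same route as the paper's: both use the disjointness of the closures $\overline{X_y}$ to build a compactly supported smooth (Urysohn-type) function agreeing with $\phi(f^\star)$ on $\operatorname{supp}(\rho_\calX)$ and then invoke $C_c^\infty(\R^d)\subseteq W^m_2(\R^d)\subseteq\overline{\calG}$, your coordinate-wise grouping of labels being just a repackaging of the paper's single vector-valued sum $\sum_{y}\phi(y)f_{X_y,\cup_{z\neq y}X_z}$. The only point worth making explicit is that the transfer to $\calG$ needs $\calG_m\subseteq\calG$ (the inclusion actually used in the paper's proof, even though the statement reads $\calG\subseteq\calG_m$), which your appeal to ``the assumed relation'' leaves implicit.
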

\begin{proof}
Since $W^m_2(\R^d)$ contains the smooth and compactly supported functions $C^\infty_c(\R^d)$ by construction for any $m > 0$ and $\calG_m = W^m_2(\R^d)^{\otimes k}$, when $\calG \supseteq \calG_m$ we have that $\calG$ contains all the vector valued compactly supported smooth functions. Now note for any two sets $A, B$ there exists a compactly supported smooth function $f_{A,B}$ that has value $1$ on $A$ and~$0$ on~$B$ (see \cite{pillaud2017exponential} for more details). Now we build 
$$g = -A^\top \sum_{y \in \calY}\phi(y) f_{X_y, \cup_{z\neq y} X_z}.$$
Note that 
$d \circ g \in \argmin_{f:X \to \calY}~\mathbb{E}_{(x,y)\sim \rho} L(f(x),y)$,
since for any $x \in \cup_{y \in X_y} = \operatorname{support}(\rho)$ we have $d \circ g \in \argmin_{z \in \calY} \mathbb{E}_{y\sim \rho(\cdot|x)} L(z,y)$ by construction. I.e. $d \circ g = f^\star$ and so $g = -A^\top\phi(f^\star)$. To conclude the theorem, note that~$\|g\|_{\calG} < \infty$ since
$g \in C^\infty_c(\R^d,\R^k) \subseteq \calG_m \subseteq \calG$.
\end{proof}
\section{Max-min margin and dual formulation}\label{app:dualform}

\subsection{Derivation of the Dual Formulation}
Let us first define
\begin{equation*}
    H_i(\mu, w) = \min_{y\in\calY}\phi(y)^\top A\mu + g_w(x_i)^\top \mu - g_w(x_i)^\top\phi(y_i).
\end{equation*}
Let's denote $ w(\mu) = \frac{1}{\lambda n}\Phi_n(\mu - \phi_n)$ where $\Phi_n = \frac{1}{\lambda n}(\Phi(x_1),\ldots,\Phi(x_n))$ is the $d\times n$ scaled input data matrix and~$\phi_n = (\phi(y_1),\ldots,\phi(y_n))^\top $ is the $n\times k$ output data matrix. Note that $w^\top w(\mu) = \frac{1}{\lambda n}\sum_{i=1}^ng_w(x_i)^\top (\mu_i - \phi(y_i))$.
The dual formulation $\textbf{(D)}$ of $\operatorname{M^4N}$s can be derived as follows:
\begin{align*}
    &\min_{w \in {\cal G}}~\frac{1}{n}\sum_{i=1}^n\max_{\mu_i\in\calM}~H_i(\mu_i,  w) + \frac{\lambda}{2}\|w\|_{\cal G}^2 \\
    &=\min_{w \in {\cal G}}~\frac{1}{n}\Big(\sum_{i=1}^n\max_{\mu_i\in\calM}~g_w(x_i)^\top (\mu_i - \phi(y_i)) + \min_{  y'\in\calY}\phi(y')^\top A\mu_i\Big) + \frac{\lambda}{2}\|w\|_{\cal G}^2 \\
    &=\max_{  \mu\in\calM\times\cdots\times\calM}~\frac{1}{n}\sum_{i=1}^n\Big(\min_{w \in {\cal G}}~g_w(x_i)^\top (\mu_i - \phi(y_i)) + \min_{  y'\in\calY}\phi(y')^\top A\mu_i\Big) + \frac{\lambda}{2}\|w\|_{\cal G}^2 \\
    &=\max_{  \mu\in\calM\times\cdots\times\calM}~ \frac{1}{n}\sum_{i=1}^n\min_{  y'\in\calY}\phi(y')^\top A\mu_i + \min_{w \in {\cal G}} ~-w^\top \frac{1}{n}w(\mu) + \frac{\lambda}{2}\|w\|_{\cal G}^2 \\
    &=\max_{  \mu\in\calM\times\cdots\times\calM}~ ~\frac{1}{n}\sum_{i=1}^n\min_{  y'\in\calY}\phi(y')^\top A\mu_i + \lambda\min_{w \in {\cal G}}~ -w^\top w(\mu) + \frac{1}{2}\|w\|_{\cal G}^2 \\
    &=\max_{  \mu\in\calM\times\cdots\times\calM}~ \frac{1}{n}\sum_{i=1}^n \min_{y'}\phi(y')^\top A\mu_i -\frac{\lambda}{2}\|w(\mu)\|_2^2,
\end{align*}
where the maximization and minimization have been interchanged using strong duality. We have~$w^\star = w(\mu^\star)$. 

\subsection{Computation of the Dual Gap}

The dual gap $g$ at the pair $(w(\mu), \mu)$ decomposes additively in individual dual gaps as
$g(w, \mu) = \frac{1}{2}\sum_{i=1}^ng_i(w, \mu_i)$:
\begin{align*}\label{eq:dualgaps}
   g(w, \mu_i) &= \frac{1}{n}\sum_{i=1}^n\max_{\mu_i'\in\calM}~H_i(\mu_i',  w) + \frac{\lambda}{2}w^\top w - \Big(\frac{1}{n}\sum_{i=1}^n \min_{y'}\phi(y')^\top A\mu_i -\frac{\lambda}{2}w^\top w\Big) \\
   &= \frac{1}{n}\sum_{i=1}^n \Big(\max_{\mu_i'}H_i(\mu_i',  w) - \min_{y'}\phi(y')^\top A\mu_i\Big) + \lambda w^\top w \\
   &=\frac{1}{n}\sum_{i=1}^n \Big(\max_{\mu_i'}H_i(\mu_i',  w) - \min_{y'}\phi(y')^\top A\mu_i\Big) + \frac{1}{n}\sum_{i=1}^n w^\top (\lambda nw_i(\mu_i)) \\
   &=\frac{1}{n}\sum_{i=1}^n \max_{\mu_i'}H_i(\mu_i',  w) + w^\top (\lambda nw_i(\mu_i)) - \min_{y'}\phi(y')^\top A\mu_i \\
   &= \frac{1}{n}\sum_{i=1}^n \max_{\mu_i'}H_i(\mu_i',  w) -  H_i(\mu_i, w) = \frac{1}{n}\sum_{i=1}^n g_i(w, \mu_i),
\end{align*}
where $w_i(\mu_i) = \frac{1}{\lambda n}\Phi(x_i)(\mu_i - \phi(y_i))^\top$.

\section{Generalized Block-Coordinate Frank-Wolfe}\label{app:gbcfw}
\subsection{General Convergence Result}\label{app:generalconvergenceresult}

In order to prove  a convergence bound, following~\cite{lacoste2012block}, we will consider a more general optimization problem, and combine their proof with the proof of generalized conditional gradient from~\cite{bach2015duality}, with an additional support from approximate oracles. 

We thus consider a product domain $\mathcal{K} = \mathcal{K}_1 \times \cdots \mathcal{K}_n$, and a smooth function $f$ defined on $\mathcal{K}$, as well as $n$ functions~$h_1,\dots,h_n$. We assume that $f$ is $L_i$-smooth with respect to the $i$-th block. The optimization problem reads
\begin{equation}\label{eq:optimproblemapp}
    \underset{z\in\calK_1\times\cdots\times\calK_n}{\min}~g(z)\defeq f(z) + \sum_{i=1}^nh_i(z_i).
\end{equation}

The algorithm, described in \cref{alg:gbcfwgen}, proceeds as follows. Starting from $z^{(0)}\in \calK_1\times\cdots\times\calK_n$, for $t\geq 0$, select~$i(t)$ uniformly at random and find $\bar{z}_{i(t)} \in \mathcal{K}_{i(t)}$ such that minimizes a convex lower bound of the objective function on the~$\calK_{i(t)}$'th block. This convex lower bound is constructed by linearizing only the smooth part of the objective function. Hence, the minimization of the lower bound reads:
\begin{equation}\label{eq:apporacleapp}
    h_{i(t)}(\bar{z}_{i(t)}) + \nabla_{i(t)} f(z^{(t)})^\top \bar{z}_{i(t)} \leqslant \inf_{z_{i(t)} \in \mathcal{K}_{i(t)}} h_{i(t)}(z_{i(t)}) + \nabla_{i(t)} f(z^{(t)})^\top \ {z}_{i(t)} + \varepsilon_t.
\end{equation}
Note that we allow an error of at most $\varepsilon_t$ on the computation of the generalized Frank-Wolfe oracle. This is key in our analysis as in our setting we only have access to an approximate oracle. Finally, define $z^{(t+1)}$ by copying $z^{(t)}$ except the~$i(t)$-th coordinate, which is taken to be 
$$z^{(t+1)}_{i(t)} = (1 - \gamma_t)z^{(t)}_{i(t)}
+ \gamma_t\bar{z}_{i(t)} .$$

We have, using the convexity of $h_{i(t)}$ and the smoothness of $f$, and denoting $z^\ast$ a minimizer of $h(z) = f(z) + \sum_{i=1}^n h_i(z_i) $:
\begin{eqnarray*}
& & f(z^{(t+1)}) + \sum_{i=1}^n h_i(z^{(t+1)}_i) \\
& \leq & f(z^{(t)}) + (z^{(t+1)}_{i(t)}- z^{(t)}_{i(t)})^\top \nabla_{i(t)} f(z^{(t)})+ \frac{L_{i(t)}}{2} \| z^{(t+1)}_{i(t)}- z^{(t)}_{i(t)}\|^2 \\
& & \hspace*{2cm} + \sum_{i=1}^n h_i(z^{(t)}_i)
+ h_{i(t)}(z^{(t+1)}_{i(t)}) - h_{i(t)}(z^{(t)}_{i(t)})\\
& \leq & f(z^{(t)})  + \gamma_t (\bar{z}_{i(t)}- z^{(t)}_{i(t)})^\top \nabla_{i(t)} f(z^{(t)})+ \frac{L_{i(t)}}{2} \gamma_t^2 \| \bar{z}_{i(t)}- z^{(t)}_{i(t)}\|^2\\
& & \hspace*{2cm} + \sum_{i=1}^n h_i(z^{(t)}_i)
+ (1-\gamma_t) h_{i(t)}(z^{(t)}_{i(t)})  + \gamma_t
h_{i(t)}(\bar{z}_{i(t)})- h_{i(t)}(z^{(t)}_{i(t)}) \\
& \leq & f(z^{(t)}) + \sum_{i=1}^n h_i(z^{(t)}_i)
+ \gamma_t \big[h_{i(t)}(\bar{z}_{i(t)}) 
+ \bar{z}_{i(t)}^\top \nabla_{i(t)} f(z^{(t)})\big]  \\
& & \qquad \qquad -\gamma_t\big[h_{i(t)}(z^{(t)}_{i(t)}) + (z^{(t)}_{i(t)})^\top \nabla_{i(t)} f(z^{(t)}) \big]+ \frac{L_{i(t)}}{2} \gamma_t^2 {\rm diam}(\mathcal{K}_{i(t)})^2\\
\end{eqnarray*}
Now, we use \cref{eq:apporacleapp}, i.e., the fact that $\bar{z}_{i(t)}$ is an approximate solution of $\inf_{z_{i(t)} \in \mathcal{K}_{i(t)}} \big\{ h_{i(t)}(z_{i(t)}) + \nabla_{i(t)} f(z^{(t)})^\top \ {z}_{i(t)} \big\}$. In this case, we can continue upper bounding the above quantity as

\begin{algorithm}[tb]
   \caption{Generalized Block-Coordinate Frank-Wolfe}
   \label{alg:gbcfwgen}
\begin{algorithmic}
   \STATE Let $z^{(0)}\in \calK_1\times\cdots\times\calK_n$
   \FOR{$t=0$ {\bfseries to} $T$}
   \STATE Pick $i(t)$ at random in $\{1,\ldots, n\}$
   \STATE $\bar{z}_{i(t)}^\star\in\argmin_{z_{i(t)}'\in\calK_i} \nabla_{i(t)}f(z^{(t)})^\top z_{i(t)}' + h_{i(t)}(z_{i(t)}')$ \hspace{1cm} \textit{(solve generalized Frank-Wolfe oracle)}.
   \STATE $\gamma \defeq \frac{2n}{t + 2n}$ or optimize $\gamma$ by line-search.
   \STATE $z_{i(t)}^{(t+1)}\defeq (1 - \gamma)z_{i(t)}^{(t)} + \gamma \bar{z}_{i(t)}^\star$ \hspace{1cm} \textit{(Only affecting the i-th coordinate. Copy the rest.)} \\
   \ENDFOR
\end{algorithmic}
\end{algorithm}

\begin{eqnarray*}
& \leq & f(z^{(t)}) + \sum_{i=1}^n h_i(z^{(t)}_i)
+ \gamma_t \big[\inf_{z_{i(t)} \in \mathcal{K}_{i(t)}} \big\{ h_{i(t)}(z_{i(t)}) + \nabla_{i(t)} f(z^{(t)})^\top \ {z}_{i(t)} \big\}\big] \\
& & \hspace*{2cm}
-\gamma_t \big[
h_{i(t)}(z^{(t)}_{i(t)}) - (z^{(t)}_{i(t)})^\top \nabla_{i(t)} f(z^{(t)}) \big] + \frac{L_{i(t)}}{2} \gamma_t^2 {\rm diam}(\mathcal{K}_{i(t)})^2 + \gamma_t   \varepsilon_t\\
& \leq & f(z^{(t)}) + \sum_{i=1}^n h_i(z^{(t)}_i)
+ \gamma_t \big[  h_{i(t)}(z_{i(t)}^\ast) + \nabla_{i(t)} f(z^{(t)})^\top \ {z}_{i(t)}^\ast  \\
& & \hspace*{2cm}
-
h_{i(t)}(z^{(t)}_{i(t)}) - (z^{(t)}_{i(t)})^\top \nabla_{i(t)} f(z^{(t)}) \big] + \frac{L_{i(t)}}{2} \gamma_t^2 {\rm diam}(\mathcal{K}_{i(t)})^2 + \gamma_t   \varepsilon_t.
\end{eqnarray*}
Let's now define $C\geq 0$ as
\begin{equation*}
    C = \sum_{i=1}^nL_i{\rm diam}(\mathcal{K}_{i})^2.
\end{equation*}
Finally, if we denote by $\mathcal{F}_t$ the information up to time $t$, we have that
\begin{eqnarray*}
& & \mathbb{E} \Big[ f(z^{(t+1)}) + \sum_{i=1}^n h_i(z^{(t+1)}_i)~|~\mathcal{F}_t \Big]  \\
& \leq & 
f(z^{(t)}) + \sum_{i=1}^n h_i(z^{(t)}_i) \\
&+ & \gamma_t \big[  \frac{1}{n} \sum_{i=1}^n h_{i}(z_{i}^\ast) + \frac{1}{n} \nabla f(z^{(t)})^\top \ {z}^\ast  
- \frac{1}{n} \sum_{i=1}^n
h_{i}(z^{(t)}_{i}) -  \frac{1}{n} (z^{(t)})^\top \nabla  f(z^{(t)}) \big]\\
&+ & \frac{\gamma_t^2}{2}\frac{C}{n}  + \gamma_t   \varepsilon_t,
\end{eqnarray*}
 where we have used $C/n=\Expect[L_{i(t)}{\rm diam}(\mathcal{K}_{i(t)})^2]$, $\Expect \big[h_{i(t)}(z_{i(t)})\big] = \frac{1}{n} \sum_{i=1}^n h_{i}(z_{i})$ and $\Expect[z_{i(t)}] = \frac{1}{n}z$ for any $z\in\calK_1\times\cdots\times\calK_n$. Thus, if $g$ is the objective function as defined in \eqref{eq:optimproblemapp}, we get:
\begin{eqnarray*}
 \mathbb{E}\big[g (z^{(t+1)})  - g(z^\ast)\big]
 & \leq & ( 1 - \frac{\gamma_t}{n}) \big[  \mathbb{E}g (z^{(t)})  - g(z^\ast) \big]
  + \frac{\gamma_t^2}{2}\frac{C}{n} + \gamma_t   \varepsilon_t.
 \end{eqnarray*}
 Note that the above inequality is the same appearing in \cite{jaggi2013revisiting} but with the key difference of the factor $1/n$, which stems from the random block-coordinate procedure of the algorithm. If we define $G_t = \mathbb{E}\big[g (z^{(t)})  - g(z^\ast)\big]$, we can re-write the recursion as 
 \begin{equation*}
     G_{t+1} \leq ( 1 - \frac{\gamma_t}{n})G_t + \frac{\gamma_t^2C}{2n}  +\gamma_t\varepsilon_t.
 \end{equation*}
 Let's first set $\varepsilon_t = 0$, i.e., $G_{t+1} \leq ( 1 - \frac{\gamma_t}{n})G_t + \frac{\gamma_t^2C}{2}$, and prove by induction if $\gamma_t = \frac{2n}{t + 2n}\in[0,1]$, we obtain
 \begin{equation*}
     G_t \leq \frac{2n(C + G_0)}{t + 2n} \hspace{1cm} t\geq 0.
 \end{equation*}
 Let's proceed by induction. The \textit{base-case} $k=0$ is satisfied as $C\geq 0$. 
 \begin{align*}
     G_{t+1} &\leq ( 1 - \frac{\gamma_t}{n})G_t + \frac{\gamma_t^2C}{2n} \\
    &  =  ( 1 - \frac{2}{t + 2n})G_t + (\frac{2n}{t + 2n})^2\frac{C}{2n} \\
    &\leq ( 1 - \frac{2}{t + 2n})\frac{2n(C + G_0)}{t + 2n} + (\frac{1}{t + 2n})^2 2nC
 \end{align*}
 Rearranging the terms gives
 \begin{align*}
     G_{t+1} &\leq \frac{2nC}{t + 2n}(1 - \frac{2}{t + 2n} + \frac{1}{t + 2n}) \\
     &= \frac{2nC}{t + 2n}\frac{t + 2n - 1}{t + 2n} \\
     &\leq \frac{2nC}{t + 2n}\frac{t + 2n}{t + 2n + 1} \\
     &= \frac{2nC}{t + 2n + 1},
 \end{align*}
 which is the claimed bound for $k+1$.   
 If we now we use an error 
 \begin{equation}\label{eq:formoferror}
     \varepsilon_t = \frac{1}{2}\delta\gamma_tL_{i(t)}{\rm diam}(\mathcal{K}_{i(t)})^2.
 \end{equation}
 Then, we have that 
 \begin{equation*}
     G_{t+1} \leq ( 1 - \frac{\gamma_t}{n})G_t + \frac{\gamma_t^2C(1 + \delta)}{2n}C,
 \end{equation*}
 and so we get
 \begin{equation*}
     \mathbb{E}\big[g (z^{(t+1)})  - g(z^\ast)\big] \leq \frac{2n}{t + 2n}\left(\mathbb{E}\big[g (z^{(0)})  - g(z^\ast)\big] + (1 + \delta)\sum_{i=1}^nL_i{\rm diam}(\mathcal{K}_{i})^2\right).
 \end{equation*}
 
 In order to obtain the final bound only in terms of $\sum_{i=1}^nL_i{\rm diam}(\mathcal{K}_{i})^2$, we can reuse the techniques from~\cite{lacoste2012block}, such as a single batch generalized Frank-Wolfe step, or use line search instead of constant step-sizes. Using these techniques, we  can manage to set
 \begin{equation*}
     \mathbb{E}\big[g (z^{(0)})  - g(z^\ast)\big] \leq n\max_i{\rm diam}(\calK_i)^2\frac{\max_iL_i}{2},
 \end{equation*}
 so that
we obtain 
 \begin{equation*}
     \mathbb{E}\big[g (z^{(t+1)})  - g(z^\ast)\big] \leq (2 + \delta)\frac{2n^2}{t + 2n}\max_iL_i\max_i{\rm diam}(\calK_i)^2.
 \end{equation*}
\subsection{Application to Our Setting, Proof of Theorem 5.1. } 

\begin{algorithm}[htb!]
   \caption{Generalized Block-Coordinate Frank-Wolfe}
   \label{alg:gbcfwapp}
\begin{algorithmic}
   \STATE Let $ w^{(0)}\defeq  w_i^{(0)}\defeq \bar{ w}^{(0)} \defeq 0$
   \FOR{$t=0$ {\bfseries to} $T$}
   \STATE Pick $i$ at random in $\{1,\ldots, n\}$
   \STATE $(\mu_i^\star,\nu_i^\star) = \calO^{\varepsilon_t}(g_{w^{(t)}}(x_i))$ (solve oracle with precision $\varepsilon_t$)
   \STATE $ w_s \defeq \Phi_n(\mu_i^\star - \phi_n) / (\lambda n)$ \\
   \STATE $\gamma \defeq \frac{2n}{t + 2n}$ (or line-search)
   \STATE $ w_i^{(t+1)} \defeq (1-\gamma) w_i^{(t)} + \gamma w_s$
   \STATE $ w^{(t+1)}\defeq  w^{(t)} +  w_i^{(t+1)} -  w_i^{(t)}$
   \STATE (Optional averaging: $\bar{ w}^{(t+1)}\defeq \frac{t}{t+2}\bar{ w}^{(t)} + \frac{2}{t+2} w^{(t+1)}$).
   \ENDFOR
\end{algorithmic}
\end{algorithm}

In our setting, we have that ${\rm diam}(\calK_i) = \operatorname{diam}(\calM)$ and $L_i\leq \frac{R^2}{\lambda n^2}$ ($R$ is the maximal norm of features). Hence, the bound simplifies to
\begin{equation*}
    \mathbb{E}\big[g (z^{(t+1)})  - g(z^\ast)\big] \leq \frac{2(2 + \delta)}{t + 2n}\frac{R^2{\rm diam}(\calM)^2}{\lambda}.
\end{equation*}
 Which means that in order to get $\mathbb{E}\big[g (z^{(t+1)})  - g(z^\ast)\big]\leq\varepsilon$ one needs 
 \begin{equation*}
     t\geq \frac{2(2+\delta)R^2\operatorname{diam}(\calM)^2}{\lambda\varepsilon} + 2n = O\left(n + \frac{R^2\operatorname{diam}(\calM)^2}{\lambda\varepsilon}\right)
 \end{equation*}
 iterations.

\section{Solving the Oracle with Saddle Point Mirror Prox} \label{app:oracle}
Let $\calX\subset\Rspace{k}$, $\calY\subset\Rspace{k}$ be compact and convex sets. Let $F:\calX\times\calY\rightarrow\Rspace{}$ be a continuous function such that $F(\cdot, y)$ is convex and $F(x, \cdot)$ is concave. We are interested in computing 
\begin{equation*}
    \min_{x\in\calX}\max_{y\in\calY}~F(x,y).
\end{equation*}
By Sion's minimax theorem there exists a pair $(x^\star, y^\star)\in\calX\times\calY$ such that 
\begin{equation*}
    F(x^\star, y^\star) = \min_{x\in\calX}\max_{y\in\calY}~F(x,y) = \max_{y\in\calY}\min_{x\in\calX}~F(x,y).
\end{equation*}

We assume that
\begin{align*}
    &\|\nabla_x F(x,y) - \nabla_x F(x', y)\|_{\calX}^* \leq  \beta_{1,1}\|x-x'\|_{\calX} \\
    &\|\nabla_x F(x,y) - \nabla_x F(x, y')\|_{\calX}^* \leq  \beta_{1,2}\|y-y'\|_{\calY} \\ 
    &\|\nabla_y F(x,y) - \nabla_y F(x', y)\|_{\calY}^* \leq \beta_{2,1}\|x-x'\|_{\calX} \\
    &\|\nabla_y F(x,y) - \nabla_y F(x, y')\|_{\calY}^* \leq  \beta_{2,2}\|y-y'\|_{\calY}, \\ 
\end{align*}
where $\|\cdot\|_{\calX}^*, \|\cdot\|_{\calY}^*$ denote the dual norms of $\|\cdot\|_{\calX},\|\cdot\|_{\calY}$, respectively. We are interested in finding an algorithm that produces $(\widehat{x}, \widehat{y})$ that has small \textit{duality gap} $g(\widehat{x}, \widehat{y})$ defined as
\begin{equation*}
   g(\widehat{x}, \widehat{y}) \defeq \max_{y\in\calY}~F(\widehat{x}, y) - \min_{x\in\calX}~F(x, \widehat{y}).
\end{equation*}

\subsection{Saddle Point Mirror Prox (SP-MP)} \label{app:spmp}

Define $H_{\calX}:\calD_{\calX}\rightarrow\Rspace{}$ and $H_{\calY}:\calD_{\calY}\rightarrow\Rspace{}$, which are $1$-strongly concave w.r.t a norm $\|\cdot\|_{\calX}$ on $\calX\cap\calD_{\calX}$ and 
$\|\cdot\|_{\calY}$ on~$\calY\cap\calD_{\calY}$, respectively. Denote $R_{\calX} = \sup_{x\in\calX}H_{\calX}(x) - \min_{x\in\calX}H_{\calX}(x)$ and $R_{\calY}$ similarily for $H_{\calY}$. Define $\calZ \defeq \calX\times\calY$ and~$H:\calD\defeq\calD_{\calX}\times\calD_{\calY}\rightarrow\Rspace{}$ defined as $H(z) = \frac{1}{R_{\calX}^2}H_{\calX}(x) + \frac{1}{R_{\calY}^2}H_{\calY}(y)$, where $z=(x,y)$. The saddle point mirror prox (SP-MP) algorithm is defined as follows.

Start with $z^{(1)} = (x^{(1)}, y^{(1)}) = \argmax_{z\in\calZ}~H(z)$.
Then at every iteration $k$:

\begin{align*}
    (u^{(k+1)}, v^{(k+1)}) &\defeq \argmin_{z\in\calZ\cap\calD}~\eta(\nabla_x F(x^{(k)}, y^{(k)}), -\nabla_y F(x^{(k)}, y^{(k)}))^\top z + D_{-H}(z, z^{(k+1)}) \\
    (x^{(k+1)}, y^{(k+1)}) &\defeq \argmin_{z\in\calZ\cap\calD}~\eta(\nabla_x F(u^{(k+1)}, v^{(k+1)}), -\nabla_y F(u^{(k+1)}, v^{(k+1)}))^\top z + D_{-H}(z, z^{(k+1)})
\end{align*}

The following \cref{th:appspmp} by \cite{nemirovski2004prox} studies the convergence of SP-MP.

\begin{theorem}[\cite{nemirovski2004prox}] \label{th:appspmp}
Let $L = \max(\beta_{11}R_{\calX}^2, \beta_{22}R_{\calY}^2, \beta_{12}R_{\calX}R_{\calY}, \beta_{21}R_{\calX}R_{\calY})$. Then,
the algorithm saddle point mirror prox (presented at the beginning of the section) runned with 
$\eta = \frac{1}{2L}$ satisfies
\begin{equation*}
    g(\bar{u}^{K}, \bar{v}^{K}) \leq \frac{4L}{K},
\end{equation*}
where $\bar{u}^{K}\defeq \frac{1}{K}\sum_{k=1}^K u^{(k)}$ and $\bar{v}^{K}\defeq\frac{1}{K}\sum_{k=1}^K v^{(k)}$.
\end{theorem}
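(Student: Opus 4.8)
The plan is to run the classical analysis of the extragradient (Mirror Prox) method for monotone variational inequalities. First I would recast the saddle-point problem as a variational inequality: writing $z=(x,y)\in\calZ\defeq\calX\times\calY$ and $\Psi(z)\defeq(\nabla_xF(x,y),-\nabla_yF(x,y))$, convexity of $F(\cdot,y)$ and concavity of $F(x,\cdot)$ make $\Psi$ a monotone operator, and the four block-Lipschitz bounds make $\Psi$ Lipschitz with constant $2L$ with respect to the norm $\|\cdot\|$ for which $-H$ is $1$-strongly convex — the scalings $1/R_\calX^2,1/R_\calY^2$ in the definition of $H$ being chosen exactly so that $\eta=1/(2L)$ is the reciprocal of this Lipschitz constant and $\sup_{\calZ}H-\inf_{\calZ}H=2$. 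Under this dictionary SP-MP is precisely the extragradient scheme with prox-map $D_{-H}$: the first line is an extrapolation step $w^{(k+1)}=\argmin_{z\in\calZ\cap\calD}\,\eta\Psi(z^{(k)})^\top z+D_{-H}(z,z^{(k)})$ and the second is the actual step $z^{(k+1)}=\argmin_{z\in\calZ\cap\calD}\,\eta\Psi(w^{(k+1)})^\top z+D_{-H}(z,z^{(k)})$, where $w^{(k+1)}=(u^{(k)},v^{(k)})$.

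The engine of the proof is the three-point Bregman inequality: if $z^+=\argmin_{z\in\calZ\cap\calD}\xi^\top z+D_{-H}(z,z^{\circ})$, then $\xi^\top(z^+-u)\leq D_{-H}(u,z^{\circ})-D_{-H}(u,z^+)-D_{-H}(z^+,z^{\circ})$ for every $u\in\calZ$. I would apply this to the actual step at a generic $u$ and to the extrapolation step at $u=z^{(k+1)}$, then add the two inequalities; the Bregman terms regroup into a telescoping part plus a residual, giving
\begin{equation*}
\eta\,\Psi(w^{(k+1)})^\top(w^{(k+1)}-u)\;\leq\;D_{-H}(u,z^{(k)})-D_{-H}(u,z^{(k+1)})+R_k,
\end{equation*}
with $R_k=\eta\big(\Psi(w^{(k+1)})-\Psi(z^{(k)})\big)^\top(w^{(k+1)}-z^{(k+1)})-D_{-H}(w^{(k+1)},z^{(k)})-D_{-H}(z^{(k+1)},w^{(k+1)})$.

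The main obstacle — the step that produces the $O(1/K)$ rate rather than $O(1/\sqrt K)$ — is showing $R_k\leq0$, and this is where every hypothesis is used. Hölder's inequality with the $2L$-Lipschitz bound on $\Psi$ and $\eta=1/(2L)$ gives $\eta(\Psi(w^{(k+1)})-\Psi(z^{(k)}))^\top(w^{(k+1)}-z^{(k+1)})\leq\tfrac12\|w^{(k+1)}-z^{(k)}\|\,\|w^{(k+1)}-z^{(k+1)}\|\leq\tfrac14\|w^{(k+1)}-z^{(k)}\|^2+\tfrac14\|w^{(k+1)}-z^{(k+1)}\|^2$, whereas $1$-strong convexity of $-H$ lower-bounds each of the two Bregman terms by $\tfrac12$ of the matching squared distance, so the negative terms win and $R_k\leq0$.

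Finally I would telescope and convert back to the duality gap. Dropping $R_k$ and summing over $k=1,\dots,K$ telescopes to $\eta\sum_{k=1}^K\Psi(w^{(k+1)})^\top(w^{(k+1)}-u)\leq D_{-H}(u,z^{(1)})$, and since $z^{(1)}=\argmax_{z\in\calZ}H(z)$ its first-order optimality condition forces $D_{-H}(u,z^{(1)})\leq\sup_{\calZ}H-\inf_{\calZ}H=2$ for all $u\in\calZ$. For the lower bound, with $w^{(k+1)}=(u^{(k)},v^{(k)})$, convexity of $F(\cdot,y)$ and concavity of $F(x,\cdot)$ give the linearization estimate $\Psi(w^{(k+1)})^\top(w^{(k+1)}-u)\geq F(u^{(k)},y_u)-F(x_u,v^{(k)})$ for any $u=(x_u,y_u)$; averaging over $k$ and invoking Jensen (convex in the first slot, concave in the second) bounds the average below by $F(\bar{u}^{K},y_u)-F(x_u,\bar{v}^{K})$. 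Choosing $y_u\in\argmax_yF(\bar{u}^{K},y)$ and $x_u\in\argmin_xF(x,\bar{v}^{K})$ turns this into $g(\bar{u}^{K},\bar{v}^{K})$, and combining with the telescoped estimate yields $g(\bar{u}^{K},\bar{v}^{K})\leq 2/(\eta K)=4L/K$. The only routine bookkeeping is tracking the factors $R_\calX,R_\calY$ through the Lipschitz constant of $\Psi$ and the range of $H$ so that the final constant is exactly $4$.
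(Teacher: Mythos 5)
The paper does not actually prove this statement: it is imported verbatim from \citet{nemirovski2004prox}, so there is no in-paper argument to compare yours against. Your reconstruction is the standard Mirror Prox analysis (monotone operator $\Psi(z)=(\nabla_xF,-\nabla_yF)$, three-point Bregman inequality applied once to the extrapolation step and once to the actual step, residual term killed by Lipschitzness plus strong convexity, telescoping, and conversion of the averaged linearized gap to the duality gap via convex-concavity and Jensen), and it is structurally sound; in particular the identification $\sup_\calZ H-\inf_\calZ H=2$ from the $1/R_\calX^2,1/R_\calY^2$ scalings, the bound $D_{-H}(u,z^{(1)})\le 2$ from optimality of $z^{(1)}$, and the final constant $2/(\eta K)=4L/K$ are all right. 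One arithmetic point to fix in the step $R_k\le 0$: if $\Psi$ is $2L$-Lipschitz in the scaled product norm (which is exactly what the Cauchy--Schwarz combination of the four block bounds gives, with $L$ as defined in the statement), then $\eta\cdot 2L=1$, so H\"older yields the product $\|w^{(k+1)}-z^{(k)}\|\,\|w^{(k+1)}-z^{(k+1)}\|$ with coefficient $1$, not $\tfrac12$, and Young gives $\tfrac12\|w^{(k+1)}-z^{(k)}\|^2+\tfrac12\|w^{(k+1)}-z^{(k+1)}\|^2$, which \emph{exactly} matches the strong-convexity lower bounds on the two Bregman terms rather than being dominated by them with slack. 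The conclusion $R_k\le 0$ survives, but as an equality-tight cancellation; your $\tfrac12$ and $\tfrac14$ factors implicitly assume a Lipschitz constant of $L$ rather than $2L$. This is precisely the ``routine bookkeeping'' you defer, and once corrected the proof goes through with the stated constant $4$.
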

In our setting, we have that $\calX=\calY=\calM$ and 
\begin{equation}\label{eq:bilinearfunc}
    F(\nu, \mu) = \nu^\top A\mu + v^\top \mu.
\end{equation}
The gradients have the following form:
\begin{equation*}
    \nabla_\nu F(\nu, \mu) = A\mu, \hspace{0.3cm}\text{and}\hspace{0.3cm}  \nabla_{\mu} F(\nu, \mu) = A^\top \nu + v.
\end{equation*}

\subsection{Max-Min Oracle for Sequences (special case of \cref{ex:factorgraph})}
Consider unary potentials and binary potentials between adjacent variables. The embeddings can be written as 
\begin{equation*}
    \phi(y) = (\phi_{u}(y), \phi_{p}(y)) =  ((\phi_m(y_m))_{m=1}^M, \phi_{m, m+1}(y_{m, m+1})_{m=1}^{M-1}) \in\Rspace{RM + R^2(M-1)},
\end{equation*}
where $\phi_m(y_m) = e_{y_m}\in\Rspace{R}$ and $\phi_{m,m+1}(y_{m,m+1}) = e_{y_{m,m+1}}\in\Rspace{R^2}$ are vectors of the canonical basis. Here, $\phi_u$ and $\phi_p$ stand for unary and pair-wise embeddings.
If the loss decomposes coordinate-wise as
$L(y, y') = \frac{1}{M}\sum_{i=1}^ML_m(y_m, y_m')$ as detailed in \cref{ex:factorgraph}, the loss decomposition reads
\begin{equation*}
A = \left(\begin{array}{@{}c|c@{}}
   \begin{matrix}
    L_1/M & \cdots & 0_{R\times R} \\
    \vdots & \ddots & \vdots \\
    0_{R\times R} & \cdots & L_M/M
    \end{matrix}
  & \bigzero_{MR\times (M-1)R^2} \\
\hline \bigzero_{(M-1)R^2\times MR} & \bigzero_{(M-1)R^2\times (M-1)R^2}
\end{array}\right), \hspace{1cm} a = 0.
\end{equation*}
The bilinear function \eqref{eq:bilinearfunc} takes the following form:
\begin{equation*}
    F(\nu, \mu) = \sum_{m=1}^M\nu_m^\top L_m \mu_m + \sum_{m=1}^Mv_m^\top\mu_m + \sum_{p=1}^{M-1}v_p^\top\mu_p.
\end{equation*}
Note that as $A$ is low-rank, the dependence on $\nu$ is only on the unary embeddings, which means that the minimization over~$\nu$ is over a simpler domain that decomposes as $\calQ=\Pi_{m=1}^M\Delta_R$.

We consider the entropies $H_\calQ:\calQ\rightarrow\Rspace{}$ and $H_{\calM}:\calM\rightarrow\Rspace{}$ defined as:
\begin{equation*}\label{eq:entropies}
    H_{\calQ}(\nu) \defeq \sum_{m=1}^MH_S(\nu_m), \hspace{1cm} 
    H_{\calM}(\mu) \defeq \left\{\begin{array}{lll}
        \max_{q\in\Delta_{|\calY|}} & H_S(q) & \\
        \text{s.t.} & \Expect_{y\sim q}\phi_m(y_m) = \mu_m, &1\leq m\leq M \\
         & \Expect_{y\sim q}\phi_p(y_p) = \mu_p, & 1\leq p \leq M-1 \\
    \end{array}\right. ,
\end{equation*}
where for $q\in\Delta_k$, we define the \textit{Shannon entropy} $H_S:\Delta_k\rightarrow\Rspace{}$ as $H_S(q) = -\sum_{j=1}^kq_j\log q_j$.

In order to apply SP-MP we need to compute two projections in $\calQ$ and $\calM$ with respect to the corresponding entropies described above. The update on $\nu\in\calQ$ takes the form
\begin{equation}\label{eq:spmpfactorgraphQ}
    \argmin_{\nu\in\calQ}~\eta \sum_{m=1}^M\nu_m^\top L_m \mu_m^{(t)} + D_{-H_{\calQ}}(\nu, \nu^{(t)}).
\end{equation}
As the entropy $H_{\calQ}$ is separable, the projection \eqref{eq:spmpfactorgraphQ} is separable and can be computed with the softmax operator. The update on $\mu\in\calM$ takes the form
\begin{equation}\label{eq:spmpfactorgraphM}
    \argmin_{\mu\in\calM}~-\eta \sum_{m=1}^M\mu_m^\top(L_m^\top \nu_m^{(t)} + v_m) -\eta\sum_{p=1}^{M-1}\mu_p^\top v_p + D_{-H_{\calM}}(\mu, \mu^{(t)}).
\end{equation}
Projection \eqref{eq:spmpfactorgraphM} can be computed using marginal inference using the sum-product algorithm.

\textbf{Norm $\boldsymbol{\|\cdot\|_{\calQ}}$ and constants $\boldsymbol{R_{\calQ},\sigma_\calQ}$.} We choose the norm as the $L_1$-norm 
$\|\nu\|_{\calQ}\defeq \|\nu\|_1 = \sum_{m=1}^M\|\nu_m\|_1$. From Pinsker's inequality, we know that $H(\nu_m)$ is 1-strongly convex with respect to $\|\cdot\|_1$ in $\Delta_{R}$. Hence, we have that $H_{\calQ}(\nu)$ is 1-strongly respect with respect to $\|\cdot\|_1$ in $\calQ$. Moreover, using that $\min_{q\in\calQ}H_{\calQ}(\nu) = 0$, we have that
\begin{equation*}
    R_{\calQ}^2 \defeq \max_{\nu\in\calQ}H_{\calQ}(\nu) = \max_{\nu\in\Pi_{m=1}^M\Delta_{R}}\sum_{m=1}^MH(\nu_m) = \sum_{m=1}^M\max_{\nu_m\in\Delta_{R}} H(\nu_m) = \sum_{m=1}^M\log R=M\log R.
\end{equation*}

\textbf{Norm $\boldsymbol{\|\cdot\|_{\calM}}$ and constants $\boldsymbol{R_{\calM},\sigma_\calM}$.} If we choose the $L_2$-norm $\|\mu\|_{\calM} \defeq \|\mu\|_2$, the strong-convexity constant of~$H_{\calM}:\calM\rightarrow\Rspace{}$ defined in \cref{eq:entropies} with respect to $\|\cdot\|_2$ is 
\begin{equation*}
    \sigma_\calM = \operatorname{diam}(\calM)^{-2}.
\end{equation*}
In order to see this, note that the strong-convexity parameter $\sigma_\calM$ of $H_{\calM}$ is equal to the inverse of the smoothness parameter of the partition function $A(v) = \log\big(\sum_{y\in\calY}\exp(\langle\phi(y), v\rangle)\big)$, which corresponds to the maximal dual norm $\|\cdot\|_{*}$ of the covariance operator $\Sigma(v) = \Expect_{y\sim q_v}\phi(y)\phi(y)^\top  - \Expect_{y\sim q_v}\phi(y)\Expect_{y\sim q_v}\phi(y)^\top $, where $q_v(y) = \exp\langle v, \phi(y)\rangle/\sum_{y'\in\calY}\exp(\langle\phi(y'), v\rangle)\big)$
If we consider $\|\cdot\|_2$, it follows directly that $ \sigma_{\calM}^{-1} = \sup_{v}\|\Sigma(v)\|_2 \leq \operatorname{diam}(\calM)^2$.
Finally, using that $\min_{\mu\in\calM}H_{\calM}(\mu) = 0$, we have that
\begin{equation*}
    R_{\calM}^2 \defeq \max_{\mu\in\calM}H_{\calM}(\mu) = \sum_{m=1}^M\max_{\mu_m\in\Delta_{R}}H(\mu_m) + (\leq 0) \leq \sum_{m=1}^M\log R.
\end{equation*}

\textbf{Computation of the smoothness constants $\boldsymbol{(\beta_{11}, \beta_{12},\beta_{21},\beta_{22})}$.}
\begin{itemize}
    \item[-] $\beta_{11}=0$ as $\nabla_m F_x(q, \mu)$ is constant in $q$ for all $m\in[M]$.
    \item[-] We have that $\|L_m(\mu_m - \mu_m')\|_{\infty} \leq \|L_m\|_{\infty}\|\mu_m - \mu_m'\|_1$. Hence, $\beta_{12} = \max_{m\in[M]}\|L_m\|_{\infty}$.
    \item[-] We have that $\nabla_m F_y(q, \mu)$ and $\nabla_c F_y(q, \mu)$ are constant in $\mu$ for all $m\in[M]$ and $c\in C$, so $\beta_{12} = 0$.
    \item[-] We have that $\|L_m^\top (q_m - q_m')\|_2 \leq \|L_m^\top \|_2\|q_m-q_m\|_2$. Hence, $\beta_{22} = \max_{m\in[M]}\|L_m^\top \|_2$.
\end{itemize}

Finally, the constant $L$ appearing in \cref{th:appspmp} reads
\begin{equation*}
    L = \max_{m\in[M]}\|L_m\|_2\operatorname{diam}(\calM)^2M\log R.
\end{equation*}
\subsection{Max-Min Oracle for Ranking and Matching of \cref{ex:matching}}
We represent the permutation $\sigma\in \calS_M$ using the corresponding permutation matrix $\phi(\sigma) = P_{\sigma}\in\Rspace{M\times M}$. The loss decomposition is
\begin{equation*}
    L(\sigma, \sigma') = \frac{1}{M}\sum_{m=1}^M 1(\sigma(j)\neq \sigma'(j)) = 1 - \frac{\langle P_{\sigma}, P_{\sigma'}\rangle}{M} = 1 - \frac{\langle \phi(\sigma), \phi(\sigma')\rangle}{M},
\end{equation*}
i.e., $A = -Id/M$ and $a=1$. The marginal polytope $\calM$ corresponds to the \textit{Birkhoff polytope} or equivalently, the polytope of \textit{doubly stochastic matrices}
\begin{equation*}
    \calM = \operatorname{hull}\{P_{\sigma}~|~\sigma\in\calS_M\} = \{P\in\Rspace{M\times M}~|~P1=1,P^T1=1, 0\leq P_{ij}\leq 1, i,j\in[M]\}.
\end{equation*}

The max-min oracle corresponds to the following saddle-point problem:
\begin{equation}\label{eq:maxminmatching}
   \argmax_{P\in\calM}\min_{Q\in\calM}~\langle S, P\rangle - \langle Q, P\rangle / m.
\end{equation}

We have three natural options for the entropy, namely, the constrained Shannon entropy (which is the one used in the factor graph \cref{ex:factorgraph}), the entropy of marginals and the quadratic entropy.

\textbf{Constrained Shannon Entropy.} In this case, 
\begin{equation*}
    H(Q) \defeq \max_{p\in\Delta_{\calS_M}} -\sum_{\sigma\in\calS_M}p(\sigma)\log p(\sigma) \quad\text{s.t.}\quad \sum_{\sigma\in\calS_M}p(\sigma) P_{\sigma} = Q.
\end{equation*}
The projection corresponds to marginal inference, which is in general $\# P$-complete as we have to compute the permanent~\cite{valiant1979complexity}. As noted by \cite{petterson2009exponential}, it can be `efficiently' computed exactly up to $M=30$ with complexity $O(M2^M)$ using an algorithm by \cite{ryser1963combinatorial}. Note that this is way faster than enumeration which is of the order of~$M!\sim M^M$.

\textbf{Entropy of Marginals.} We can define the entropy defined in the marginals as
\begin{equation}\label{eq:entropyonmarginals}
    H(Q) = -\sum_{i,j=1}^MQ_{ij}\log Q_{ij}.
\end{equation}
The projection can be computed up to precision $\delta$ using the \textit{Sinkhorn-Knopp algorithm} with complexity $O(M^2/\delta)$. Moreover, this can be easily implemented efficiently in \verb|C++| as the algorithm corresponds to an alternating normalization between rows and columns.

\textbf{Quadratic Entropy.} We can use the following quadratic entropy
\begin{equation*}
    H(Q) \defeq -\|Q\|_{F}^2 = -\sum_{i,j=1}^MQ_{ij}^2.
\end{equation*}
The projection has essentially the same complexity as the entropy on marginals described above \cite{blondel2017smooth} and it provides sparse solutions.
The algorithm consists in minimizing an unconstrained smooth and non-strongly convex function. The computation of the gradient requires $M$ euclidean projections to the simplex $\Delta_M$. Each projection can be performed exactly in worst-case $O(M\log M)$ using the algorithm by \cite{michelot1986finite} and in expected $O(M)$ using the randomized pivot algorithm of \cite{duchi2008efficient}. The resulting computational complexity is of $O(M^2/\delta)$. Note that even though the complexity is the same as for the entropic regularization, the implementation is more involved and difficult to speed up.

In our experiments we focus on the entropy on marginals \eqref{eq:entropyonmarginals}. We now compute the constants.

\textbf{Norm $\boldsymbol{\|\cdot\|_{\calM}}$ and constants $\boldsymbol{R_{\calM},\sigma_\calM}$.} If we consider $\|\cdot\|_{\calM} = \|\cdot\|_{1}$, we have that $\sigma_{\calM}=1$ and $R_{\calM}^2 = M$.

\textbf{Computation of the smoothness constants $\boldsymbol{(\beta_{11}, \beta_{12},\beta_{21},\beta_{22})}$.} In this case we obtain $\beta_{11} = \beta_{12} = 0$ and $\beta_{21} = \beta_{22} = 1$. Hence
\begin{equation*}
    L = M.
\end{equation*}

\section{Generalization Bounds for $\boldsymbol{\operatorname{M^4N}}$ solved via GBCFW and Approximate Oracle}

\paragraph{Proof of Theorem 5.2}
Denote by $\widehat{g}_{n,T}$ the result of \cref{alg:gbcfw} where the oracle is approximated via Algorithm~\ref{alg:spmp}. In the same setting of Section~E, by applying Theorem~\ref{thm:gen-bound-approx}, bounding $L, B$ as in the proof of Theorem~\ref{th:generalizationerm} and applying the comparison inequality in Theorem~\ref{th:comparisoninequality}, we have that the following holds with probability $1-\delta$
\begin{equation*}
    \calE(d\circ g_n) - \calE(f^\star) \leq 2(\calR_n^\lambda(\widehat{g}_{n,T}) - \calR_n^\lambda(g_n^\lambda)) + M \|\phi(f^\star)\|_{\calG}\sqrt{\frac{\log(1/\delta)}{n}},
\end{equation*}
when $\lambda$ is chosen as $\lambda = \kappa L\log^{1/2}(1/\delta)n^{-1/2}$ and $M$ defined as in Theorem~\ref{th:generalizationerm}. Denote by $\varepsilon_{\text{opt}} = \calR_n^\lambda(\widehat{g}_{n,T}) - \calR_n^\lambda(g_n^\lambda)$. 
The result is obtained by optimizing until $\varepsilon_{\text{opt}} = O\Big( \|\phi(f^\star)\|_{\calG}\sqrt{\frac{\log(1/\delta)}{n}}\Big)$, we have that 
\begin{equation*}
    \calR(\widehat g_n) - \calR(g^\star) \leq O\Big( \|\phi(f^\star)\|_{\calG}\sqrt{\frac{\log(1/\delta)}{n}}\Big).
\end{equation*}
According to Theorem~5.1 and \ref{th:appspmp} this is obtained with a number of steps for \cref{alg:gbcfw} of $T = O(n)$ and \cref{alg:spmp} in the order of $O(\sqrt{n})$, for a total computational complexity of $O(n\sqrt{n})$.
\qed{}

\end{document}